\definecolor{mydarkblue}{rgb}{0,0.08,0.45}
\newtheorem{theorem}{Theorem}[section]
\newtheorem{lemma}[theorem]{Lemma}
\theoremstyle{definition}
\newtheorem{definition}[theorem]{Definition}
\theoremstyle{remark}
\def\shownotes{1}  \ifnum\shownotes=1
\def\inlinenotes{0}
\newcommand{\cT}{\mathcal{T}}
\newcommand{\cS}{\mathcal{S}}
\newcommand{\cX}{\mathcal{X}}
\newcommand{\cY}{\mathcal{Y}}
\newcommand{\cD}{\mathcal{D}}
\newcommand{\cA}{\mathcal{A}}
\newcommand{\cP}{\mathcal{P}}
\newcommand{\cN}{\mathcal{N}}
\newcommand{\cZ}{\mathcal{Z}}
\newcommand{\R}{\mathbb{R}}
\let\cite\citep
\newcommand{\ALG}{\mathit{ALG}}
\newcommand{\dd}{\mathrm{d}}
\newcommand{\norm}[1]{\left\lVert#1\right\rVert}
\DeclareMathOperator{\nmono}{nmono}
\DeclareMathOperator{\avg}{\hat{\mathbb{E}}}
\DeclareMathOperator*{\E}{\mathbb{E}}
\newcommand{\profile}{learning profile\xspace}
\newcommand{\profiles}{learning profiles\xspace}
\newcommand{\cifarneg}{CIFAR-10-\textsc{Neg}\xspace}
\newcommand{\Acc}{\mathrm{Acc}}
\newcommand{\myparagraph}{\paragraph}
\title{Deconstructing Distributions: A Pointwise Framework of Learning}
\author{Gal Kaplun\thanks{Equal contribution.}\\
Harvard\\
\texttt{\sf galkaplun@g.harvard.edu}
\and
Nikhil Ghosh\footnotemark[1]\\
UC Berkeley\\
\texttt{\sf nikhil\_ghosh@berkeley.edu}
\and
Saurabh Garg\\
Carnegie Mellon University\\
\texttt{\sf sgarg2@andrew.cmu.edu}
\and
Boaz Barak\\
Harvard\\
\texttt{\sf b@boazbarak.org}
\and
Preetum Nakkiran\\
UC San Diego\\
\texttt{\sf preetum@nakkiran.org}
}
\date{}
\begin{document}

\maketitle

\begin{abstract}
In machine learning,
we traditionally evaluate 
the performance of 
a single model, averaged over a collection of test inputs.
In this work, we propose a new approach:
we measure the performance of a collection of models
when evaluated on a \emph{single input point}.
Specifically, we study a point's \emph{profile}:
the relationship between 
models' average performance on the
test distribution
and their pointwise performance on this individual point.
We find that profiles can yield new insights into the structure
of both models and data---in and out-of-distribution.
For example, we empirically show that real data distributions
consist of points with qualitatively different profiles.
On one hand, there are ``compatible'' points
with strong correlation between the pointwise and average performance.
On the other hand, there are points with weak and even \emph{negative}
correlation: cases where improving overall model accuracy actually
\emph{hurts} performance on these inputs.
We prove that these 
experimental observations
are inconsistent with the predictions of
several simplified models of learning proposed in prior work.
As an application,
we use profiles to construct a dataset we call \cifarneg:
a subset of CINIC-10 such that for standard models,
accuracy on \cifarneg is \emph{negatively correlated} with accuracy on CIFAR-10 test.
This illustrates, for the first time, an OOD dataset
that completely inverts ``accuracy-on-the-line''
(Miller, et al., 2021 \cite{miller2021accuracy}).

\end{abstract}

\section{Introduction}
A central question in machine learning
is: what are the machines learning?
ML practitioners produce models with surprisingly good performance on inputs outside of their training distribution--- exhibiting new and unexpected kinds of learning such as mathematical problem solving,
code generation, and unanticipated forms of robustness\footnote{For example, \citet{bert,GPT3,clip,CodingChallenge,LanguageUnderstanding,MathematicalProblems}.}.
However, current formal performance measures are limited,
and do not allow us to reason about or even fully describe these interesting settings.\looseness=-1

When measuring human learning using an exam, we do not merely assess a single student by looking at
their final grade on an exam.
Instead, we also look at performance on individual questions, which can assess different skills.
And we consider the student's improvement over time, to see a richer picture of their learning progress.
In contrast, when measuring the performance of a learning algorithm, we typically collapse measurement of its performance to just a single number.
That is, existing tools from learning theory and statistics mainly consider a single model (or a single distribution over models), and measure the \emph{average performance} on a single test distributions \citep{shalev2014understanding,tsybakov2009introduction,valiant1984theory}. 
Such a coarse measurement fails to capture rich aspects of learning.
For example, there are many different functions which achieve 75\% test accuracy on ImageNet,
but it is crucial to understand which one of these functions we actually obtain when training real models.
Some functions with 75\% overall accuracy may fail catastrophically on certain subgroups of inputs \citep{buolamwini2018gender,koenecke2020racial,hooker2019compressed};
yet other functions may fail catastrophically on ``out-of-distribution'' inputs.
The research program of understanding models as functions, and not just via single scalars,
has been developed recently (e.g. in \cite{nakkiran2022distributional}), and we push 
this program further in our work. \looseness=-1

Figure~\ref{fig:intro} illustrates our approach. Instead of averaging performance over a distribution of inputs,
we take a ``distribution free'' approach, and consider \emph{pointwise} performance on one input at a time.
For each input point $z$, we consider the performance of a collection of models on $z$ as a function of increasing resources (e.g., training time, training set size, model size, etc.). While more-resourced models have higher global accuracy, the \emph{accuracy profile} for a single point $z$---i.e., the row corresponding to $z$ in the points vs. models matrix---is not always monotonically increasing. 
That is, models with higher overall test accuracy can perform worse on certain test points.
The pointwise accuracy also sometimes increases faster (for easier points) or slower (for harder ones) than the global accuracy. We also consider the full \emph{softmax profile} of a point $z$, represented by a stackplot on the bottom of the figure depicting the softmax probabilities induced on $z$ by this family of models. Using the softmax profile we can identify different types of points, including those that have non-monotone accuracy due to label ambiguity (as in the figure), and points with softmax entropy non-monotonicity, for which model certainty \emph{decreases} with increased resources.
And since our framework is ``distribution free,'' it applies equally well to describe learning
on both in-distribution and ``out-of-distribution'' inputs. \looseness=-1

\begin{SCfigure}
        \includegraphics[width=0.45\textwidth]{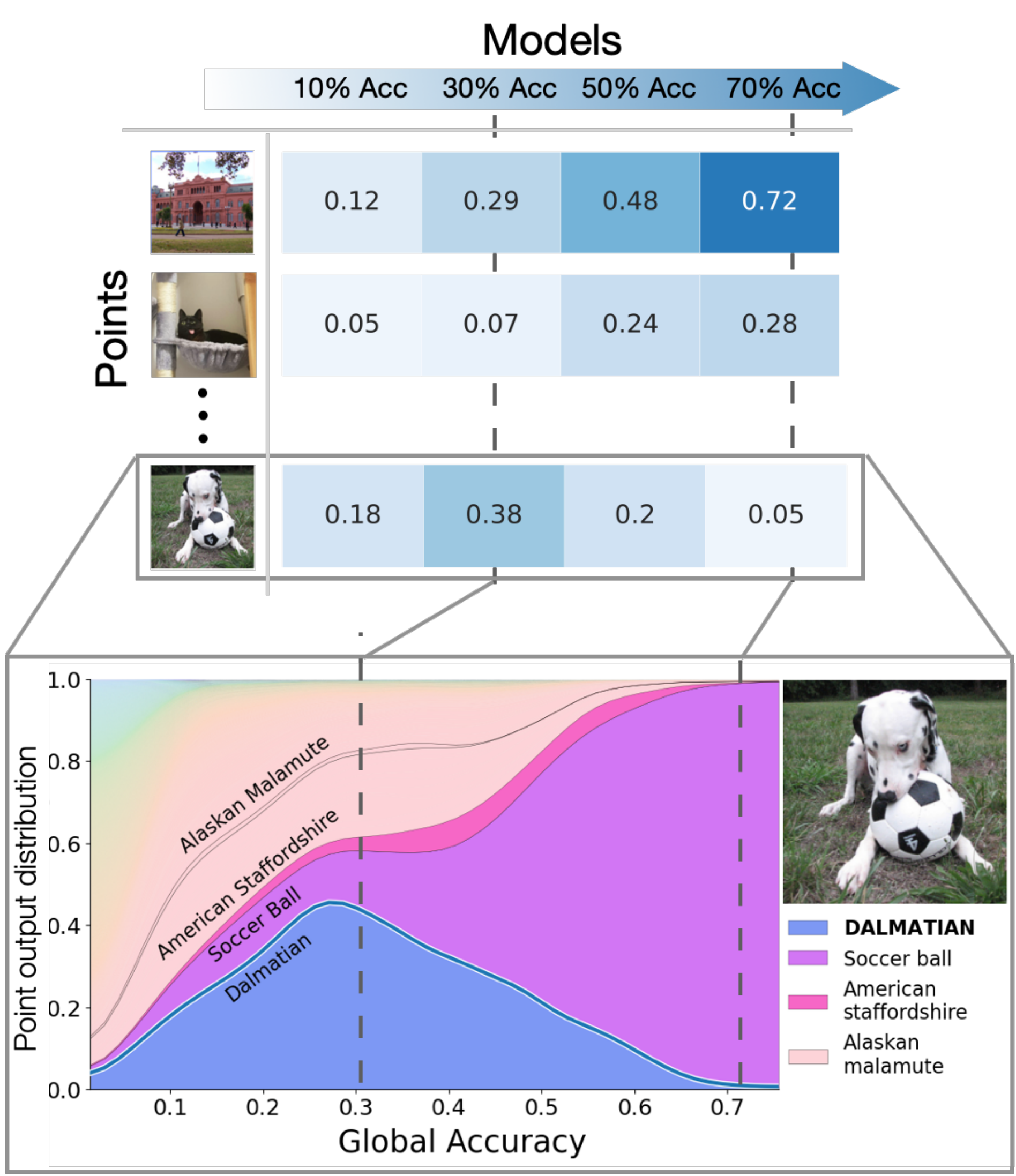}
        \caption{\textbf{Learning Profiles.} 
        We consider the ``input points vs. model'' matrix of accuracies (i.e., probabilities of correct classification), with rows corresponding to inputs and columns corresponding to models from some parameterized family, sorted according to their \emph{global accuracy}. A 70\%-accurate model is on average more successful than a 30\%-accurate one, but there are points on which it could do \emph{worse}. In this case, the softmax probabilities of the bottom image show that only higher accuracy models recognize the existence of the soccer ball, throwing them off the ``Dalmatian'' label. Label noise or ambiguity is the reason behind some but not all such ``accuracy non-monotonicities''.
        \vspace{-0.2cm}
        }
        \label{fig:intro}
\end{SCfigure}

\subsection{Our contributions} 
In this paper, we initiate a
systematic study of \emph{pointwise} performance in ML
(see Figure~\ref{fig:intro}). 
We show that such pointwise analysis can be useful both as a
conceptual way to reason about learning,
and as a practical tool for
revealing structure in learning models and datasets.

\myparagraph{Framework: Definition of learning profiles (Section~\ref{sec:formal}).}
We introduce a mathematical object capturing
pointwise performance: the ``profile'' 
of a point $z$ with respect to a parameterized family of classifiers $\cT$ and a test distribution $\cD$ (see Section~\ref{sec:formal}).
Roughly speaking, a profile is the formalism of Figure~\ref{fig:intro}---i.e., mapping the global accuracy of classifiers to the performance on an individual point. \looseness=-1

\myparagraph{Taxonomy of points (Section~\ref{sec:taxonomy}).}
Profiles allow deconstructing popular datasets such as CIFAR-10, CINIC-10, ImageNet, and ImageNet-R
into points that display qualitatively distinct behavior (see Figures~\ref{fig:taxonomy} and \ref{fig:imagenet_taxonomy}). For example, for \emph{compatible points} the pointwise accuracy closely tracks the global accuracy, whereas for \emph{non-monotone} points, the pointwise accuracy can be \emph{negatively correlated} with the global accuracy.
We show that a significant fraction standard datasets display noticeable non-monotonicity, awhich is fairly insensitive to the choice of architecture. \looseness=-1

\myparagraph{Pretrained vs. End-to-End Methods (Section~\ref{sec:models}).} Our pointwise measures reveal stark differences between pre-trained and randomly initialized classifiers, even when they share not just identical architectures but also \emph{identical global accuracy}. In particular, we see that for pre-trained classifiers the number of points with non-monotone accuracy is much smaller and the fraction of points with non-monotone softmax entropy is vanishing small.

\myparagraph{Accuracy on the line and \cifarneg (Section~\ref{sec:acconline}).}
Using profiles, we provide a novel pointwise perspective on the accuracy-on-the-line phenomenon of~\cite{miller2021accuracy}.
As an application of our framework, we
construct a new ``out-of-distribution'' dataset
\cifarneg: a set of $1000$ labeled images from CINIC-10 on which performance of standard models trained on CIFAR-10 is \emph{negatively correlated} with CIFAR-10 accuracy.
In particular, a 20\% improvement in test accuracy on CIFAR-10
is accompanied by a nearly 20\% drop in test accuracy on \cifarneg.
This shows for the first time a dataset with low noise which
completely inverts ``accuracy-on-the-line.''

\myparagraph{Theory: Monotonicity in models of Learning (Section~\ref{sec:theory}).}
We consider three different theoretically tractable models of learning, including Bayesian inference and a few models previously proposed in the scaling law and distribution-shift literature \citep{recht2019imagenet,sharma2020neural,bahri2021explaining}. For these models, we derive predictions for the monotonicity of certain pointwise performance measures. In particular, all of these models imply pointwise monotonicity behaviors that (as we show empirically) are not always seen in practice.

We demonstrate that a pointwise analysis of learning is possible and promising.
However, we present only an initial study of this rich landscape.
In Section~\ref{sec:discussion}, we discuss 
how our conceptual framework can guide future work
in understanding in- and out-of-distribution learning,
in theory and practice. \looseness=-1

\subsection{Related Works.} The line of work on Accuracy-on-the-Line (AoL) \citep{recht2019imagenet, miller2021accuracy}
studies the performance of models under distribution shift,
by examining the relation (if any) between in-distribution and out-of-distribution accuracy
of models. Similar to us, some works examine instance behavior in training: \cite{zhong-instance} propose studying instance-wise
performance in the NLP setting, and also take expectations over ensembles of models.
Our framework is considerably more general, however,
and we give new applications of this general approach.
\cite{toneva} look at ``forgetting events", i.e., when a training examples move from being classified correctly to incorrectly, resembling our notion of non-monotonicity.

\myparagraph{OOD Robustness} 
\cite{hendrycks-etal-2020-pretrained, clip} show that large pretrained models are more robust to distributions shift and \cite{desai-durrett-2020-calibration} show that large pretrained models are better calibrated on OOD inputs. 
There is a also long line of literature on OOD detection~\citep{hendrycks2016baseline,geifman2017selective, liang2017enhancing, lakshminarayanan2016simple, jiang2018trust, zhang2020hybrid}, 
uncertainty estimation~\citep{ovadia2019can}, 
and accuracy prediction~\citep{deng2021labels, guillory2021predicting,garg2022leveraging}
under distribution shift.
Our work can be seen as an extreme version of ``distribution shift'', using distributions focused on a single point. 

\myparagraph{Example difficulty}
Much work was made recently to understand example difficulty for deep learning (e.g., \cite{jiangdiff, agarwaldiff, lalordiff}).
Several works study deep learning~\citep{sgd, benhamDifficulty} through the lens of example difficulty  
to understand certain properties (e.g., generalization or uncertainty)  of deep models, while others try to modify the training distribution via either removing mislabeled examples  \cite{pleissdiff,northcutt2021pervasive}, or controlling for hardness  \cite{Shrivastava_2016_CVPR,hacohendiff2019}. 
The main difference with our work is that we focus on the shape of the \emph{curve} of example accuracy with respect to a parameterized family of models.
 
\myparagraph{Model Similarity}
Several works demonstrated that the best supervised models tend to make similar predictions. \cite{mania2019model} measure the prediction agreement between standard vision models on ImageNet and CIFAR-10 and concluding that agreement levels are much higher than they would be under the assumption of independent mistakes. \cite{gontijo2021no} study the effect of different training methodologies on model similarity.  \cite{nixon2020bootstrapped} shows high similarity between models independently trained on different data subsets. 
In contrast, we focus not on comparing different types of models, but rather comparing models that span a large interval of accuracies. However, the above results, as well as our investigations, suggest most points' profiles remain similar under varying architectures or subsets of data.
\looseness-1

\section{Accuracy-on-the-Curve: Zooming In}\looseness=-1
\begin{figure*}[t]
    \centering
    \includegraphics[width=0.24\textwidth]{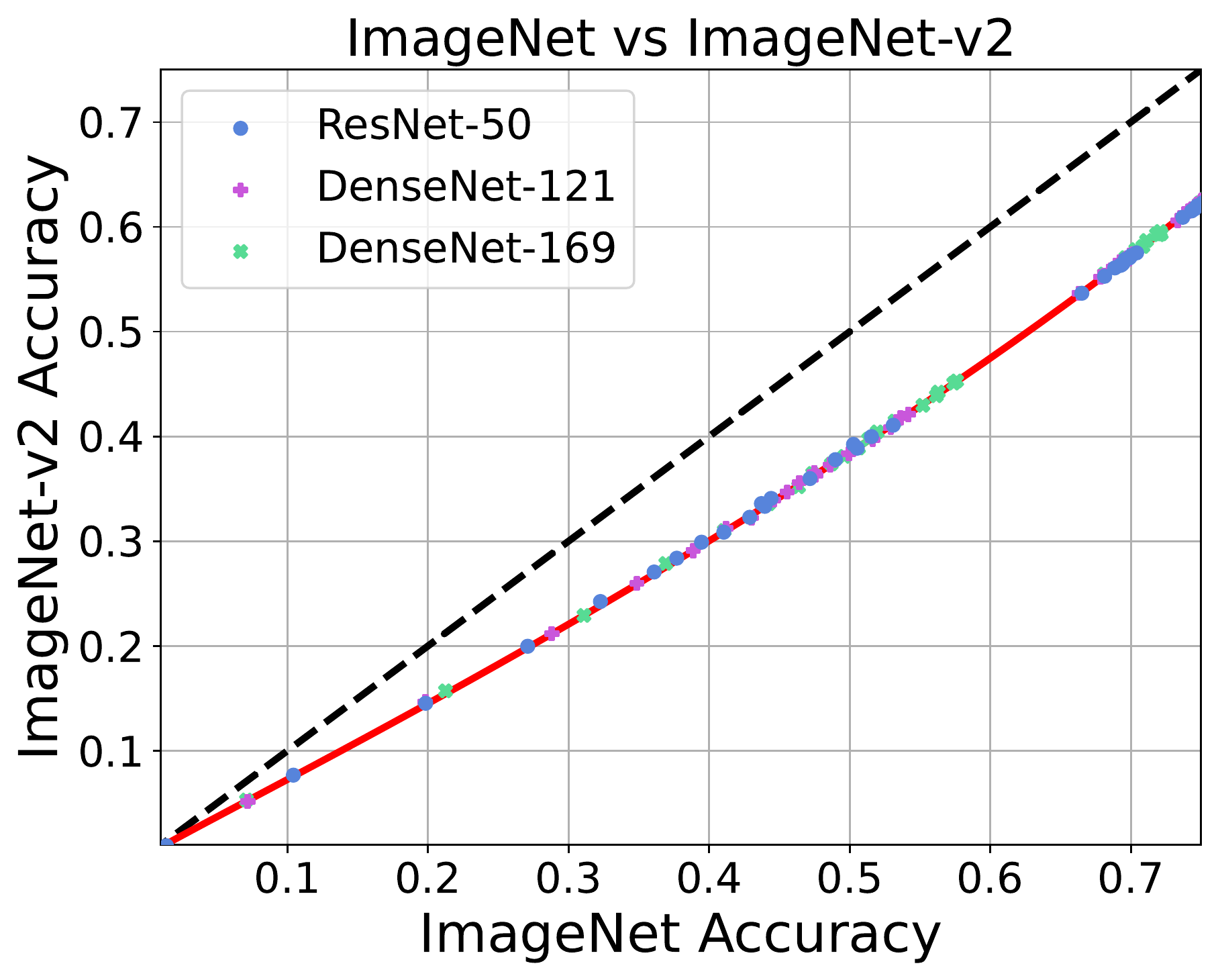} \hfil 
    \includegraphics[width=0.24\textwidth]{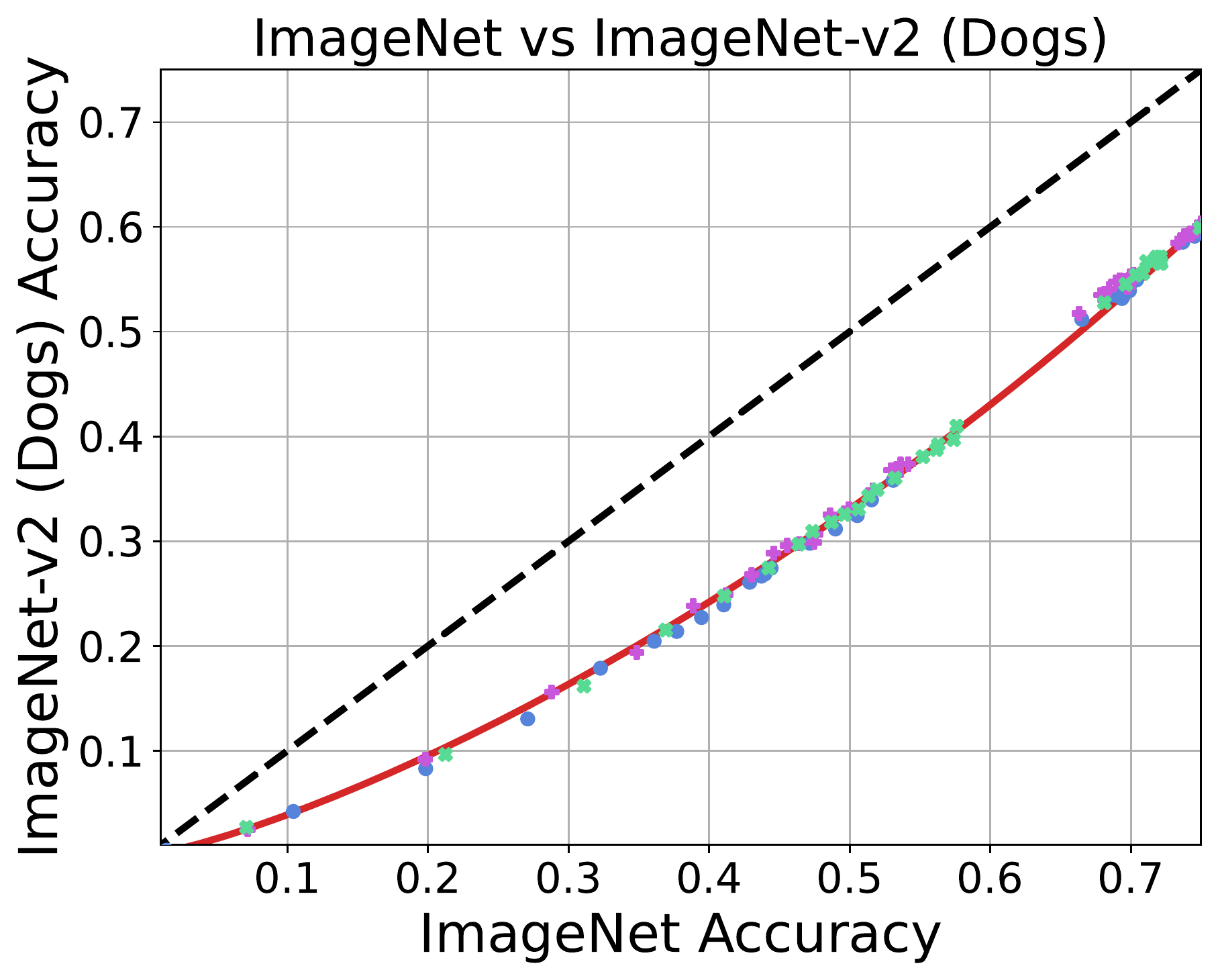} \hfil 
    \includegraphics[width=0.24\textwidth]{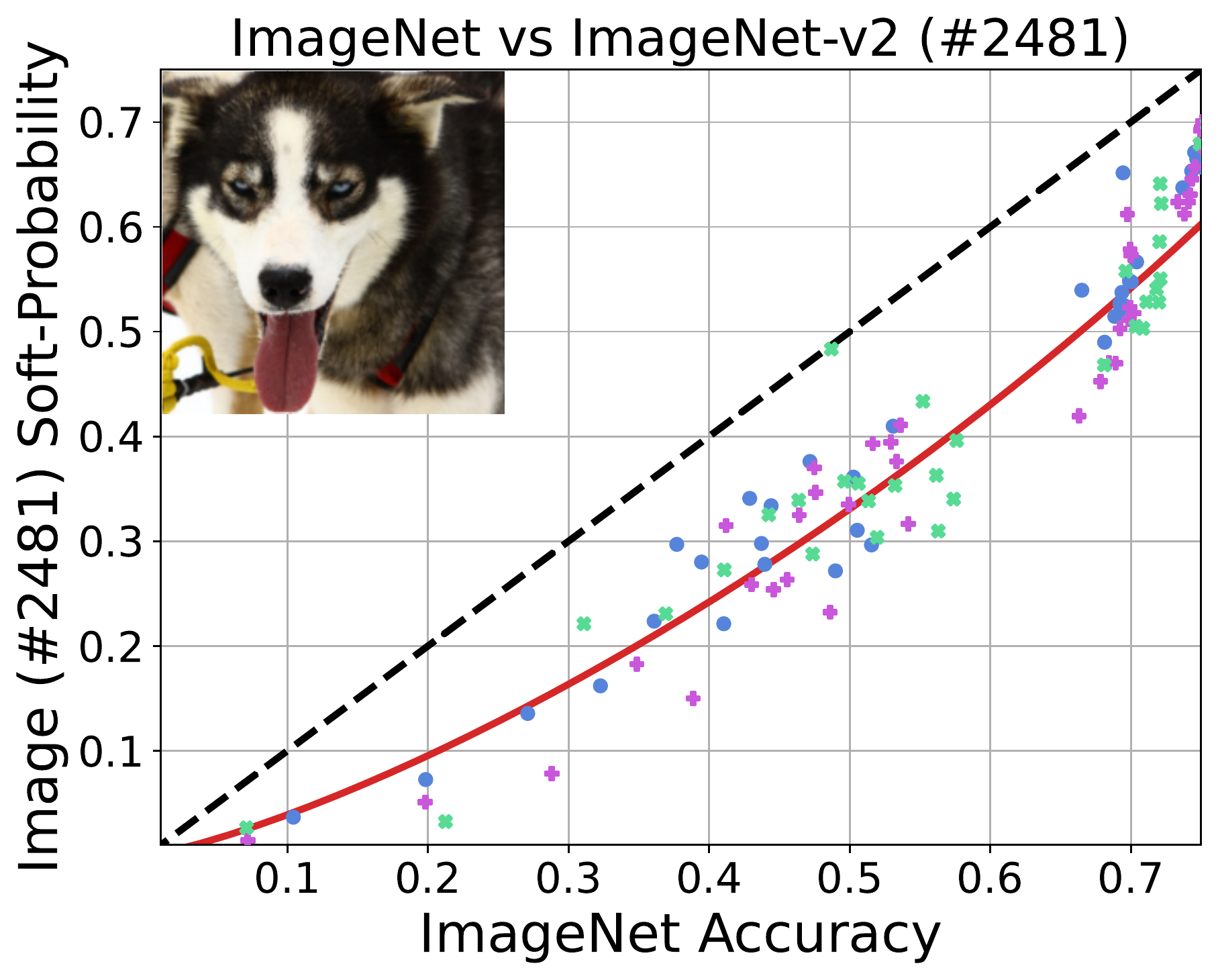}\hfil
    \includegraphics[width=0.235\textwidth,]{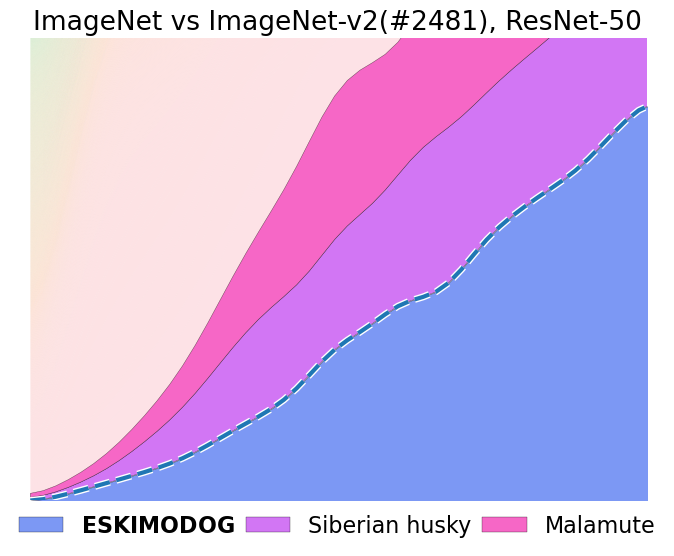}
    \caption{\textbf{Zooming In.} Average ImageNet accuracy for different models (ResNet-50, DenseNet-121, and DenseNet-169) on the x-axis with the y-axis corresponding (from left to right) to  1) ImageNet-v2 accuracy; 2) ImageNet-v2 dog-superclass accuracy; 3) the performance on a single image of a dog (i.e., the accuracy profile). The rightmost panel zooms in further and shows the \emph{softmax-profile} of a single image for ResNet-50. This image is a ``compatible point'' in the sense that as global ImageNet accuracy increases, the pointwise accuracy increases, and the entropy of the softmax distribution decreases.\looseness=-1\vspace{-0.4cm}
    }
    \label{fig:acc-on-line-intro}
\end{figure*}

We first explore the pointwise perspective through the distribution shift from ImageNet to ImageNet-v2~\cite{recht2019imagenet}. To start, in the left panel of Figure~\ref{fig:acc-on-line-intro} we replicate \cite{miller2021accuracy} and show that for a wide variety of models, accuracy on ImageNet-v2 is well-approximated by a simple monotone function of the ImageNet accuracy. In the middle panel, we see that such a relation holds even when we consider accuracy only on the ImageNet-v2 dog super-class. That is, we \emph{zoom-in} on the y-axis, and go from averaging over the entire ImageNet-v2 distribution to averaging over only dog classes. We see that accuracy on this sub-distribution also obeys a strong correlation with the global accuracy.  
This is interesting, since a priori
classifiers with equally-good global performance
could have very different performance on dogs.\looseness=-1

Zooming in even further, in the third panel of this figure we evaluate the same models on an just one \emph{individual} dog sample.
That is, we compute the \emph{accuracy profiles} (per Definition~\ref{def:profile}) of this particular point with respect to several parameterized learning algorithms. This example illustrates and puts into context the type of object we are interested in studying, namely general \textit{learning profiles} which measure pointwise statistics of learning algorithms as a function of global performance. 
\looseness=-1

\subsection{Formal Definitions} \label{sec:formal}

We now formally define our central objects which are the learning profiles of a point $z=(x,y)$ with respect to some parameterized family of learning algorithms and a test distribution.
These objects, visually represented in the bottom of Figure~\ref{fig:intro}, capture the behavior of models from the parameterized family on $z$  as a function of their global performance on the test distribution.
A \emph{classifier} (or model) is a function $f:\cX \rightarrow \Delta(\cY)$ that maps an input $x \in \cX$ into a probability distribution over the set of labels $\cY$. For example, for a DNN, $f(x)$ denotes the softmax probabilities on input $x$. We denote by $\hat{f}(x)$ the prediction of the classifier on $x$,  obtained by outputting the highest probability label. 
We consider a \emph{parameterized family} $\cT(n)$ of algorithms, where $n$ corresponds to some measure of resources: number of samples, model size, training time, etc., and $\cT(n)$ denotes the distribution of models obtained by running the (randomized) learning algorithm $\cT$ with $n$ amount of resources. 
For the purposes of this formalism, we consider the training set to be part of the algorithm, and make no assumptions on how it is chosen or sampled.
Generally, the expected performance of $\cT(n)$ w.r.t.\ a global test distribution $\cD$ will be a monotonically increasing function of $n$, and there are many works on ``scaling laws'' for quantifying this dependence \cite{scaling1,scaling2,scaling3,bahri2021explaining}.
For reasons of computational efficiency, we use \emph{training time} as our resource measure in our experimental results.
However, an increasing body of works suggests that different resource measures such as time, sample size, and model size, have qualitatively similar impacts \cite{ddd,bootstrap,ghosh2021stages,scaling3}.

The \emph{pointwise accuracy} $\Acc_{z,\cT}(n)$ of $\cT$ on a point $z=(x,y)$ is the probability that the output classifier $f=\cT(n)$ makes a correct prediction, i.e., $\hat{f}(x)=y$. The \emph{global accuracy} $\Acc_{\cD,\cT}(n)$  of $\cT(n)$ with respect to a distribution $\cD$ over $\cZ:=\cX \times \cY$ is the expected accuracy of points sampled from $\cD$, i.e., $\E_{z \sim \cD} \left[\Acc_{z,\cT}(n)\right]$. Throughout this paper, we will omit the test distribution $\cD$ from subscripts when it is clear from the context.  We will assume that our family is globally monotonic in the sense that $\Acc_{\cD,\cT}(n) \geq \Acc_{\cD,\cT}(n')$ for $n \geq n'$. This assumption is merely for convenience, and can be ensured e.g., by  early stopping.

The \emph{accuracy profile} of a parameterized algorithm $\cT$ and point $z$ is the curve that maps global accuracy $p \in [0,1]$ to the expected pointwise accuracy of $\cT(n)$ at $z$, that is $n$ is set so the global accuracy is $p$.
For example, the third panel of Figure~\ref{fig:acc-on-line-intro} represents an accuracy profile of a particular point. Formally:\looseness=-1

\begin{definition}[Accuracy profile]\label{def:acc-profile} 
Let $\cT,\cD$ be as above. The \emph{accuracy profile} of a point $z=(x,y)$ is the  (possibly partial) function  $\cA_{z,\cT,\cD}:[0,1] \rightarrow [0,1]$ that maps a global accuracy $p\in [0,1]$ to
$\Acc_{z,\cT}(n)$, where $n=n(p)$ is chosen such that $\Acc_{\cT,\cD}(n(p))=p$.
\end{definition}

 As we will see, to get more insight on model performance we sometimes need to go beyond the accuracy and observe the full softmax probabilities induced by the model at a particular point. This motivates the following definition of \emph{softmax profiles}, which are visually represented as stackplots in both the fourth panel of Figure~\ref{fig:acc-on-line-intro} and bottom of Figure~\ref{fig:intro}:

\begin{definition}[Softmax profile] \label{def:profile}  Let $\cT,\cD$ be as above.
The \emph{softmax learning profile} of a point $z=(x,y)$ is the function
$\cS_{z,\cT,\cD}:[0,1] \rightarrow \Delta(\cY)$ that maps a global accuracy
$p \in [0,1]$ to the averaged softmax distribution of predictions at $z$,
among classifiers with global accuracy $p$.
Specifically, with $n=n(p)$ as above, we define
$\cS_{z,\cT,\cD}(p) := \E [\cT(n)(x)]$.
\end{definition}

We use the general name \emph{learning profile} of a point $z$ to describe any map from $p \in [0,1]$ to some statistics of the distribution $\cT(n(p))(z)$.
Defining learning profiles as a function of the global accuracy $p$ (as opposed to $n$), allows us to compare different resource measures on the same axis.\looseness=-1

\begin{figure*}[t]
    \centering
    \begin{subfigure}[t]{\textwidth}
    \includegraphics[width=\linewidth]{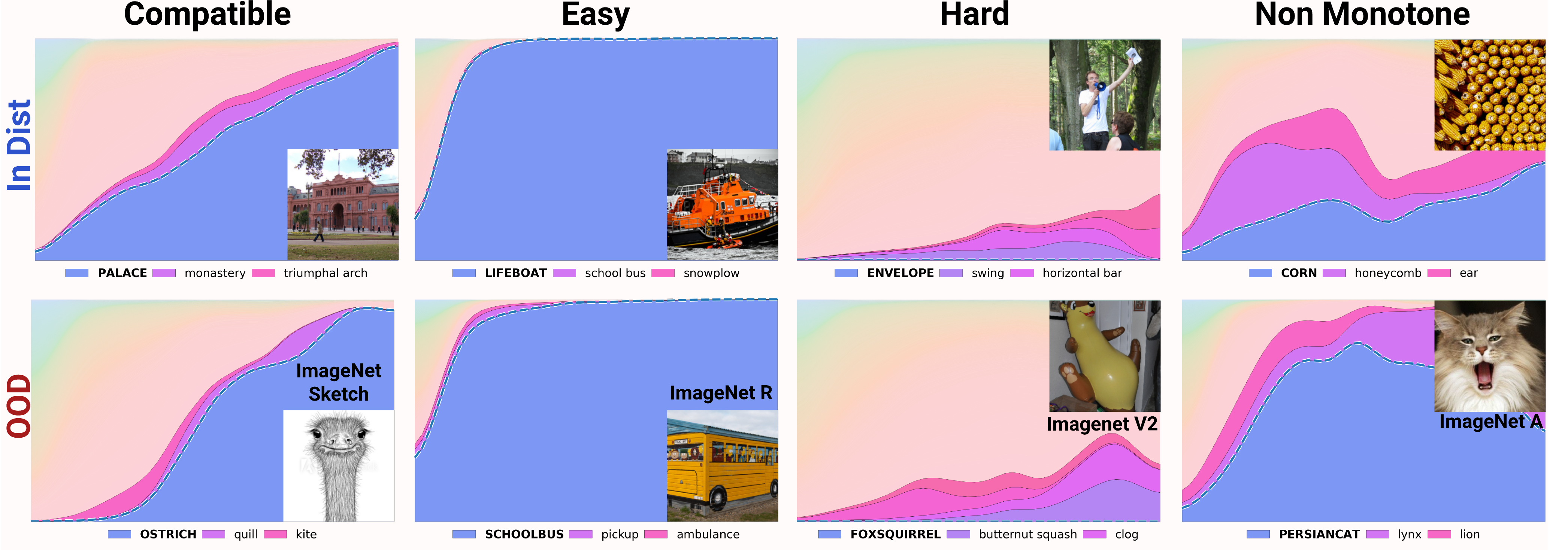}
    \end{subfigure}
    
    \caption{\textbf{Taxonomy of Samples.} Different qualitative profiles for ResNet-50 trained on ImageNet, roughly classified into 4 categories: `Compatible' samples where sample performance traces global performance; `Easy' samples which are classified correctly even by poor models; `Hard' samples that even the best models fail on and; Non-monotone ones where performance behaves unpredictably w.r.t. global performance. Top row is ImageNet's validation dataset and bottom row is OOD examples. Remarkably, similar profiles emerge for both in-dist and OOD examples.\vspace{-0.2cm}}
    \label{fig:taxonomy}
\end{figure*}

\begin{figure*}[t]
    \centering
    \includegraphics[width=\linewidth]{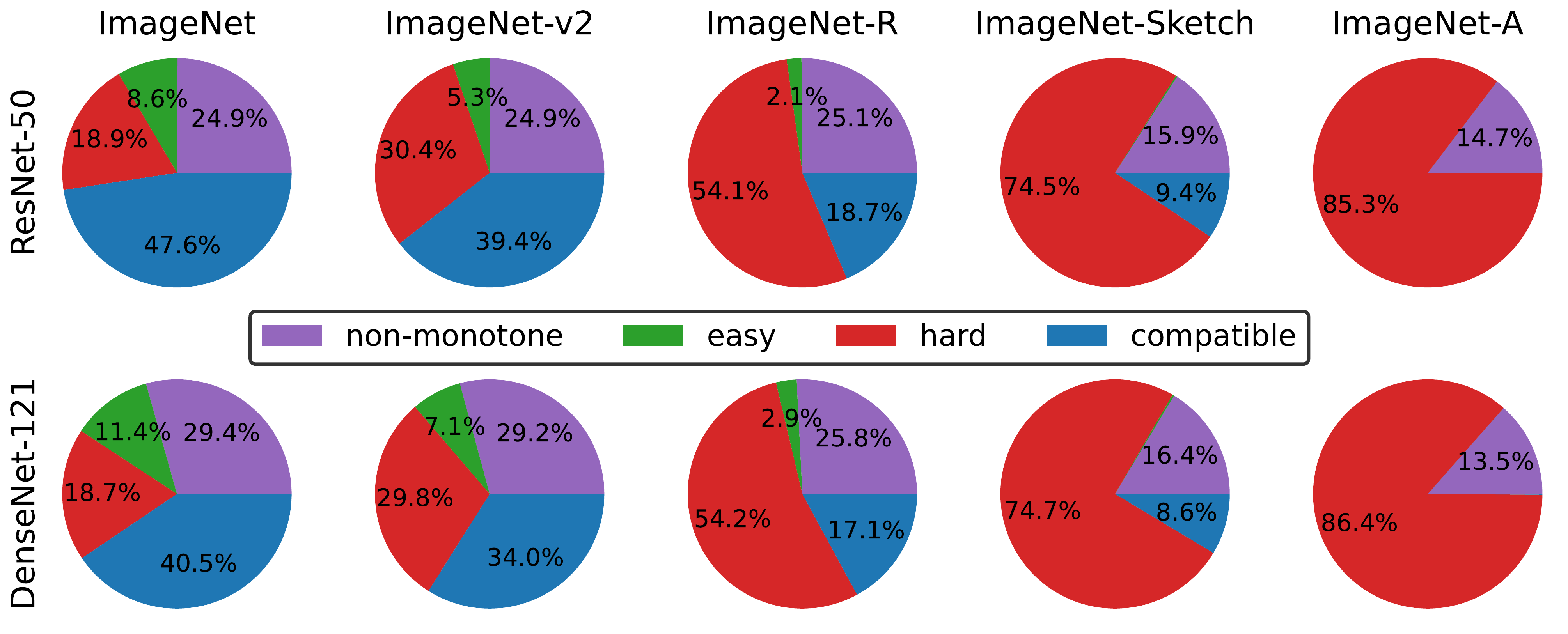}
    \caption{Each pie-chart gives a pointwise decomposition of a dataset according to profile types. Each point is classified based on the accuracy profile of an ImageNet trained classifier (see the end of Section~\ref{sec:data_structure}). The  decompositions are similar for both ResNet-50 and DenseNet-121 architectures.}  
    \label{fig:imagenet_taxonomy}
\end{figure*}

\section{Structure and Diversity of Data and Models}\label{sec:taxonomy}

We now conduct a systematic study of the structure of profiles, exploring what they can teach us about data samples and training algorithms.
Profiles are joint functions of an input point and a training procedure.
Below, we will first fix a training procedure and vary the choice of input points: this reveals structure in data sets, through the lens of a given model. Afterwards, we will fix an input point and vary the training procedure: this reveals structure in training procedures, through the lens of a test point.\looseness=-1

\subsection{Structure in Data}\label{sec:data_structure}

We first fix a training procedure and use the resulting profiles to study both in and out-of-distribution samples. From this analysis we broadly sketch the landscape of the various profile types. Note that the type of a point is dependent on the training procedure.
Figure~\ref{fig:taxonomy} shows several ``prototypical'' profiles
encountered in real data and the corresponding samples.
We highlight the following qualitative types:\looseness=-1

\begin{enumerate}[label=\textbf{\arabic*}.]
    \item \textbf{Easy points} for which even low global accuracy classifiers succeed with high probability.
    Note that there are out-of-distribution points which are ``easy''
    for ResNet-50, such as the shed painted as a school-bus in Figure~\ref{fig:taxonomy}. 
    Further, not all easy points are alike: some samples
    are ``harder-than-average'' for weak models, that become
    ``easier-than-average'' for strong models (e.g. Figure~\ref{fig:acc_profs_models}).\looseness=-1
    
    \item \textbf{Hard points} for which even high-accuracy classifiers fail.
    By looking at the softmax probabilities, we can disentangle the causes for the difficulty.
    Some points are simply ambiguous or mislabeled.
    For other points the softmax entropy remains high even at high global accuracies, and even the top-5 accuracy is low.
    
    \item \textbf{Compatible points} for which the accuracy profile is close to the identity ($y=x$) function, i.e., pointwise accuracy closely tracks the average performance.
    It is not a priori clear that compatible points should exist.
    For example, one might expect the accuracy profile to always be a step function,
    with the individual accuracy of a sample jumping from $0$ to $1$
    when  global accuracy crosses some threshold.
    That is, the model could have ``grokked'' the sample at some global accuracy level,
    but performed trivially before this level
    (in the terminology of \cite{power2022grokking}).
    
    \item \textbf{Non-monotone points} for which the pointwise accuracy is \emph{anti-correlated} with the global accuracy in some intervals.
    Again, we can use the softmax profile to better understand the potential underlying reasons for the non-monotonicity of such points. Some are mislabeled or have an ambiguous label then the classifier struggles with choosing the correct label. Other points even have non-monotone softmax entropy which implies that higher global accuracy classifiers are actually less certain about this point than lower global accuracy classifiers (aka the DNN Dunning-Kruger effect).
    As an example, while the image in the top right corner of Figure~\ref{fig:taxonomy} clearly contains corn cobs, at lower resolution it could be confused for a honeycomb, and indeed this is what lower-accuracy classifiers believe it is. There seems to be an interval of accuracy in which classifiers are strong enough to know it is not a honeycomb, but are not yet strong enough to be sure it is corn.\looseness=-1
\end{enumerate}

To try to get a better quantitative understanding of dataset structure through the lens of our taxonomy on learning profiles, we use the following procedure to classify a profile as either easy, hard, compatible, or non-monotone. To evaluate if a point is non-monotone, we compute the \textit{non-monotonicity score} of its profile, which measures the cumulative drop in pointwise performance as global performance increases (see Appendix \ref{sec:app-add-details}). If the point has a non-monotonicity score greater than 0.1, which indicates noticeable non-monotonicity, then we classify it as non-monotone. Otherwise, we classify the profile as easy, hard, or compatible based on the $L_2$ distance of the profile to a corresponding "template" profile. Easy points are represented by the profile $f(p) = 1$, hard points by $f(p) = 0$, and compatible points by $f(p) = p$. In Figure \ref{fig:imagenet_taxonomy}, we plot the decompositions of various datasets according to the described classification of their accuracy profiles. Each profile is computed from the predictions of an ImageNet trained architecture. We see that as expected ImageNet and ImageNet-v2 contain significantly many compatible points, although there are still many points that are not most accurately described as compatible. The datasets ImageNet-R, ImageNet-Sketch, and ImageNet-A are significantly less compatible with ImageNet and are progressively harder.

The examples above are meant to illustrate the potential of the \profiles
as means of better understanding data and learning---in particular,
considering entire profiles can often reveal more insight than just the 
final pointwise accuracy. Although the choice of profile types in our taxonomy may not be the optimal classification, we can nevertheless see that it allows us to gain insight into the structure of datasets.
We hope our initial investigation can inspire future work in this area.
\vspace{-0.2cm}

\subsection{Structure in Training Procedures}
\label{sec:models}
Just as we can understand different samples by fixing a training procedure, we can also
understand different training procedures by their behaviors on a fixed sample.\looseness=-1 
Taking this viewpoint, we investigate standard architectures (ResNet-18 and DenseNet-121)
on CIFAR-10,  considering both models trained from scratch and those 
pre-trained on ImageNet. (See full experimental protocol in Appendix~\ref{sec:app-exp-details}). In Figure \ref{fig:cifar10_taxonomy} we decompose the CIFAR-10 test set similarly to Figure \ref{fig:imagenet_taxonomy}, but now with the perspective of understanding model differences through the dataset. We can see that the models trained from scratch and the pre-trained models exhibit very different decompositions, but are very similar between architectures. In particular, we see that with pre-training the number of non-monotone examples decreases and points become overwhelmingly compatible. We now further probe the observed model similarity and monotonicity induced by pre-training. \looseness=-1

\myparagraph{Model Similarity}
We start by introducing a distance measure to compare two training procedures. Given two softmax profiles, we define the \emph{profile distance} to be the average over all test points and accuracies $p\in[0,1]$ of the $L_1$ distance between the softmax distributions at accuracy $p$ (see Appendix \ref{sec:app-add-details}).
The rightmost panel of Figure~\ref{fig:acc_profs_models} shows the pairwise profiles distances between several
architectures and their pretrained variants.
We find that profiles of pretrained models  significantly differ from non-pretrained ones.
However, controlling for the presence of pretraining, model architecture does not seem to significantly affect profiles. \looseness=-1

\myparagraph{Pretraining Induces Monotonicity}
We also  show a specific way in which pretraining affects profiles:
it drastically reduces the number of points which are \emph{non-monotonic}.
To quantify this effect,
we use the \emph{non-monotonicity score} of a profile
which is the negative variation of the profile and is large when negative-slope regions exist (see Appendix \ref{sec:app-add-details}).
The right panel of Figure~\ref{fig:acc_profs_models} compares the CDFs of the accuracy profile non-monotonicity scores for both from scratch training and fine-tuning (see also a specific example in the middle panel).
We observe that models trained from scratch are prone to significant amounts of non-monotonicity,
while pretraining eliminates non-monotonicity almost completely.
Further, this ``elimination of non-monotonicity" by pretrained models applies for both the accuracy and entropy profiles (see Appendix Figure~\ref{fig:cdf_entropy_softacc}), suggesting that pretrained models display an inductive bias similar to ``idealized" Bayesian inference, which always displays monotonicity (see Theorem \ref{thm:models}).\looseness=-1

\vspace{-0.25cm}

\begin{figure*}[t]
    \centering
    \includegraphics[width=\linewidth]{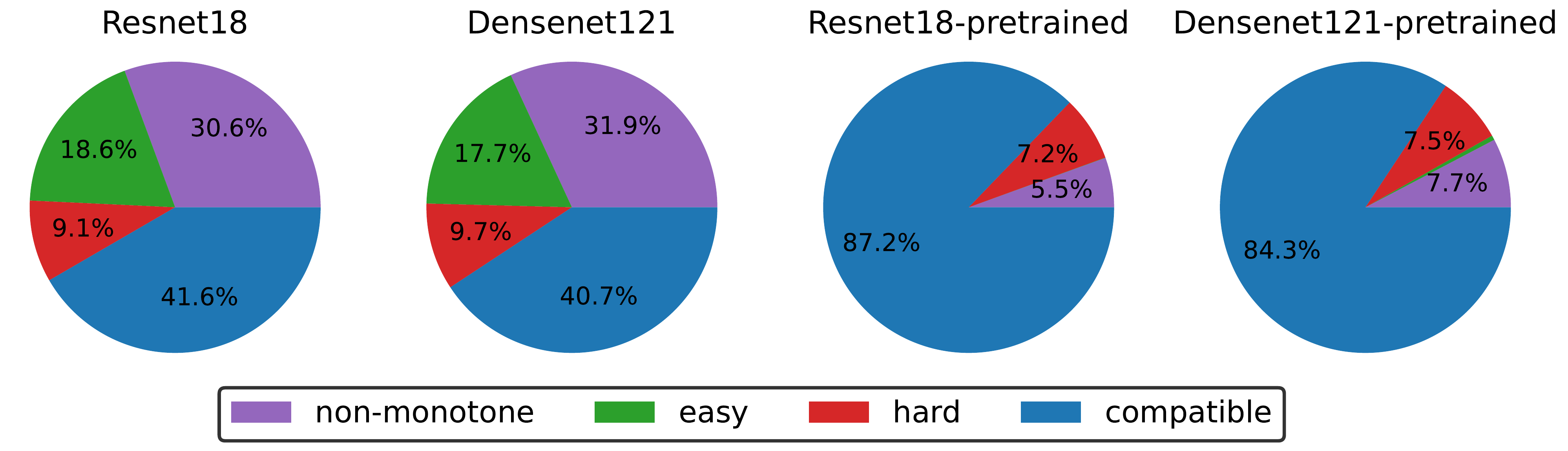}
    \caption{Taxonomy of the CIFAR-10 test set computed analogously to Figure \ref{fig:imagenet_taxonomy} using the models trained on CIFAR-10 from Section~\ref{sec:models}. Pretraining significantly increases compatibility.}
    \label{fig:cifar10_taxonomy}
\end{figure*}

\begin{figure*}[t]
    \centering
    \begin{subfigure}[t]{0.32\textwidth}
    \centering
    \includegraphics[width=\linewidth]{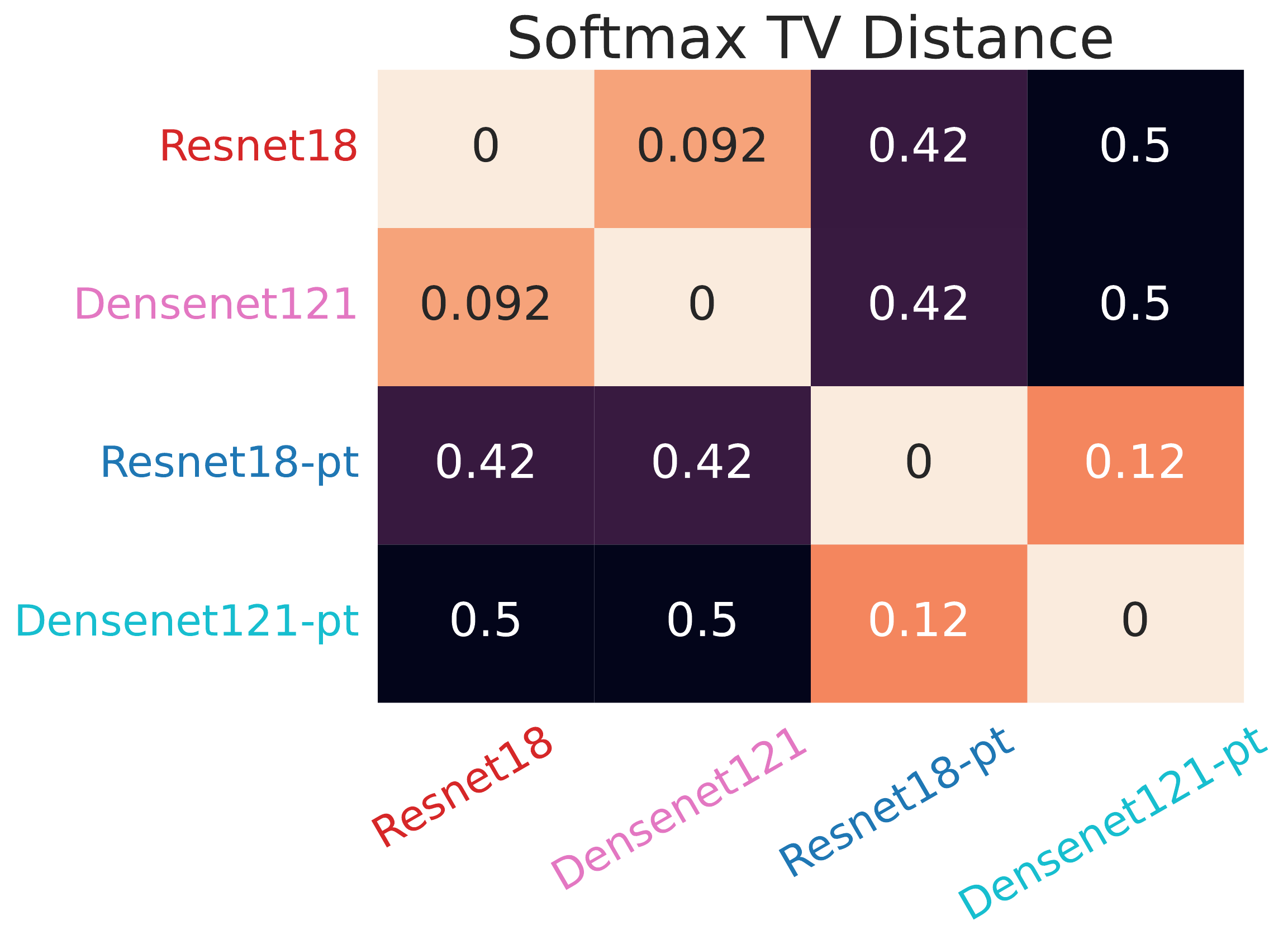}
    \end{subfigure}
    \begin{subfigure}[t]{0.32\textwidth}
    \centering
    \includegraphics[width=\linewidth]{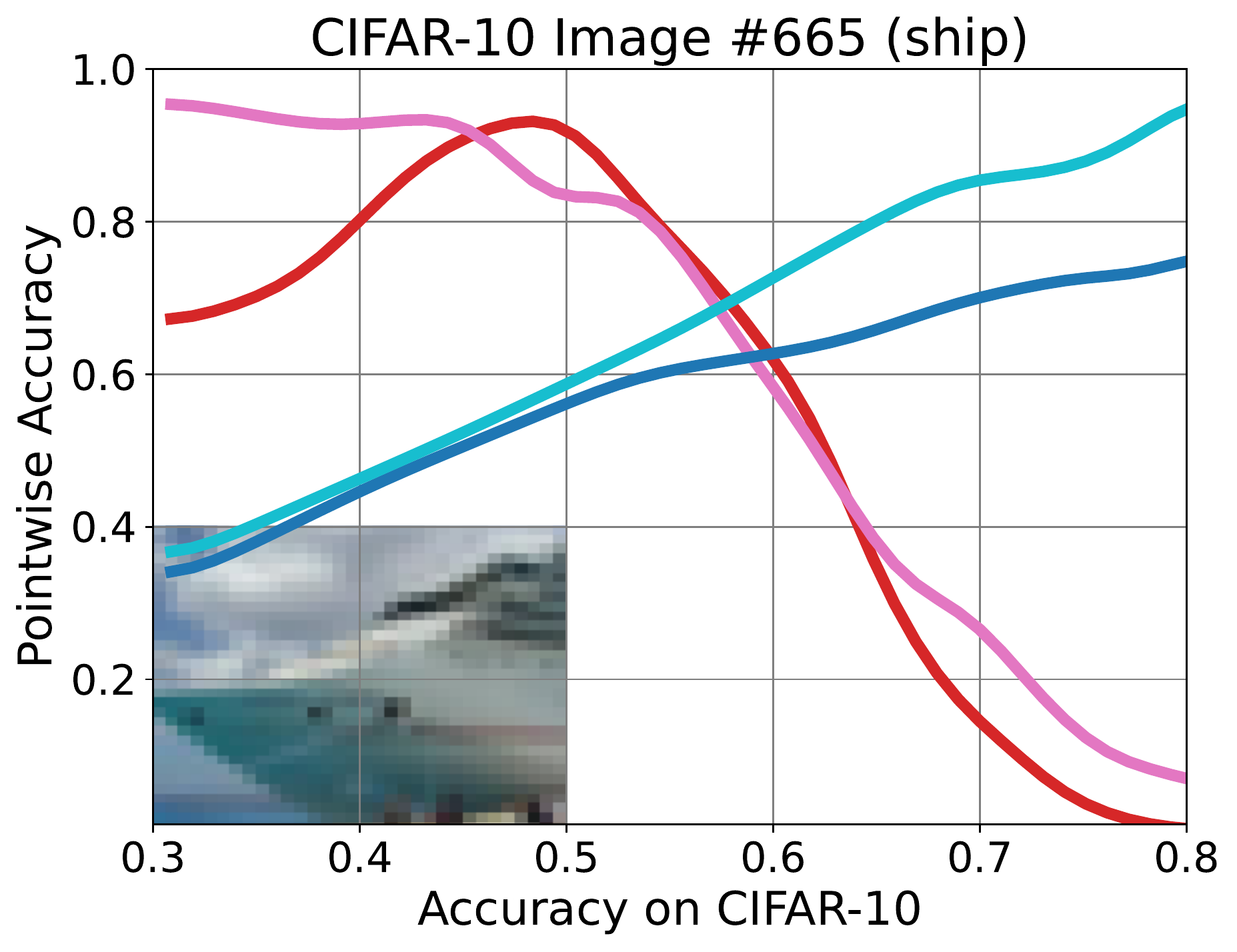}
    \end{subfigure}
    \begin{subfigure}[t]{0.32\textwidth}
    \centering
    \includegraphics[width=\linewidth]{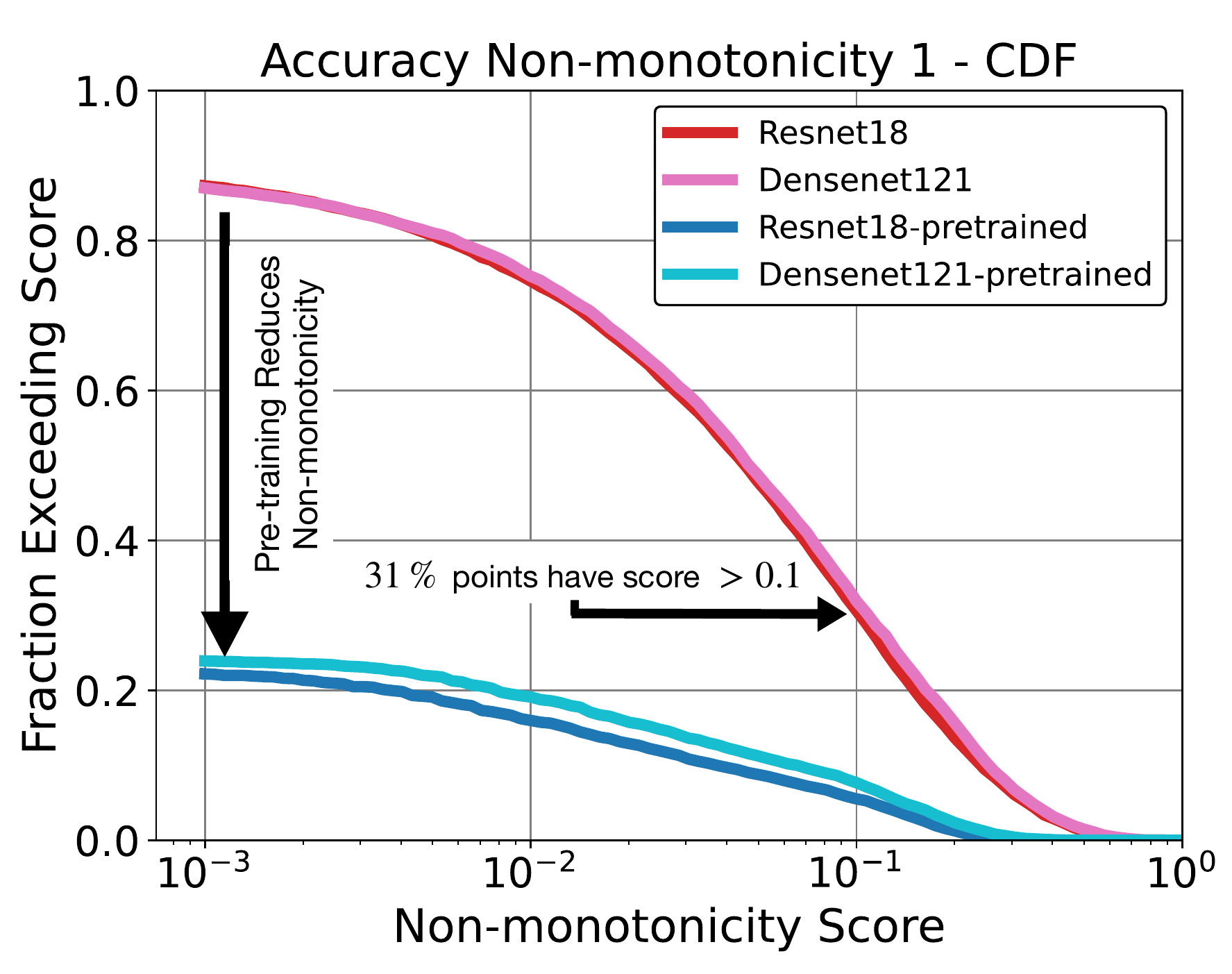}
    \end{subfigure}
    \caption{\textbf{Scratch vs. Pretrained.} A study of model similarity on CIFAR-10. \emph{Left panel:} Averaged over CIFAR-10 test dataset, profiles cluster together based on pretraining and not architecture. For example, ResNet-18 profiles are much closer to DenseNet-121 ones than to ResNet-18 pretrained ones. \emph{Middle panel:} Illustration of the effect of pretraining on a single image. \emph{Right panel:} Pretraining reduces accuracy non-monotonicity across many points in the CIFAR-10 test set.}  
    \vspace{-2.5mm}
    \label{fig:acc_profs_models}
\end{figure*}

\section{Pointwise Perspective on Distribution Shifts}
\label{sec:acconline}
\newcommand{\AoL}{``accuracy-on-the-line''\xspace}

We now show that the pointwise perspective can shed light on distribution shifts. An important open question in this area is to understand the relationship between in- and out-of-distribution (OOD)
performance, and how it depends on different factors such as pre-training.
To probe this relationship, it is a common practice to
evaluate methods on many OOD test sets,
and measure in-dist vs. OOD performance \cite{recht2019imagenet,clip}. In several cases, these metrics are linearly correlated (after probit scaling),
a phenomenon known as ``accuracy-on-the-line''
~\citep{recht2019imagenet, miller2021accuracy}.
However, this phenomenon does not hold universally,
and we do not yet have a good understanding of when a distribution pair is linearly-correlated.\looseness=-1

In the previous section, our pointwise analysis demonstrated the existence of \emph{non-monotone} instances where pointwise and global performance are anti-correlated. Interestingly, such examples occur often enough for us to construct a non-degenerate out-of-distribution test set which break ``the line'' in much stronger ways than were previously known (see Figure~\ref{fig:neg-cor}).\looseness=-1

Using our pointwise perspective, we construct \cifarneg\footnote{Dataset: \url{https://anonymous.4open.science/r/CIFAR-10-NEG-F697/}}
(see Figure~\ref{fig:neg-cor}), a CIFAR-10-like, class balanced and correctly labeled\footnote{Correct labeling is essential; incorrectly labeled examples will naturally be negatively correlated with global performance. As a heuristic, we use CLIP to filter  such samples.} dataset of 1000 samples, which is \emph{anti-correlated} with CIFAR-10 accuracy.
Specifically, improving test accuracy by $20\%$ (from $60\%$ to $80\%$) on 
CIFAR-10 hurts test accuracy by $\approx20\%$ on \cifarneg,
for many standard models.

To identify a dataset of points with negative correlation, we 
start with the CINIC-10 test set~\citep{cinic}.
To avoid ambiguous and mislabeled points in our new dataset, 
we perform CLIP-filtering:
we restrict the CINIC-10 test set to points correctly predicted by 
a CLIP model fine-tuned on the CIFAR-10 train set.
We train several ResNet-18 models on CIFAR-10 dataset and 
obtain per-sample monotonicity scores
(defined in Appendix~\ref{sec:app-add-details}) for the CINIC-10 test set.  
After sorting points with non-monotonicity score, 
we select a perfectly balanced dataset of $1000$ points 
consisting of the top $100$ most non-monotonic samples from each class. 
While by design, \cifarneg is anti-correlated 
with CIFAR-10 performance with respect to ResNet-18, 
we show that the same behavior also holds for DenseNet-121.
In contrast, CLIP fine-tuned models on 
\cifarneg are linearly  correlated with CIFAR-10 performance.\footnote{Fully fined-tuned CLIP achieves 100\% accuracy on \cifarneg  by design.}\looseness=-1

Previous works observed weak correlation under distribution shift, but we are the first to observe \emph{anti-correlation} between  in-distribution and out-of-distribution  performance for natural (non-adversarial) and correctly labeled images.
Figure~\ref{fig:neg-cor}(b) shows a sample from this dataset. 
We juxtapose CIFAR-10 test set examples with 
more examples from \cifarneg in Appendix~\ref{app:samples}.\looseness=-1

\begin{figure*}[t]
    \centering
    \begin{subfigure}[t]{\textwidth}
    \centering
    \subfloat[]{\includegraphics[width=0.24\textwidth]{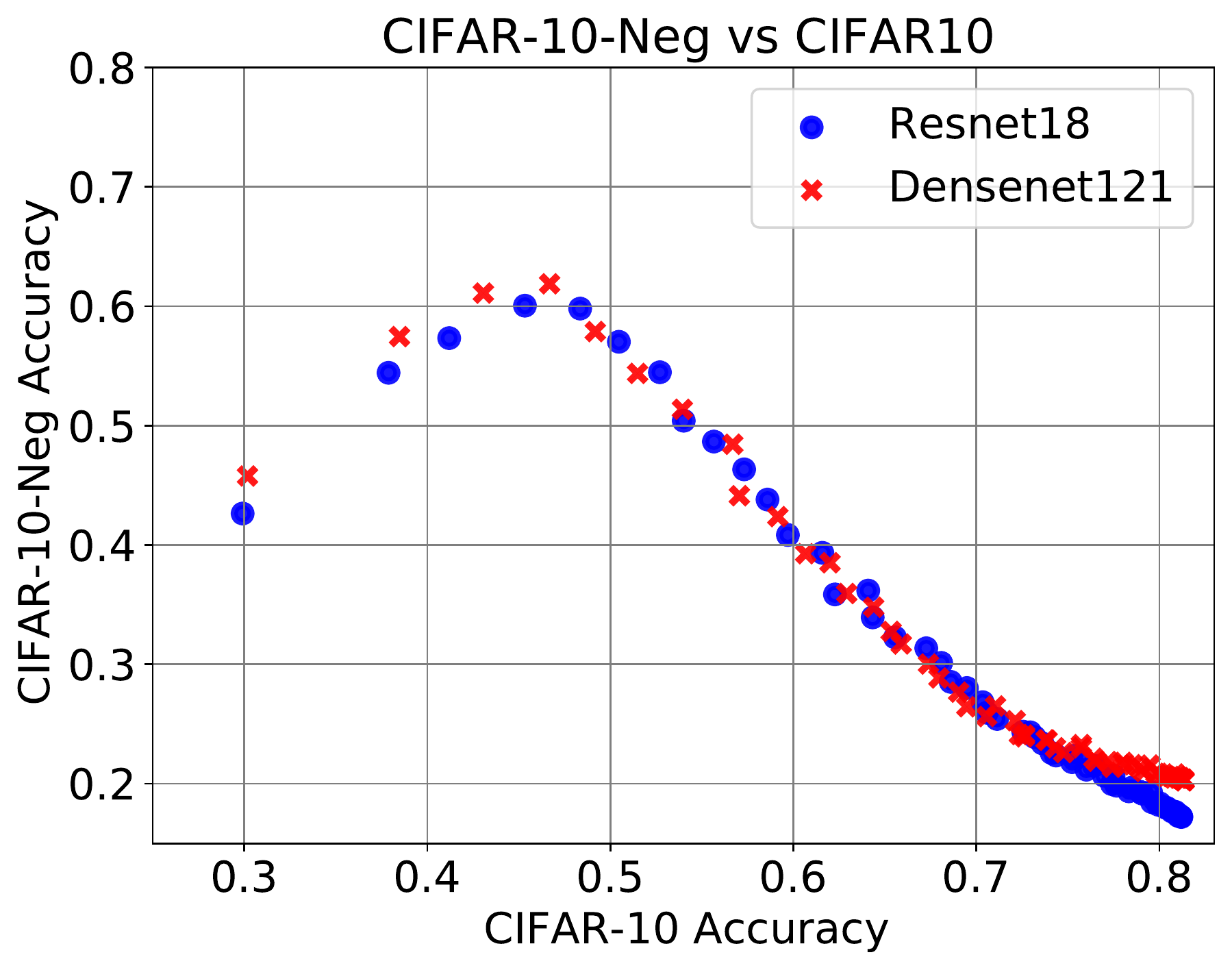}}\hfill
    \subfloat[]{\includegraphics[width=0.24\textwidth]{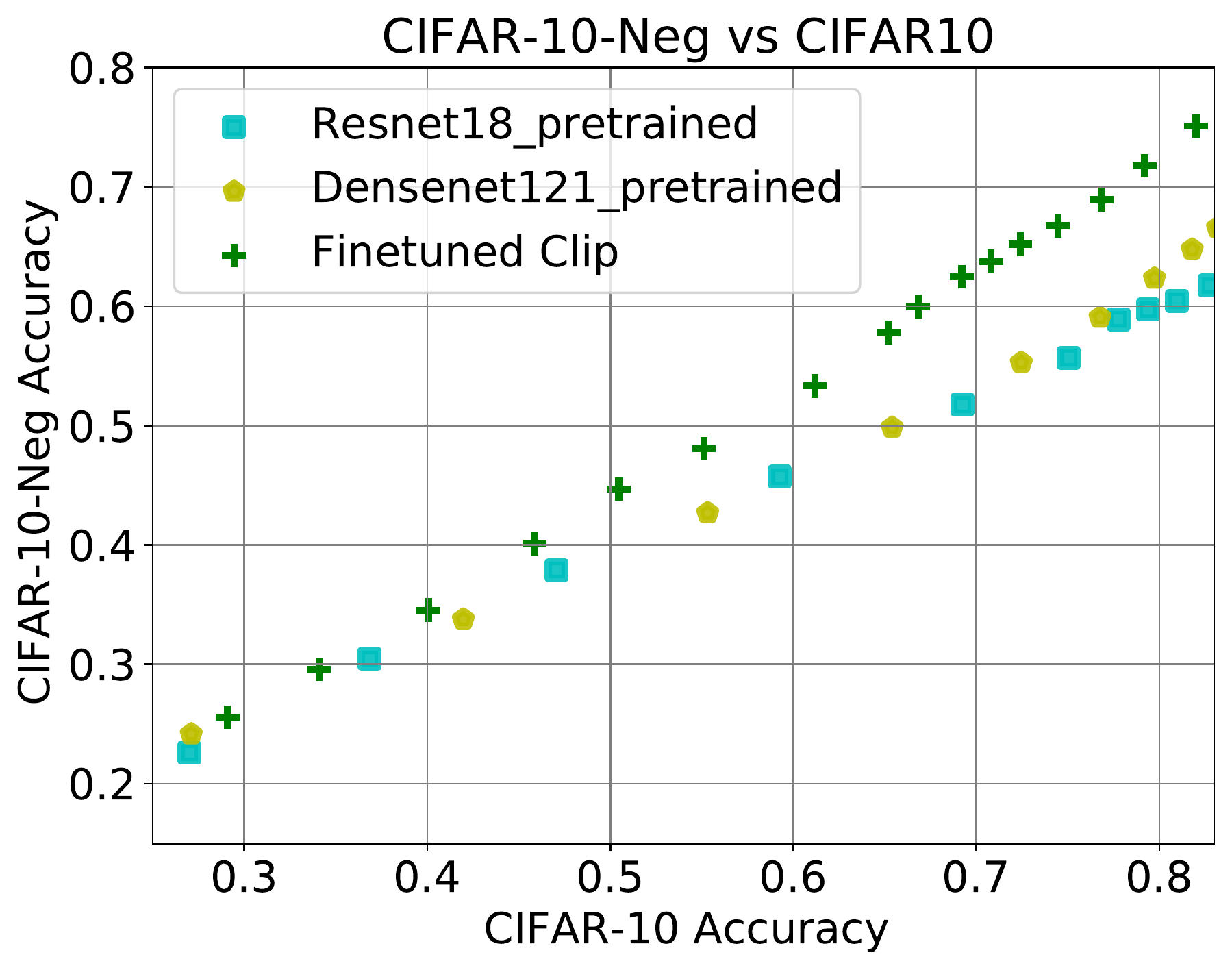}}\hfill
    \subfloat[]{\includegraphics[width=0.25\textwidth]{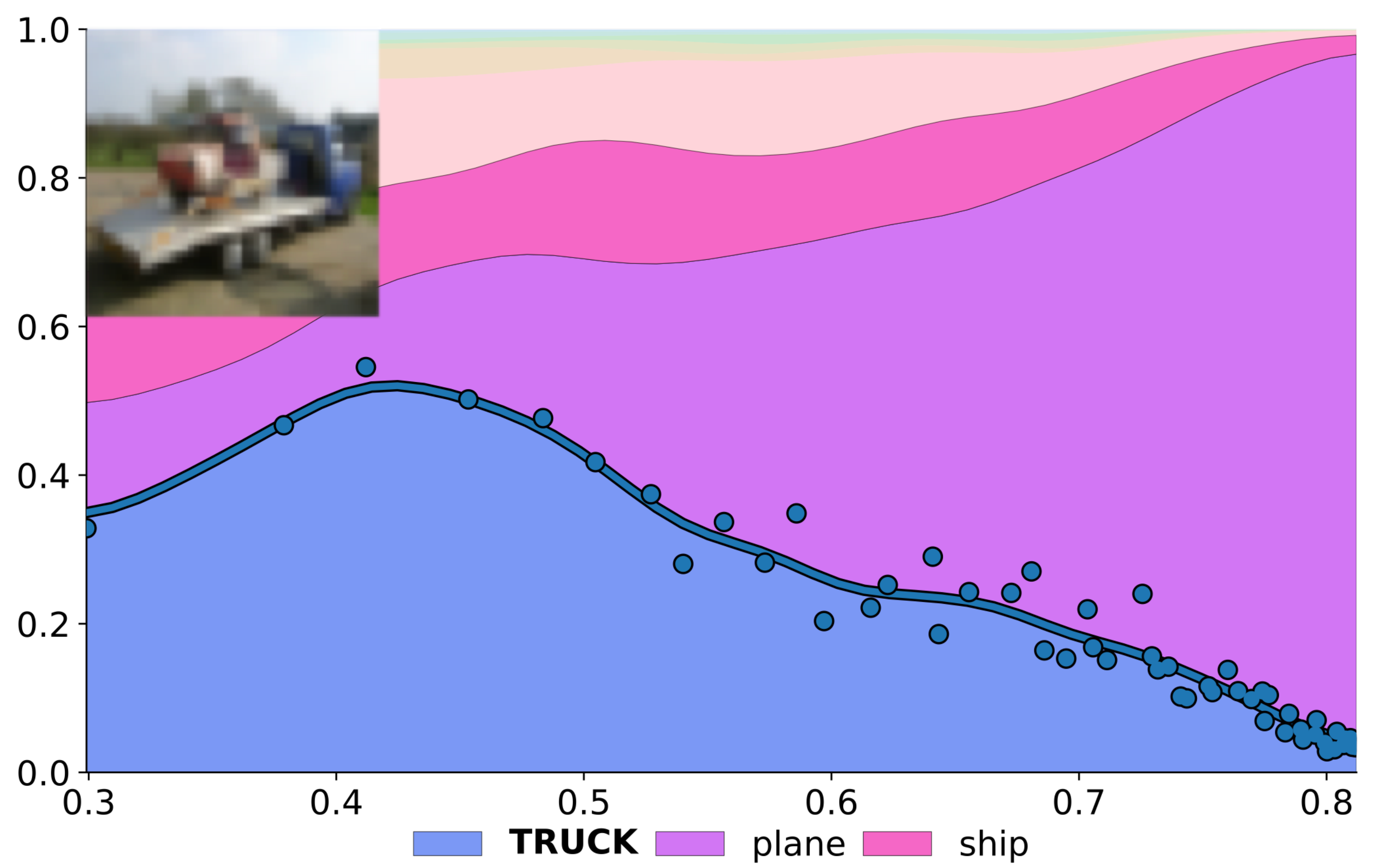}}  
    \hfill 
    \subfloat[]{\includegraphics[width=0.25\textwidth]{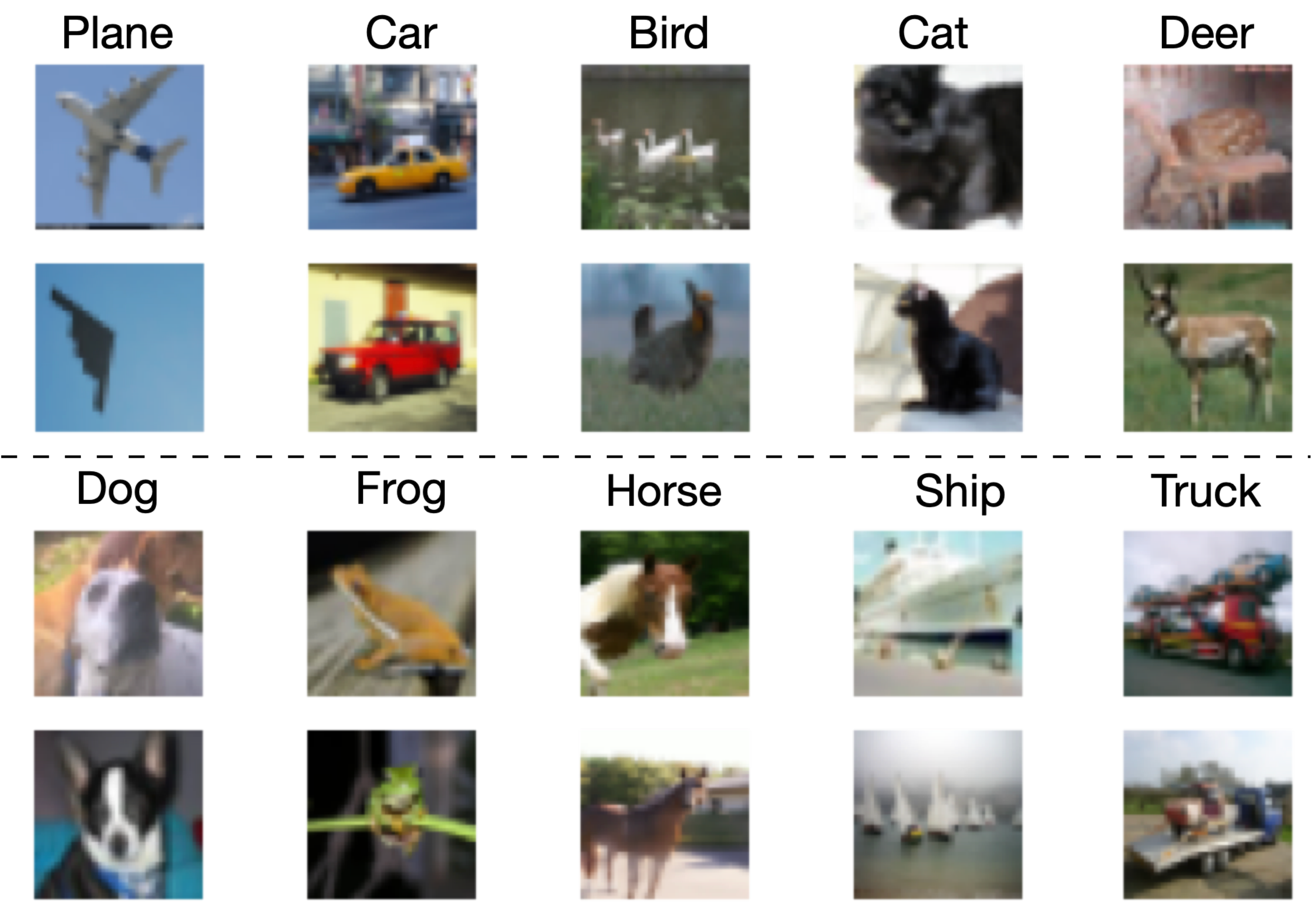}} 
    \end{subfigure}
    \caption{\textbf{\cifarneg .}
    (a) We construct \cifarneg, a clean dataset that has \emph{negative} correlation with CIFAR-10 for standard models (e.g., ResNet-18 and DenseNet-121): improving test accuracy on CIFAR-10
    \emph{hurts} test accuracy on \cifarneg. 
    (b)  Pretraining restores monotonicity: pretrained models (e.g., CLIP and ResNet-18 and DenseNet-121 pre-trained on Imagenet) have strong positive correlation with \cifarneg.
    (c) A \profile of one \cifarneg example,
    showing how profiles reveal more about 
    the evolution of predictions across learning.
    (d) Samples from \cifarneg.
    We contrast CIFAR-10 samples with \cifarneg samples in Appendix~\ref{app:samples}.\looseness=-1}
    \label{fig:neg-cor}
  
\end{figure*}

\section{Monotonicity in Models of Learning}
\label{sec:theory}

We proceed to show that accuracy and softmax profiles of several models of learning obey natural \emph{monotonicity properties}.
This in contrast with our experimental results of Section~\ref{sec:taxonomy} that demonstrate the existence of points with \emph{non-monotone} accuracy and softmax entropy, in particular when training models from scratch (as opposed to fine-tuning). 
This mismatch between theory and practice can be interpreted in two (non-mutually exclusive) ways.
One is that we need better models to capture realistic learning methods.
The second is that non-monotonicity suggests sub-optimal behavior in practical methods, and as they improve we might expect profiles to become monotone.
In particular, the fact that the accuracy and softmax entropies of Bayesian inference with a correct prior are monotone (see below),  suggests that practical non-monotonicity might arise due to ``mismatched priors".
We start by defining the following three natural monotonicity properties with respect to a set of possible instances $\cZ$ and a set of algorithms $\ALG$:\looseness=-1

   \begin{enumerate}[label=\textbf{\arabic*}.]
      \item \textbf{Accuracy monotonicity:} We say that a parameterized learning algorithm $\cT$ satisfies \emph{accuracy monotonocity} if  $\forall z\in \cZ$:
$n \geq n' \implies \Acc_{\cT,z}(n) \geq \Acc_{\cT,z}(n')$. That is, improving global accuracy cannot hurt on any specific instance. We also consider a weaker version which we call a \emph{pointwise scaling law}, whereby there are constants $C,\alpha>0$ such that  for all $z\in\cZ$, $\Acc_{\cT,z}(n) \geq 1 - C\cdot n^{-a}$ for all $n\geq n'$.
      \item \textbf{Universality of instance difficulty:} We say that $\ALG$ satisfies \emph{universality of sample difficulty} w.r.t.\ $\cZ$ if for all $z,z' \in \cZ$ and all pairs of algorithms $\cT,\cT'$:
$\Acc_{z}(\cT) \leq \Acc_{z'}(\cT) \implies \Acc_{z}(\cT') \leq \Acc_{z'}(\cT')$. That is, if $z$ is harder than $z'$ w.r.t.\ one algorithm in $\ALG$, then it is harder  than $z'$ w.r.t.\ all algorithms in $\ALG$, implying an inherent ``difficulty ordering'' of a point. 
    \item \textbf{Entropy monotonicity:} We say that a parameterized learning algorithm $\cT$ (that produces distributions over labels) satisfies \emph{entropy monotonicity} if for every $(x,y) \in \cZ$:
$n \geq n' \implies \E H(\cT(n)(x)) \leq \E H (\cT(n')(x))$. That is, for $f$ drawn from $\cT(n)$, the expected entropy of $f(x)$ is non-increasing as a function of the resource $n$. 
   \end{enumerate}

All three properties are incomparable with one another, in the sense that there exist learning methods that satisfy any subset of these. 
The main result of this section is that several natural models of learning satisfy the above monotonicity properties.
These include standard Bayesian inference (with correct priors) as well as certain  ``toy models'' that were proposed in the literature to explain some puzzling global features of deep learning. The latter are highly simplified models designed to match certain \emph{global} behaviors of DNNs such as scaling laws and accuracy on the line. While these models were designed to capture global phenomena, we show they also satisfy certain pointwise properties as well:\looseness=-1

\begin{theorem}[Properties of abstract learning models] \label{thm:models}%
1. The ``skills vs difficulty'' model of \cite{recht2019imagenet} satisfies the \emph{universality of instance difficulty} and \emph{accuracy monotonicity} properties. 
2. The ``manifold partition'' model of \cite{sharma2020neural,bahri2021explaining} satisfies the  \emph{pointwise scaling law} property.
3. Bayesian inference models, such as Bayesian Gaussian Processes, satisfy \emph{accuracy monotonicity} and \emph{entropy monotonicity}, assuming the Bayesian probabilistic model itself is correct.
\end{theorem}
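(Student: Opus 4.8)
The three claims are independent, so the plan is to treat each model in turn; the common thread I would emphasize is that every property reduces to a monotonicity or convexity fact about how a single scalar summary of the point evolves with resources.

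\textbf{Part 1 (skills vs.\ difficulty).} Here I would first unpack the model: each instance $z$ carries a scalar difficulty $\tau_z$, each algorithm $\cT(n)$ a scalar skill $s(n)$, and the pointwise accuracy is a fixed monotone link $\Acc_{z,\cT}(n) = g(s(n) - \tau_z)$ with $g$ nondecreasing and $s(\cdot)$ nondecreasing in $n$ (more resources, more skill). Accuracy monotonicity is then immediate: $n \geq n'$ gives $s(n) \geq s(n')$, hence $g(s(n) - \tau_z) \geq g(s(n') - \tau_z)$ for every $z$. For universality of instance difficulty, I would observe that the ordering $\Acc_{z}(\cT) \leq \Acc_{z'}(\cT)$ is equivalent, by monotonicity of $g$, to $\tau_z \geq \tau_{z'}$, a condition that does not reference the algorithm at all and therefore transfers verbatim to any other $\cT'$. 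No step here is an obstacle; the content is simply recognizing that skill and difficulty enter through a single shared monotone link.

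\textbf{Part 2 (manifold partition).} I would recall that this model places the data on a $d$-dimensional manifold, takes the target labeling to be (piecewise) $L$-Lipschitz, and predicts at $x$ using the label of the nearest training point (equivalently, the cell of an $n$-point partition containing $x$). The plan is: (i) a covering/packing argument shows that, uniformly over $x$, the nearest training point $z'$ satisfies $\|x - z'\| \lesssim n^{-1/d}$ once the $\sim n$ cells tile the manifold; (ii) Lipschitzness converts this into a \emph{uniform} pointwise bound $1 - \Acc_{z,\cT}(n) \lesssim L\,\|x - z'\| \lesssim C\, n^{-1/d}$, yielding the pointwise scaling law with $\alpha = 1/d$. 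The key point, and the reason the bound holds for every $z$ rather than only on average, is that it comes from smoothness of the target rather than from a margin, so there is no bad set of boundary points to exclude. The main obstacle is making the covering-radius bound hold in the worst case over $z$ (not just in expectation over a random test point) and translating the distance/regression error into the accuracy convention used by the model.

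\textbf{Part 3 (Bayesian inference).} This is the conceptual core and the part I expect to be hardest. The entire argument rests on one structural fact available only because the prior is correct: the posterior predictive $p_n(\cdot) := P(Y = \cdot \mid x, D_n)$ equals the true conditional law of the label given $x$ and the data, and therefore forms a martingale in $n$, i.e.\ $p_{n'} = \E[\,p_n \mid D_{n'}\,]$ for $n \geq n'$ by the tower property. Given this, both properties fall out of Jensen's inequality applied to the appropriate scalar functional of $p_n$. For accuracy, the Bayes classifier predicts $\arg\max_{y'} p_n(y')$, so conditional on $(x, D_n)$ the probability of correctness is $\max_{y'} p_n(y')$ and $\Acc(n) = \E[\max_{y'} p_n(y')]$; since $\max$ is convex, conditional Jensen with the martingale property gives $\max_{y'} p_{n'}(y') \leq \E[\max_{y'} p_n(y') \mid D_{n'}]$, and taking expectations yields $\Acc(n') \leq \Acc(n)$. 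For entropy, $H$ is concave, so the same martingale-plus-Jensen move runs in the opposite direction, giving $\E H(p_{n'}) \geq \E H(p_n)$; equivalently this is just ``conditioning reduces entropy'' for $H(Y \mid x, D_n)$. The main obstacle is rigorously establishing the martingale/calibration property under the well-specified assumption and instantiating it for Gaussian processes, where one must fix the observation likelihood and link and verify that the tower property and the convexity arguments survive the continuous-output-to-label reduction.
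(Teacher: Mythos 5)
Your proposal is correct and follows essentially the same route as the paper in all three parts: the same monotone-link reduction for the skills-vs-difficulty model (the paper likewise derives skill monotonicity from assumed global monotonicity and reads off universality from the algorithm-independent difficulty ordering), and the same uniform covering-plus-Lipschitz reasoning for the manifold model, which the paper handles by observing that the symmetry over points in \cite{sharma2020neural} makes their global scaling proof pointwise. For the Bayesian part, your martingale-plus-Jensen argument is the paper's Lemma~\ref{lem:bayesmono} in different notation: the one-step mixture decomposition $p_n = \sum_z \alpha_z p_z$ is precisely the martingale (tower) property, concavity of entropy and convexity of $\norm{\cdot}_\infty$ constitute the Jensen step, and you make the same key observation that under a correct prior the probability of a correct prediction equals $\E \norm{p_n}_\infty$.
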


We defer the full definitions of the models, as well as the proof of Theorem~\ref{thm:models} to Appendix~\ref{app:proofs}. We remark that Theorem~\ref{thm:models} is ``tight'', in the sense that there are instantiations of the models violating any of the monotonicity properties covered by the theorem.

\vspace{-0.05cm}
\section{Discussion and Conclusions} \label{sec:discussion}
\vspace{-0.05cm}
We conclude by discussing why we believe the pointwise perspective in general and \profiles in particular  are 
central to understanding both on- and off-distribution learning.

\myparagraph{Out-of-Distribution Inputs}
When deploying  ML systems,
inputs
are rarely drawn from exactly the same distribution as the train set.
Many existing frameworks try to model this
as a \emph{distribution shift}: they assume
test inputs are drawn from a distribution $\cD'$,
that is related to the train distribution $\cD$
in some way (e.g. covariate shift~\citep{heckman1977sample,shimodaira2000improving}, label shift~\citep{lipton2018detecting,garg2020unified}, or closeness
in some divergence~\citep{ben2010theory}).
However, in practice the distribution is often not well-specified  or indeed, a distribution at all.
We often care about performance on \emph{particular instances}:
e.g., on correctly recognizing a pedestrian in this specific image.
Furthermore, the inputs to our system may change in arbitrary and unmodeled ways
(with weather, country, wildfires, etc).
An instance-wise perspective is crucial in these settings.\looseness=-1

\myparagraph{Lessons for Theory}
We outline several concrete lessons for theory, and some speculative ones.
Concretely, our experiments have identified arguably unexpected behaviors
of real models and real datasets,
which any potential theory of deep learning must be consistent with.
For example, in Section~\ref{sec:taxonomy} we found a significant number
of real in-distribution samples on which 
DNNs are \emph{accuracy non-monotone}:
where networks with higher average accuracy (e.g., trained on more samples, or for more time)
actually perform much worse.
This behavior is impossible in many existing models of learning,
as we proved in Section~\ref{sec:theory}.
Thus, our experiments serve as guidelines for future theory work.\looseness=-1

\myparagraph{Lessons for Practice}
We expect that pointwise profiles are an interesting \emph{new measurement}
in many settings, which may reveal effects obscured by coarser metrics.
For example, studying the softmax profile of a point can reveal not only
a model's final accuracy on this point, but how its predictions \emph{evolved} as it learnt,
and potential causes of confusion along the way (e.g., texture bias or ambiguous objects).
Our finding of \emph{non-monotone} samples also suggests that current learning techniques
are suboptimal in certain ways, 
but gives hope they can be improved.
Specifically, non-monotone samples are those for which weaker models
perform well (and thus, we know learning is possible),
but stronger models for some reason regress.
It may be possible to fix this ``irrational'' behavior in practice,
since we know such samples are not fundamentally difficult.
Indeed, we find that some techniques such as pretraining 
also eliminate most non-monotonicity---understanding why is an important
question for future work.\looseness=-1

\subsubsection*{Acknowledgements}

We thank Johannes Otterbach
for useful feedback on an early draft.
We thank
Vaishaal Shankar,
Ludwig Schmidt,
Rohan Taori,
Dan Hendrycks,
Yasaman Bahri,
Hanie Sedghi,
and
Behnam Neyshabur
for discussions in the early stages
of this project.

PN is grateful for support of
the NSF and the
Simons Foundation for the Collaboration on the Theoretical
Foundations of Deep Learning 
(\url{https://deepfoundations.ai/})
through awards DMS-2031883 and \#814639.
This work supported by a Simons Investigator Fellowship, NSF grants DMS-2134157 and CCF-1565264, DARPA grant W911NF2010021, and DOE grant DE-SC0022199.

\subsubsection*{Author Contributions}
PN and GK proposed initial conjectures deconstructing the ``accuracy on the line'' phenomenon.
NG performed the initial experiments validating accuracy profiles as a meaningful tool.
GK and NG led the experiments, in collaboration with SG.
GK proposed constructing the \cifarneg dataset, and SG developed the final construction.
NG led the study of empirical monotonicity properties and produced the quantitative taxonomy results.
BB led the theoretical discussion of ``models of learning,'' and advised the project.
PN organized the team and managed the project.
All authors contributed to the conceptual ideas, experimental design, framing, writing, and plotting.

\newpage

\bibliography{refs}

\begin{thebibliography}{61}
\providecommand{\natexlab}[1]{#1}
\providecommand{\url}[1]{\texttt{#1}}
\expandafter\ifx\csname urlstyle\endcsname\relax
  \providecommand{\doi}[1]{doi: #1}\else
  \providecommand{\doi}{doi: \begingroup \urlstyle{rm}\Url}\fi

\bibitem[Agarwal \& Hooker(2020)Agarwal and Hooker]{agarwaldiff}
Agarwal, C. and Hooker, S.
\newblock Estimating example difficulty using variance of gradients.
\newblock \emph{CoRR}, abs/2008.11600, 2020.
\newblock URL \url{https://arxiv.org/abs/2008.11600}.

\bibitem[Bahri et~al.(2021)Bahri, Dyer, Kaplan, Lee, and
  Sharma]{bahri2021explaining}
Bahri, Y., Dyer, E., Kaplan, J., Lee, J., and Sharma, U.
\newblock Explaining neural scaling laws.
\newblock \emph{arXiv preprint arXiv:2102.06701}, 2021.

\bibitem[Baldock et~al.(2021)Baldock, Maennel, and Neyshabur]{benhamDifficulty}
Baldock, R. J.~N., Maennel, H., and Neyshabur, B.
\newblock Deep learning through the lens of example difficulty.
\newblock \emph{CoRR}, abs/2106.09647, 2021.
\newblock URL \url{https://arxiv.org/abs/2106.09647}.

\bibitem[Ben-David et~al.(2010)Ben-David, Blitzer, Crammer, Kulesza, Pereira,
  and Vaughan]{ben2010theory}
Ben-David, S., Blitzer, J., Crammer, K., Kulesza, A., Pereira, F., and Vaughan,
  J.~W.
\newblock A theory of learning from different domains.
\newblock \emph{Machine learning}, 79\penalty0 (1-2), 2010.

\bibitem[Brown et~al.(2020)Brown, Mann, Ryder, Subbiah, Kaplan, Dhariwal,
  Neelakantan, Shyam, Sastry, Askell, Agarwal, Herbert{-}Voss, Krueger,
  Henighan, Child, Ramesh, Ziegler, Wu, Winter, Hesse, Chen, Sigler, Litwin,
  Gray, Chess, Clark, Berner, McCandlish, Radford, Sutskever, and Amodei]{GPT3}
Brown, T.~B., Mann, B., Ryder, N., Subbiah, M., Kaplan, J., Dhariwal, P.,
  Neelakantan, A., Shyam, P., Sastry, G., Askell, A., Agarwal, S.,
  Herbert{-}Voss, A., Krueger, G., Henighan, T., Child, R., Ramesh, A.,
  Ziegler, D.~M., Wu, J., Winter, C., Hesse, C., Chen, M., Sigler, E., Litwin,
  M., Gray, S., Chess, B., Clark, J., Berner, C., McCandlish, S., Radford, A.,
  Sutskever, I., and Amodei, D.
\newblock Language models are few-shot learners.
\newblock \emph{CoRR}, abs/2005.14165, 2020.
\newblock URL \url{https://arxiv.org/abs/2005.14165}.

\bibitem[Buolamwini \& Gebru(2018)Buolamwini and Gebru]{buolamwini2018gender}
Buolamwini, J. and Gebru, T.
\newblock Gender shades: Intersectional accuracy disparities in commercial
  gender classification.
\newblock In \emph{Conference on fairness, accountability and transparency}.
  PMLR, 2018.

\bibitem[Chrabaszcz et~al.(2017)Chrabaszcz, Loshchilov, and
  Hutter]{chrabaszcz2017downsampled}
Chrabaszcz, P., Loshchilov, I., and Hutter, F.
\newblock A downsampled variant of imagenet as an alternative to the cifar
  datasets.
\newblock \emph{arXiv preprint arXiv:1707.08819}, 2017.

\bibitem[Darlow et~al.(2018)Darlow, Crowley, Antoniou, and Storkey]{cinic}
Darlow, L.~N., Crowley, E.~J., Antoniou, A., and Storkey, A.~J.
\newblock {CINIC-10} is not imagenet or {CIFAR-10}.
\newblock \emph{CoRR}, abs/1810.03505, 2018.
\newblock URL \url{http://arxiv.org/abs/1810.03505}.

\bibitem[Deng \& Zheng(2021)Deng and Zheng]{deng2021labels}
Deng, W. and Zheng, L.
\newblock Are labels always necessary for classifier accuracy evaluation?
\newblock In \emph{Proceedings of the IEEE/CVF Conference on Computer Vision
  and Pattern Recognition}, pp.\  15069--15078, 2021.

\bibitem[Desai \& Durrett(2020)Desai and
  Durrett]{desai-durrett-2020-calibration}
Desai, S. and Durrett, G.
\newblock Calibration of pre-trained transformers.
\newblock In \emph{Proceedings of the 2020 Conference on Empirical Methods in
  Natural Language Processing (EMNLP)}, pp.\  295--302, Online, November 2020.
  Association for Computational Linguistics.
\newblock \doi{10.18653/v1/2020.emnlp-main.21}.
\newblock URL \url{https://aclanthology.org/2020.emnlp-main.21}.

\bibitem[Devlin et~al.(2018)Devlin, Chang, Lee, and Toutanova]{bert}
Devlin, J., Chang, M., Lee, K., and Toutanova, K.
\newblock {BERT:} pre-training of deep bidirectional transformers for language
  understanding.
\newblock \emph{CoRR}, abs/1810.04805, 2018.
\newblock URL \url{http://arxiv.org/abs/1810.04805}.

\bibitem[Garg et~al.(2020)Garg, Wu, Balakrishnan, and Lipton]{garg2020unified}
Garg, S., Wu, Y., Balakrishnan, S., and Lipton, Z.
\newblock A unified view of label shift estimation.
\newblock In \emph{Advances in Neural Information Processing Systems
  (NeurIPS)}, 2020.

\bibitem[Garg et~al.(2022)Garg, Balakrishnan, Lipton, Neyshabur, and
  Sedghi]{garg2022leveraging}
Garg, S., Balakrishnan, S., Lipton, Z., Neyshabur, B., and Sedghi, H.
\newblock Leveraging unlabeled data to predict out-of-distribution performance.
\newblock In \emph{International Conference on Learning Representations
  (ICLR)}, 2022.

\bibitem[Geifman \& El-Yaniv(2017)Geifman and El-Yaniv]{geifman2017selective}
Geifman, Y. and El-Yaniv, R.
\newblock Selective classification for deep neural networks.
\newblock \emph{arXiv preprint arXiv:1705.08500}, 2017.

\bibitem[Ghosh et~al.(2021)Ghosh, Mei, and Yu]{ghosh2021stages}
Ghosh, N., Mei, S., and Yu, B.
\newblock The three stages of learning dynamics in high-dimensional kernel
  methods, 2021.

\bibitem[Gontijo-Lopes et~al.(2021)Gontijo-Lopes, Dauphin, and
  Cubuk]{gontijo2021no}
Gontijo-Lopes, R., Dauphin, Y., and Cubuk, E.~D.
\newblock No one representation to rule them all: Overlapping features of
  training methods.
\newblock \emph{arXiv preprint arXiv:2110.12899}, 2021.

\bibitem[Guillory et~al.(2021)Guillory, Shankar, Ebrahimi, Darrell, and
  Schmidt]{guillory2021predicting}
Guillory, D., Shankar, V., Ebrahimi, S., Darrell, T., and Schmidt, L.
\newblock Predicting with confidence on unseen distributions.
\newblock \emph{arXiv preprint arXiv:2107.03315}, 2021.

\bibitem[Hacohen \& Weinshall(2019)Hacohen and Weinshall]{hacohendiff2019}
Hacohen, G. and Weinshall, D.
\newblock On the power of curriculum learning in training deep networks.
\newblock \emph{CoRR}, abs/1904.03626, 2019.
\newblock URL \url{http://arxiv.org/abs/1904.03626}.

\bibitem[Heckman(1977)]{heckman1977sample}
Heckman, J.~J.
\newblock {Sample Selection Bias as a Specification Error (With an Application
  to the Estimation of Labor Supply Functions)}, 1977.

\bibitem[Hendrycks \& Gimpel(2016)Hendrycks and Gimpel]{hendrycks2016baseline}
Hendrycks, D. and Gimpel, K.
\newblock A baseline for detecting misclassified and out-of-distribution
  examples in neural networks.
\newblock \emph{arXiv preprint arXiv:1610.02136}, 2016.

\bibitem[Hendrycks et~al.(2020{\natexlab{a}})Hendrycks, Burns, Basart, Zou,
  Mazeika, Song, and Steinhardt]{LanguageUnderstanding}
Hendrycks, D., Burns, C., Basart, S., Zou, A., Mazeika, M., Song, D., and
  Steinhardt, J.
\newblock Measuring massive multitask language understanding.
\newblock \emph{CoRR}, abs/2009.03300, 2020{\natexlab{a}}.
\newblock URL \url{https://arxiv.org/abs/2009.03300}.

\bibitem[Hendrycks et~al.(2020{\natexlab{b}})Hendrycks, Liu, Wallace, Dziedzic,
  Krishnan, and Song]{hendrycks-etal-2020-pretrained}
Hendrycks, D., Liu, X., Wallace, E., Dziedzic, A., Krishnan, R., and Song, D.
\newblock Pretrained transformers improve out-of-distribution robustness.
\newblock In \emph{Proceedings of the 58th Annual Meeting of the Association
  for Computational Linguistics}, pp.\  2744--2751, Online, July
  2020{\natexlab{b}}. Association for Computational Linguistics.
\newblock \doi{10.18653/v1/2020.acl-main.244}.
\newblock URL \url{https://aclanthology.org/2020.acl-main.244}.

\bibitem[Hendrycks et~al.(2021{\natexlab{a}})Hendrycks, Basart, Kadavath,
  Mazeika, Arora, Guo, Burns, Puranik, He, Song, and
  Steinhardt]{CodingChallenge}
Hendrycks, D., Basart, S., Kadavath, S., Mazeika, M., Arora, A., Guo, E.,
  Burns, C., Puranik, S., He, H., Song, D., and Steinhardt, J.
\newblock Measuring coding challenge competence with {APPS}.
\newblock \emph{CoRR}, abs/2105.09938, 2021{\natexlab{a}}.
\newblock URL \url{https://arxiv.org/abs/2105.09938}.

\bibitem[Hendrycks et~al.(2021{\natexlab{b}})Hendrycks, Basart, Mu, Kadavath,
  Wang, Dorundo, Desai, Zhu, Parajuli, Guo, Song, Steinhardt, and
  Gilmer]{imagenet-r}
Hendrycks, D., Basart, S., Mu, N., Kadavath, S., Wang, F., Dorundo, E., Desai,
  R., Zhu, T., Parajuli, S., Guo, M., Song, D., Steinhardt, J., and Gilmer, J.
\newblock The many faces of robustness: A critical analysis of
  out-of-distribution generalization.
\newblock \emph{ICCV}, 2021{\natexlab{b}}.

\bibitem[Hendrycks et~al.(2021{\natexlab{c}})Hendrycks, Burns, Kadavath, Arora,
  Basart, Tang, Song, and Steinhardt]{MathematicalProblems}
Hendrycks, D., Burns, C., Kadavath, S., Arora, A., Basart, S., Tang, E., Song,
  D., and Steinhardt, J.
\newblock Measuring mathematical problem solving with the {MATH} dataset.
\newblock \emph{CoRR}, abs/2103.03874, 2021{\natexlab{c}}.
\newblock URL \url{https://arxiv.org/abs/2103.03874}.

\bibitem[Hendrycks et~al.(2021{\natexlab{d}})Hendrycks, Zhao, Basart,
  Steinhardt, and Song]{imagenetA}
Hendrycks, D., Zhao, K., Basart, S., Steinhardt, J., and Song, D.
\newblock Natural adversarial examples.
\newblock In \emph{Proceedings of the IEEE/CVF Conference on Computer Vision
  and Pattern Recognition}, pp.\  15262--15271, 2021{\natexlab{d}}.

\bibitem[Henighan et~al.(2020)Henighan, Kaplan, Katz, Chen, Hesse, Jackson,
  Jun, Brown, Dhariwal, Gray, Hallacy, Mann, Radford, Ramesh, Ryder, Ziegler,
  Schulman, Amodei, and McCandlish]{scaling2}
Henighan, T., Kaplan, J., Katz, M., Chen, M., Hesse, C., Jackson, J., Jun, H.,
  Brown, T.~B., Dhariwal, P., Gray, S., Hallacy, C., Mann, B., Radford, A.,
  Ramesh, A., Ryder, N., Ziegler, D.~M., Schulman, J., Amodei, D., and
  McCandlish, S.
\newblock Scaling laws for autoregressive generative modeling.
\newblock \emph{CoRR}, abs/2010.14701, 2020.
\newblock URL \url{https://arxiv.org/abs/2010.14701}.

\bibitem[Hooker et~al.(2019)Hooker, Courville, Clark, Dauphin, and
  Frome]{hooker2019compressed}
Hooker, S., Courville, A., Clark, G., Dauphin, Y., and Frome, A.
\newblock What do compressed deep neural networks forget?
\newblock \emph{arXiv preprint arXiv:1911.05248}, 2019.

\bibitem[Jiang et~al.(2018)Jiang, Kim, Guan, and Gupta]{jiang2018trust}
Jiang, H., Kim, B., Guan, M.~Y., and Gupta, M.~R.
\newblock To trust or not to trust a classifier.
\newblock In \emph{NeurIPS}, pp.\  5546--5557, 2018.

\bibitem[Jiang et~al.(2020)Jiang, Zhang, Talwar, and Mozer]{jiangdiff}
Jiang, Z., Zhang, C., Talwar, K., and Mozer, M.~C.
\newblock Exploring the memorization-generalization continuum in deep learning.
\newblock \emph{CoRR}, abs/2002.03206, 2020.
\newblock URL \url{https://arxiv.org/abs/2002.03206}.

\bibitem[Kaplan et~al.(2020)Kaplan, McCandlish, Henighan, Brown, Chess, Child,
  Gray, Radford, Wu, and Amodei]{scaling3}
Kaplan, J., McCandlish, S., Henighan, T., Brown, T.~B., Chess, B., Child, R.,
  Gray, S., Radford, A., Wu, J., and Amodei, D.
\newblock Scaling laws for neural language models.
\newblock \emph{CoRR}, abs/2001.08361, 2020.
\newblock URL \url{https://arxiv.org/abs/2001.08361}.

\bibitem[Koenecke et~al.(2020)Koenecke, Nam, Lake, Nudell, Quartey, Mengesha,
  Toups, Rickford, Jurafsky, and Goel]{koenecke2020racial}
Koenecke, A., Nam, A., Lake, E., Nudell, J., Quartey, M., Mengesha, Z., Toups,
  C., Rickford, J.~R., Jurafsky, D., and Goel, S.
\newblock Racial disparities in automated speech recognition.
\newblock \emph{Proceedings of the National Academy of Sciences}, 117\penalty0
  (14):\penalty0 7684--7689, 2020.

\bibitem[Lakshminarayanan et~al.(2016)Lakshminarayanan, Pritzel, and
  Blundell]{lakshminarayanan2016simple}
Lakshminarayanan, B., Pritzel, A., and Blundell, C.
\newblock Simple and scalable predictive uncertainty estimation using deep
  ensembles.
\newblock \emph{arXiv preprint arXiv:1612.01474}, 2016.

\bibitem[Lalor et~al.(2017)Lalor, Wu, Munkhdalai, and Yu]{lalordiff}
Lalor, J.~P., Wu, H., Munkhdalai, T., and Yu, H.
\newblock An analysis of machine learning intelligence.
\newblock \emph{CoRR}, abs/1702.04811, 2017.
\newblock URL \url{http://arxiv.org/abs/1702.04811}.

\bibitem[Liang et~al.(2017)Liang, Li, and Srikant]{liang2017enhancing}
Liang, S., Li, Y., and Srikant, R.
\newblock Enhancing the reliability of out-of-distribution image detection in
  neural networks.
\newblock \emph{arXiv preprint arXiv:1706.02690}, 2017.

\bibitem[Lipton et~al.(2018)Lipton, Wang, and Smola]{lipton2018detecting}
Lipton, Z.~C., Wang, Y.-X., and Smola, A.
\newblock {Detecting and Correcting for Label Shift with Black Box Predictors}.
\newblock In \emph{International Conference on Machine Learning (ICML)}, 2018.

\bibitem[Mania et~al.(2019)Mania, Miller, Schmidt, Hardt, and
  Recht]{mania2019model}
Mania, H., Miller, J., Schmidt, L., Hardt, M., and Recht, B.
\newblock Model similarity mitigates test set overuse.
\newblock \emph{arXiv preprint arXiv:1905.12580}, 2019.

\bibitem[Miller et~al.(2021)Miller, Taori, Raghunathan, Sagawa, Koh, Shankar,
  Liang, Carmon, and Schmidt]{miller2021accuracy}
Miller, J.~P., Taori, R., Raghunathan, A., Sagawa, S., Koh, P.~W., Shankar, V.,
  Liang, P., Carmon, Y., and Schmidt, L.
\newblock Accuracy on the line: on the strong correlation between
  out-of-distribution and in-distribution generalization.
\newblock In \emph{International Conference on Machine Learning}, pp.\
  7721--7735. PMLR, 2021.

\bibitem[Nakkiran \& Bansal(2022)Nakkiran and
  Bansal]{nakkiran2022distributional}
Nakkiran, P. and Bansal, Y.
\newblock Distributional generalization: Structure beyond test error, 2022.
\newblock URL \url{https://openreview.net/forum?id=k6F-4Bw7LpV}.

\bibitem[Nakkiran et~al.(2019{\natexlab{a}})Nakkiran, Kaplun, Bansal, Yang,
  Barak, and Sutskever]{ddd}
Nakkiran, P., Kaplun, G., Bansal, Y., Yang, T., Barak, B., and Sutskever, I.
\newblock Deep double descent: Where bigger models and more data hurt.
\newblock \emph{CoRR}, abs/1912.02292, 2019{\natexlab{a}}.
\newblock URL \url{http://arxiv.org/abs/1912.02292}.

\bibitem[Nakkiran et~al.(2019{\natexlab{b}})Nakkiran, Kaplun, Kalimeris, Yang,
  Edelman, Zhang, and Barak]{sgd}
Nakkiran, P., Kaplun, G., Kalimeris, D., Yang, T., Edelman, B.~L., Zhang, F.,
  and Barak, B.
\newblock {SGD} on neural networks learns functions of increasing complexity.
\newblock \emph{CoRR}, abs/1905.11604, 2019{\natexlab{b}}.
\newblock URL \url{http://arxiv.org/abs/1905.11604}.

\bibitem[Nakkiran et~al.(2020)Nakkiran, Neyshabur, and Sedghi]{bootstrap}
Nakkiran, P., Neyshabur, B., and Sedghi, H.
\newblock The deep bootstrap framework: Good online learners are good offline
  generalizers.
\newblock In \emph{International Conference on Learning Representations}, 2020.

\bibitem[Navon(1977)]{navon1977forest}
Navon, D.
\newblock Forest before trees: The precedence of global features in visual
  perception.
\newblock \emph{Cognitive psychology}, 9\penalty0 (3):\penalty0 353--383, 1977.

\bibitem[Nixon et~al.(2020)Nixon, Lakshminarayanan, and
  Tran]{nixon2020bootstrapped}
Nixon, J., Lakshminarayanan, B., and Tran, D.
\newblock Why are bootstrapped deep ensembles not better?
\newblock In \emph{''I Can't Believe It's Not Better!''NeurIPS 2020 workshop},
  2020.

\bibitem[Northcutt et~al.(2021)Northcutt, Athalye, and
  Mueller]{northcutt2021pervasive}
Northcutt, C.~G., Athalye, A., and Mueller, J.
\newblock Pervasive label errors in test sets destabilize machine learning
  benchmarks, 2021.

\bibitem[Ovadia et~al.(2019)Ovadia, Fertig, Ren, Nado, Sculley, Nowozin,
  Dillon, Lakshminarayanan, and Snoek]{ovadia2019can}
Ovadia, Y., Fertig, E., Ren, J., Nado, Z., Sculley, D., Nowozin, S., Dillon,
  J.~V., Lakshminarayanan, B., and Snoek, J.
\newblock Can you trust your model's uncertainty? evaluating predictive
  uncertainty under dataset shift.
\newblock \emph{arXiv preprint arXiv:1906.02530}, 2019.

\bibitem[Pleiss et~al.(2020)Pleiss, Zhang, Elenberg, and
  Weinberger]{pleissdiff}
Pleiss, G., Zhang, T., Elenberg, E.~R., and Weinberger, K.~Q.
\newblock Identifying mislabeled data using the area under the margin ranking.
\newblock \emph{CoRR}, abs/2001.10528, 2020.
\newblock URL \url{https://arxiv.org/abs/2001.10528}.

\bibitem[Power et~al.(2022)Power, Burda, Edwards, Babuschkin, and
  Misra]{power2022grokking}
Power, A., Burda, Y., Edwards, H., Babuschkin, I., and Misra, V.
\newblock Grokking: Generalization beyond overfitting on small algorithmic
  datasets.
\newblock \emph{arXiv preprint arXiv:2201.02177}, 2022.

\bibitem[Radford et~al.(2021)Radford, Kim, Hallacy, Ramesh, Goh, Agarwal,
  Sastry, Askell, Mishkin, Clark, et~al.]{clip}
Radford, A., Kim, J.~W., Hallacy, C., Ramesh, A., Goh, G., Agarwal, S., Sastry,
  G., Askell, A., Mishkin, P., Clark, J., et~al.
\newblock Learning transferable visual models from natural language
  supervision.
\newblock \emph{arXiv preprint arXiv:2103.00020}, 2021.

\bibitem[Recht et~al.(2019)Recht, Roelofs, Schmidt, and
  Shankar]{recht2019imagenet}
Recht, B., Roelofs, R., Schmidt, L., and Shankar, V.
\newblock Do imagenet classifiers generalize to imagenet?
\newblock In \emph{International Conference on Machine Learning}, pp.\
  5389--5400. PMLR, 2019.

\bibitem[Rosenfeld et~al.(2019)Rosenfeld, Rosenfeld, Belinkov, and
  Shavit]{scaling1}
Rosenfeld, J.~S., Rosenfeld, A., Belinkov, Y., and Shavit, N.
\newblock A constructive prediction of the generalization error across scales.
\newblock \emph{CoRR}, abs/1909.12673, 2019.
\newblock URL \url{http://arxiv.org/abs/1909.12673}.

\bibitem[Shalev-Shwartz \& Ben-David(2014)Shalev-Shwartz and
  Ben-David]{shalev2014understanding}
Shalev-Shwartz, S. and Ben-David, S.
\newblock \emph{Understanding machine learning: From theory to algorithms}.
\newblock Cambridge university press, 2014.

\bibitem[Sharma \& Kaplan(2020)Sharma and Kaplan]{sharma2020neural}
Sharma, U. and Kaplan, J.
\newblock A neural scaling law from the dimension of the data manifold.
\newblock \emph{arXiv preprint arXiv:2004.10802}, 2020.

\bibitem[Shimodaira(2000)]{shimodaira2000improving}
Shimodaira, H.
\newblock {Improving Predictive Inference Under Covariate Shift by Weighting
  the Log-Likelihood Function}.
\newblock \emph{Journal of Statistical Planning and Inference}, 2000.

\bibitem[Shrivastava et~al.(2016)Shrivastava, Gupta, and
  Girshick]{Shrivastava_2016_CVPR}
Shrivastava, A., Gupta, A., and Girshick, R.
\newblock Training region-based object detectors with online hard example
  mining.
\newblock In \emph{Proceedings of the IEEE Conference on Computer Vision and
  Pattern Recognition (CVPR)}, June 2016.

\bibitem[Toneva et~al.(2018)Toneva, Sordoni, des Combes, Trischler, Bengio, and
  Gordon]{toneva}
Toneva, M., Sordoni, A., des Combes, R.~T., Trischler, A., Bengio, Y., and
  Gordon, G.~J.
\newblock An empirical study of example forgetting during deep neural network
  learning.
\newblock \emph{CoRR}, abs/1812.05159, 2018.
\newblock URL \url{http://arxiv.org/abs/1812.05159}.

\bibitem[Tsybakov(2009)]{tsybakov2009introduction}
Tsybakov, A.~B.
\newblock Introduction to nonparametric estimation., 2009.

\bibitem[Valiant(1984)]{valiant1984theory}
Valiant, L.~G.
\newblock A theory of the learnable.
\newblock \emph{Communications of the ACM}, 27\penalty0 (11):\penalty0
  1134--1142, 1984.

\bibitem[Wang et~al.(2019)Wang, Ge, Lipton, and Xing]{sketch}
Wang, H., Ge, S., Lipton, Z., and Xing, E.~P.
\newblock Learning robust global representations by penalizing local predictive
  power.
\newblock In \emph{Advances in Neural Information Processing Systems}, pp.\
  10506--10518, 2019.

\bibitem[Zhang et~al.(2020)Zhang, Li, Guo, and Guo]{zhang2020hybrid}
Zhang, H., Li, A., Guo, J., and Guo, Y.
\newblock Hybrid models for open set recognition.
\newblock In \emph{European Conference on Computer Vision}, pp.\  102--117.
  Springer, 2020.

\bibitem[Zhong et~al.(2021)Zhong, Ghosh, Klein, and Steinhardt]{zhong-instance}
Zhong, R., Ghosh, D., Klein, D., and Steinhardt, J.
\newblock Are larger pretrained language models uniformly better? comparing
  performance at the instance level.
\newblock \emph{CoRR}, abs/2105.06020, 2021.
\newblock URL \url{https://arxiv.org/abs/2105.06020}.

\end{thebibliography}
\bibliographystyle{icml2022}

\newpage
\appendix
\onecolumn
\section{Experimental Details}
\label{sec:app-exp-details}

\paragraph{CIFAR-10 Experiments}
In the experiments of Section \ref{sec:models}, models trained from scratch on CIFAR-10 were trained from a random initialization using SGD with a Cosine Annealing learning rate schedule with initial learning rate $\eta = 0.01$, batch size $128$, and weight decay $5 \times 10^{-4}$, for $30$ epochs. The pre-trained models were first trained from scratch on the full ImageNet $32 \times 32$ \cite{chrabaszcz2017downsampled} training set using those same hyperparameters but for $100$ epochs. For fine-tuning pre-trained models on CIFAR-10, the linear classification layer was initialized randomly and then trained using SGD with a learning rate of $\eta = 0.001$ and batch size of $128$ for $3$ epochs with no weight-decay. For all training, we used standard data augmentation (i.e., random horizontal flip, random crop of size $32 \times 32$ with padding size $4$, and mean/std normalization). For each CIFAR-10 training run a single model was trained on a random subset of the training set (50,000 total samples) of size 10,000 for models trained from scratch and size 5,000 for pre-trained models. Profiles were evaluated on the CIFAR-10 test set (10,000 total samples) twice per epoch for scratch models and 10 times per epoch for pre-trained models. We computed profiles based on the evaluations of 50 independent runs. The final profiles were computed by performing Gaussian filter smoothing with $\sigma = 2$ and linear interpolation on an equally spaced grid of length $50$. The architectures used were ResNet-18 and DenseNet-121.

\paragraph{ImageNet Experiments}
For ImageNet, we train 10 randomly initialized seeds for three standard architectures: ResNet-50, DenseNet121 and DenseNet-169 for 90 epochs with SGD with momentum $0.9$, weight decay of $0.0001$  and learning rate schedule of [0.1, 0.01, 0.001] for 30 epochs each and batch size of 256 (128 for DenseNet-169). We use standard data augmentations (i.e., flip and random crop to $224$x$224$ images). To produce softmax-profiles we use 30 equally spaced checkpoints (adding 10 checkpoints around learning-rates drops to increase plot resolution) evaluated both on and off distribution (ImageNet-A, ImageNet-R, ImageNet-sketch, ImageNet-v2~\cite{imagenet-r,imagenetA,sketch,recht2019imagenet}). 

\section{Additional Details}\label{sec:app-add-details}

\paragraph{Profile Distance}
We define the distance between the $\cP$-profiles (e.g.\ accuracy-profiles, softmax-profiles, etc.) of training procedures $\cT_1$ and $\cT_2$ to be $\avg \int_0^1 d(\cP_{Z, \cT_1}(p), \cP_{Z, \cT_2}(p)) \, \dd p$, where $d$ is some distance measure on $\Delta(\cY)$ and $\avg$ denotes averaging over points $Z$ in the CIFAR-10 test set. In the left panel of Figure \ref{fig:acc_profs_models} we take $d$ to be total variation (TV) distance, defined as $d(p, q) = \frac{1}{2}\sum_{y \in \cY} |p_y - q_y|$. Plots for other choices of $d$ are shown in Figure \ref{fig:heatmap_kl_cosine}.

\paragraph{Non-Monotonicity Score}
Given a profile $\cP_z : [0, 1] \to [a, b]$, we can measure how much the curve $p \mapsto \cP_z(p)$ deviates from being monotonically increasing by computing the \emph{non-monotonicity score},
\[ \nmono(\cP_z) = \int_0^1 \max\left\{0, -\frac{\dd}{\dd p} \cP_z(p) \right\} \dd p.\] 
Note that the non-monotonicity score is always non-negative. It is zero if and only if $\cP_z$ is always increasing and is bounded by $b - a$. In the right panel of Figure \ref{fig:acc_profs_models}, we plot the $1-\mathrm{CDF}$ of the non-monotonicity scores of the accuracy profiles $\cA_z$ on the CIFAR-10 test set. In Figure \ref{fig:cdf_entropy_softacc} we show the respective plots for the negative entropy profiles $p \mapsto -\E H(\cT(n)(x))$ where $n = n(p)$ such that $\Acc_{\cT, \cD}(n(p)) = p$ and the soft-accuracy profiles $p \mapsto [\cS_z(p)]_y$ where for $\pi \in \Delta(\cY)$, $\pi_y$ is the probability assigned to $y \in \cY$ under $\pi$.

\paragraph{Pointwise Accuracy-on-the-Curve} The plots in Figure \ref{fig:acc-on-line-intro} suggest the conjecture that families of algorithms $\{\cT_i\}$ which have the same \emph{global} accuracy curve,
also have approximately similar pointwise accuracy profile: the  pointwise accuracy $\cA_{z, \cT_i}(p)$ on $z$ is well approximated by a function $g_z(p)$ that only depends on the point $z$ (and not the algorithm $\cT_i$).
One way to test such a conjecture is to look at two algorithms $\mathcal{T}$ and $\mathcal{T}'$ and measure the average absolute difference of their pointwise accuracies at a given global accuracy (i.e., $d(p)= \E_z | \cA_{z, \cT}(p) -\cA_{z, \cT'}(p)|$). 
In  Figure \ref{fig:pointwise-diff} we plot $d(p)$ when evaluating on $z$ from CIFAR-10.2 when training ResNet-18 and DenseNet-121 on CIFAR-10. We see that $d(p)$ is non-negligible, but still much smaller than we would expect if pointwise performance between $\cT$ and $\cT'$ was completely uncorrelated giving some weak evidence in support of the conjecture.

\section{Extra Figures}\label{sec:app-extra-figs}
\begin{figure*}[ht]
    \centering
    \begin{subfigure}[t]{0.3\textwidth}
    \centering
    \includegraphics[width=\linewidth]{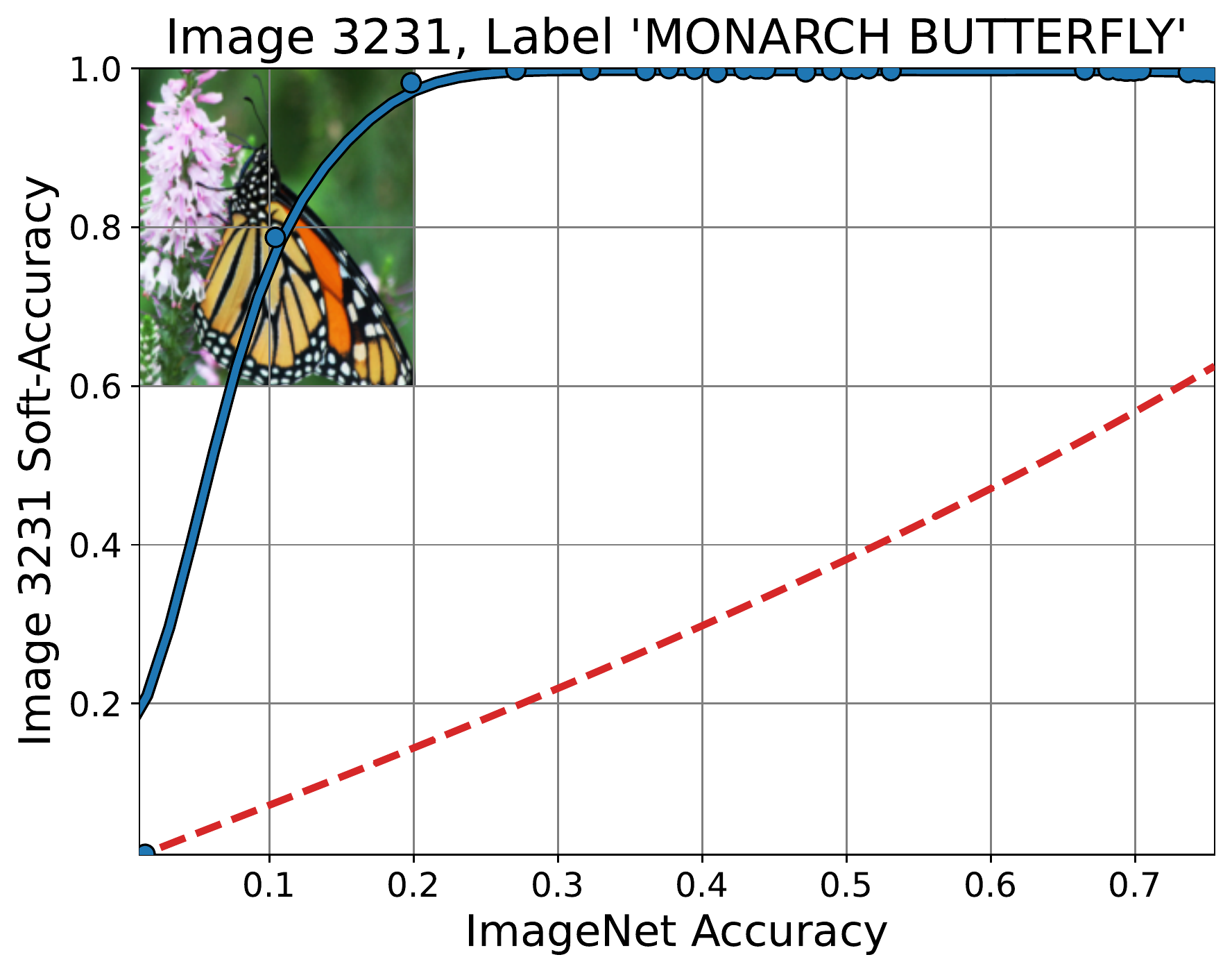}
    \end{subfigure}
    \begin{subfigure}[t]{0.3\textwidth}
    \centering
    \includegraphics[width=\linewidth]{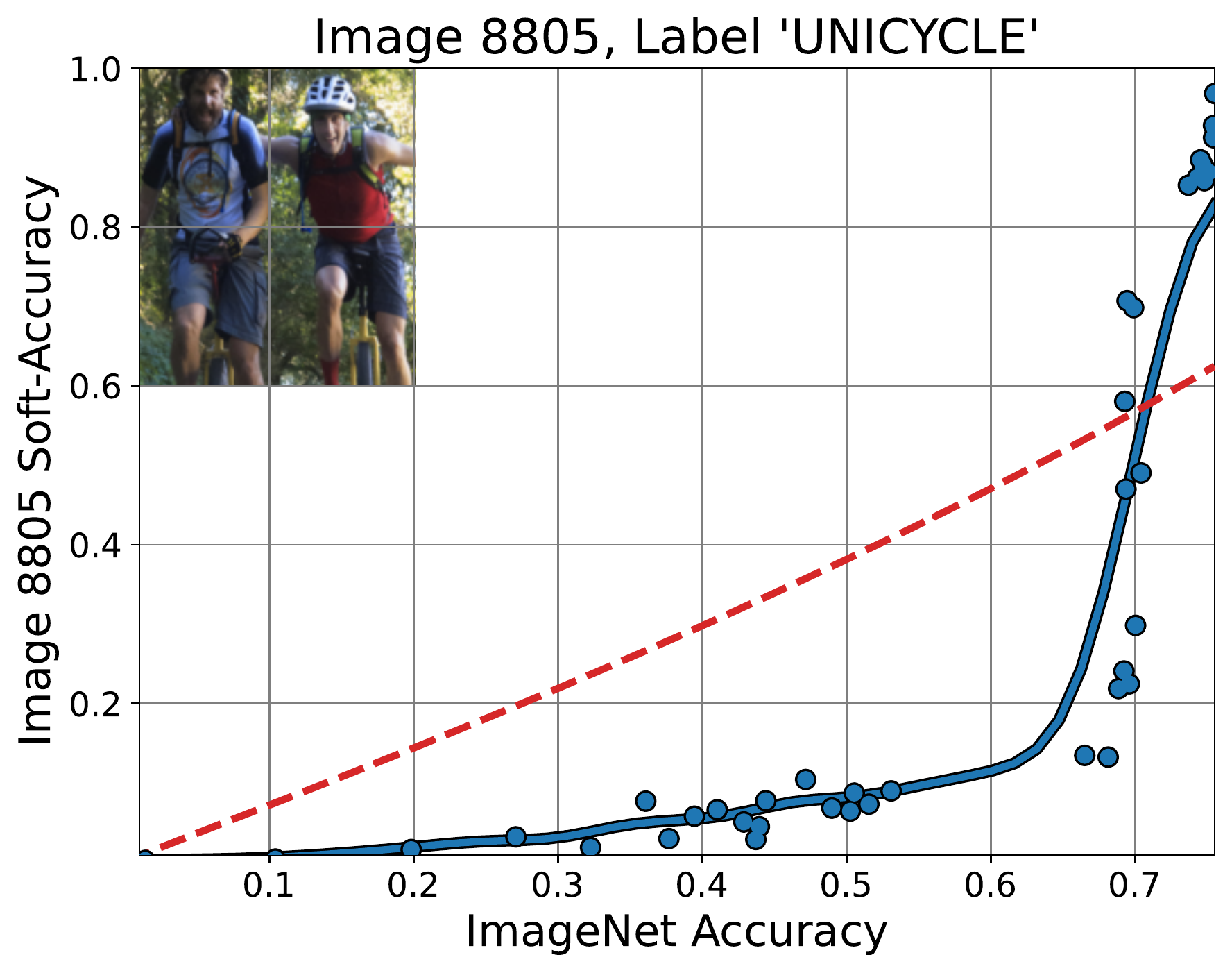}
    \end{subfigure}
    \begin{subfigure}[t]{0.3\textwidth}
    \centering
    \includegraphics[width=\linewidth]{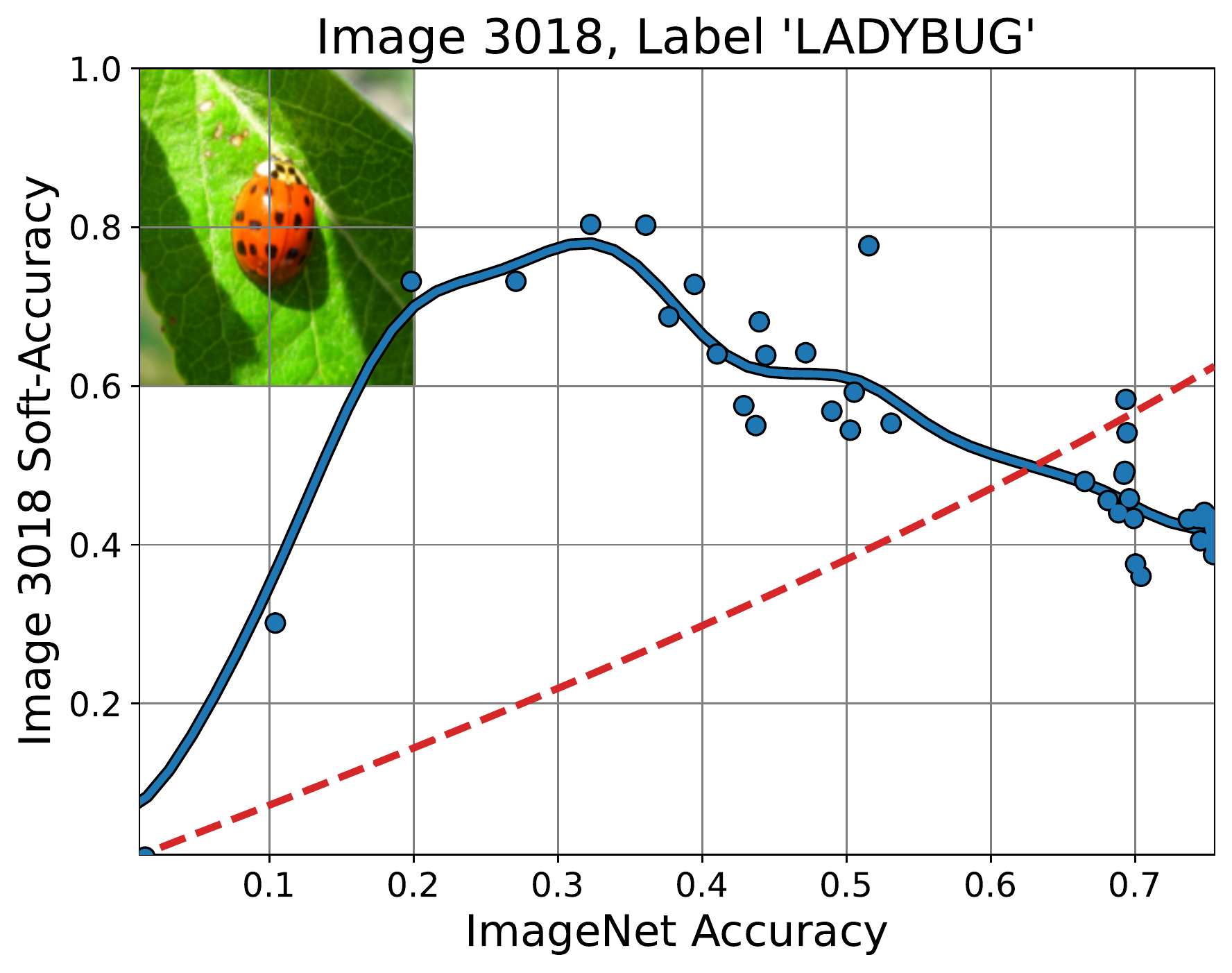}
    \end{subfigure}
    
    \begin{subfigure}[t]{0.3\textwidth}
    \centering
    \includegraphics[width=\linewidth]{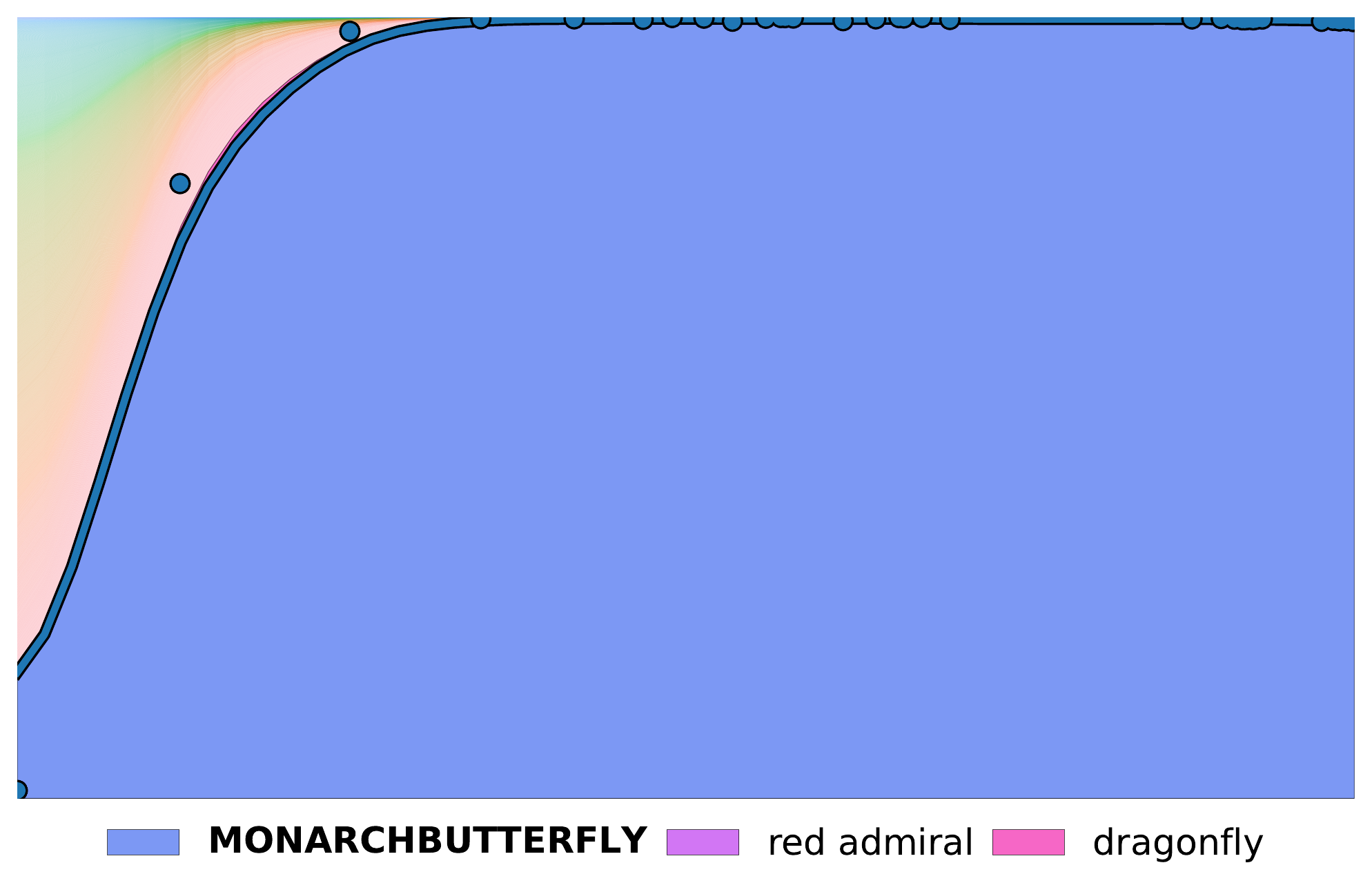}
    \end{subfigure}
    \begin{subfigure}[t]{0.3\textwidth}
    \centering
    \includegraphics[width=\linewidth]{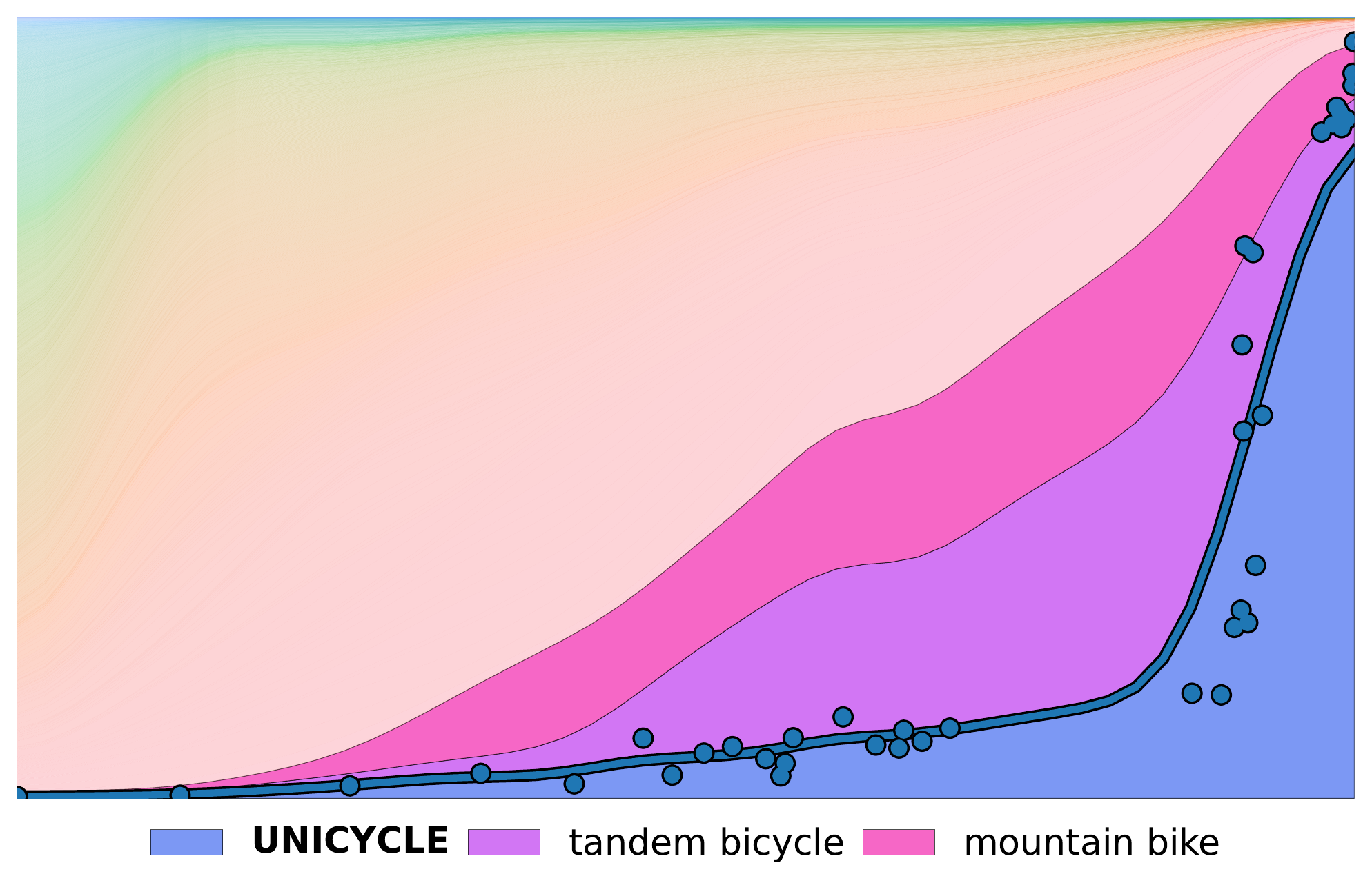}
    \end{subfigure}
    \begin{subfigure}[t]{0.3\textwidth}
    \centering
    \includegraphics[width=\linewidth]{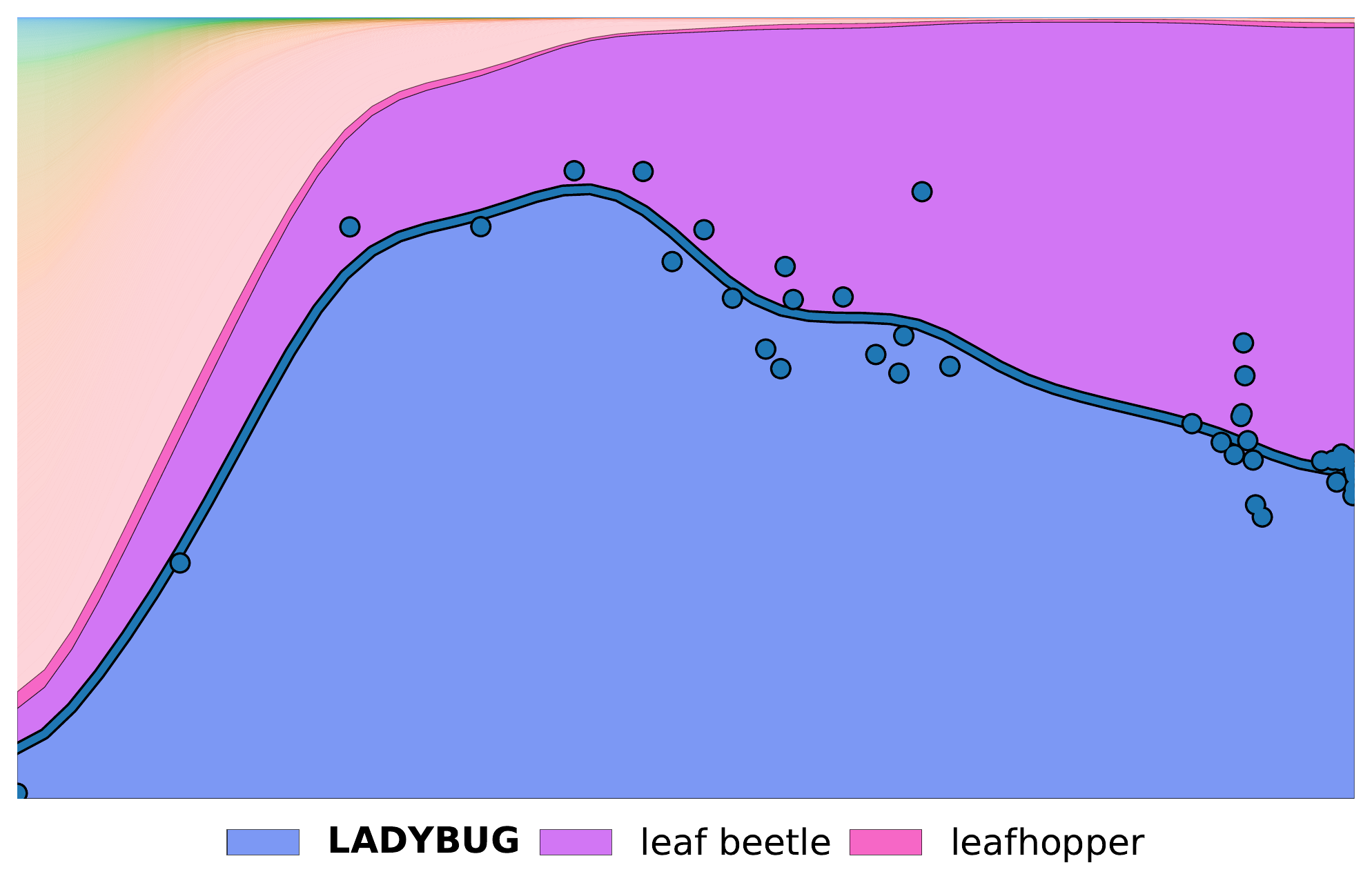}
    \end{subfigure}
    \caption{Example plots of softmax profiles obtained from ResNet-50 training on ImageNet, displaying a variety of interesting behaviors.}
    \label{fig:imagenet-profiles}
\end{figure*}

\begin{figure*}[ht]
    \centering
    \begin{subfigure}[t]{0.48\textwidth}
    \centering
    \includegraphics[width=\linewidth]{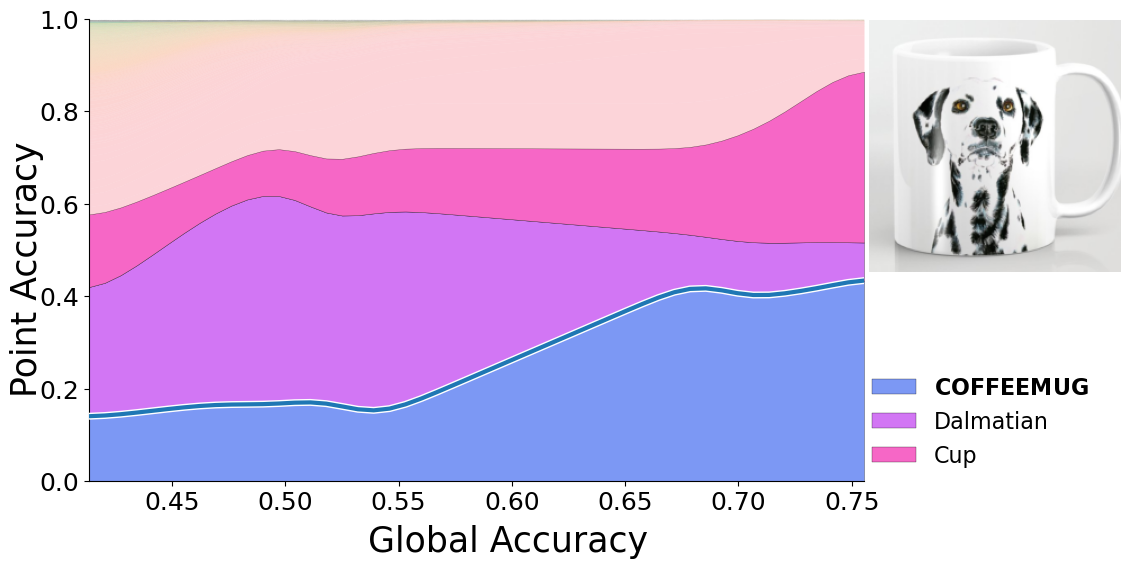}
    \end{subfigure}\hfill
    \begin{subfigure}[t]{0.48\textwidth}
    \centering
    \includegraphics[width=\linewidth]{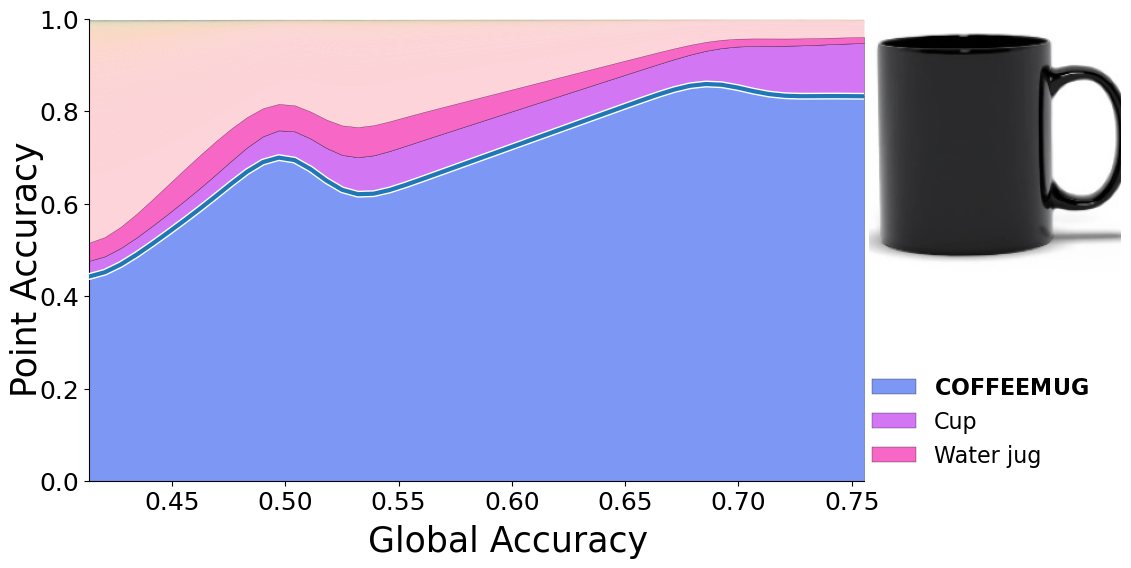}
    \end{subfigure}
    \caption{\textbf{Global shape vs. local features.} Lower accuracy models tend to be more sensitive to local features than the global shape of an image. For example, in this coffee mug, lower accuracy classifiers are thrown off by the illustration of a Dalmatian dog. Similar results can be seen with other images whose local texture or features conflicts with the global shape. In this sense, higher-accuracy classifiers behave more closely to humans, for whom global structure dominates local one \citep{navon1977forest}.}
    \label{fig:cup}
\end{figure*}
\begin{figure*}[!htb]
    \centering
    \begin{subfigure}[t]{0.45\textwidth}
    \centering
    \includegraphics[width=\linewidth]{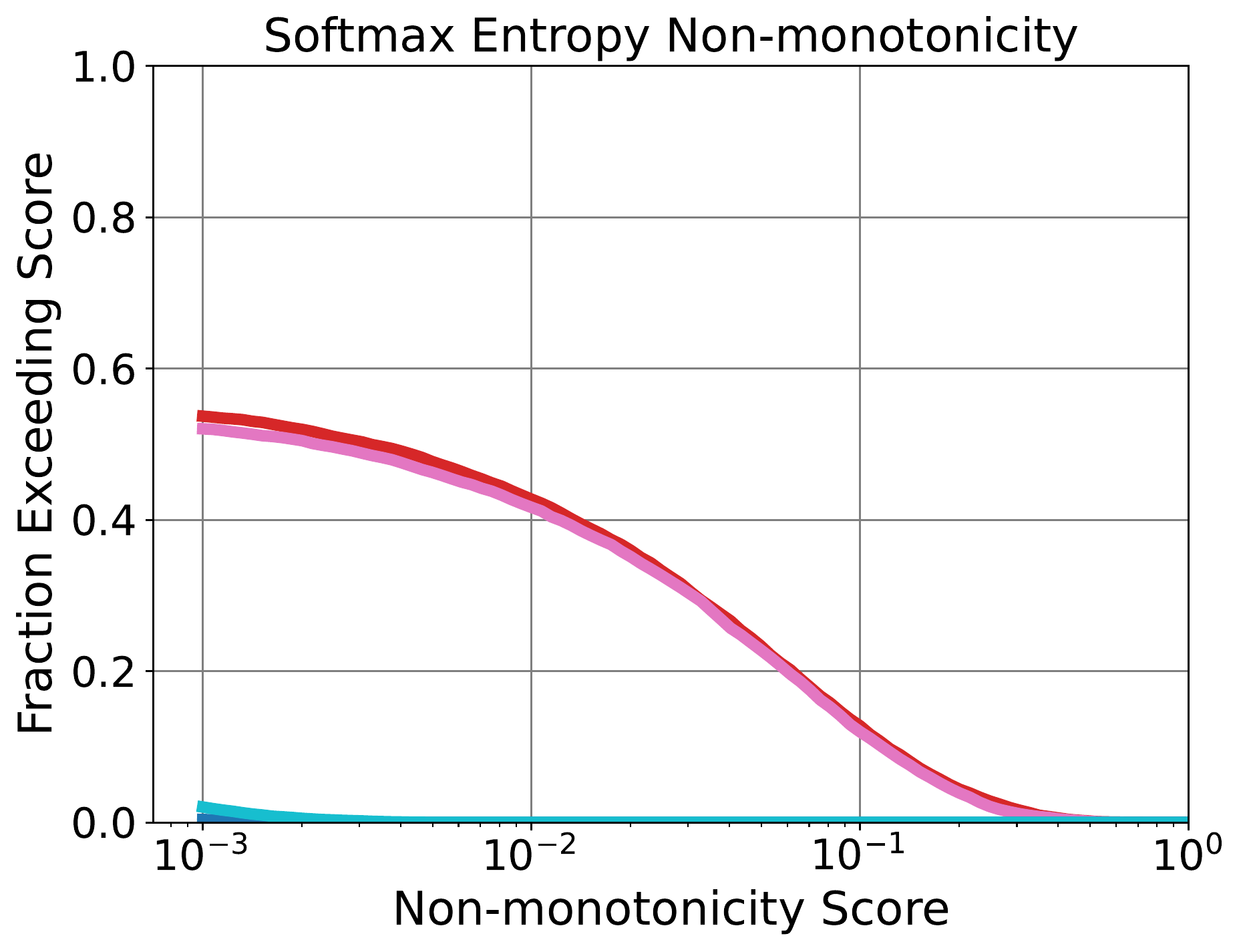}
    \end{subfigure}
    \begin{subfigure}[t]{0.45\textwidth}
    \centering
    \includegraphics[width=\linewidth]{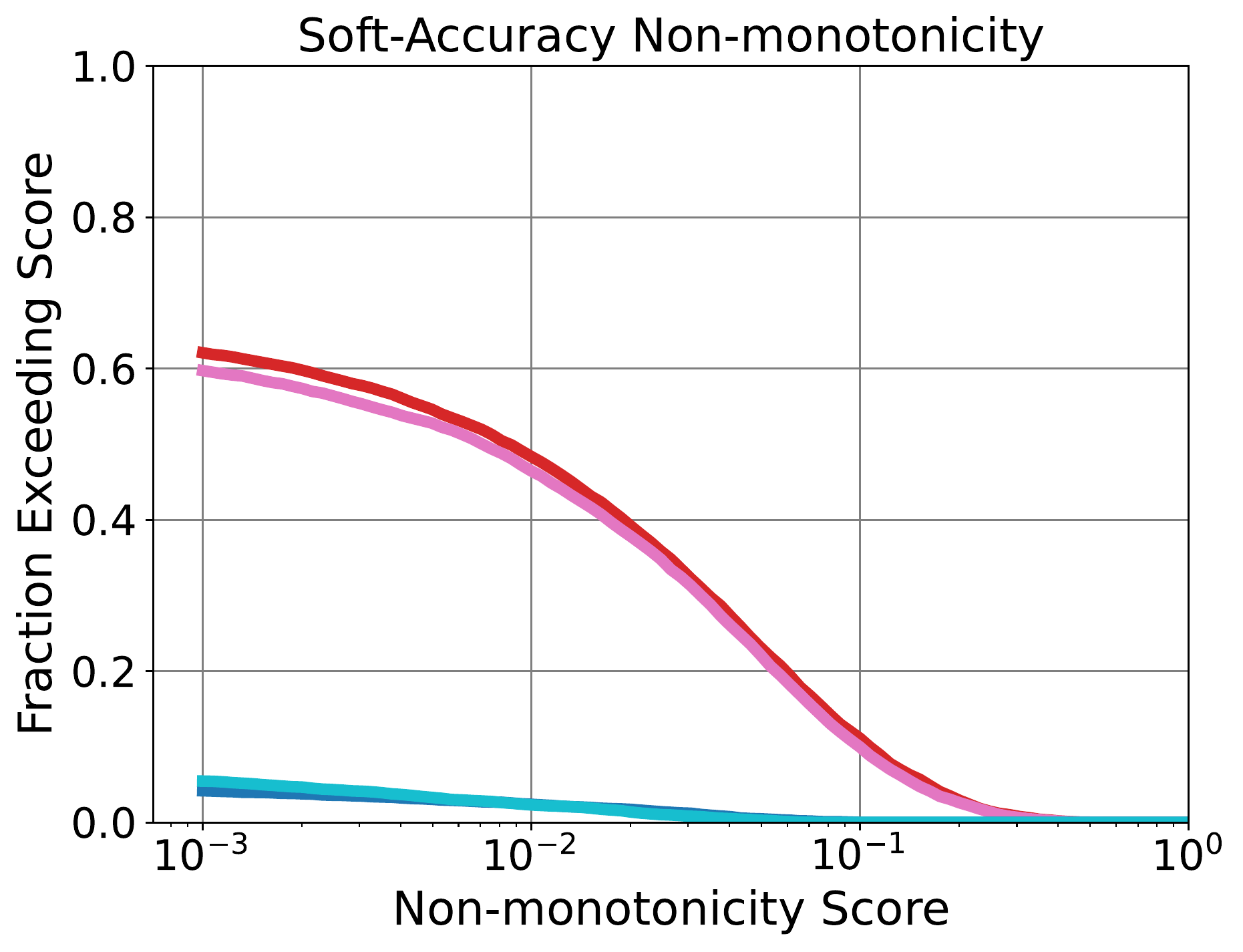}
    \end{subfigure}
    \caption{Plot of CDFs of the non-monotonicity score similar to the right panel of Figure~\ref{fig:acc_profs_models} for the softmax entropy (\textit{left panel}) and the soft-accuracy (\textit{right panel}). For both measures, non-monotonicity is sharply reduced by pre-training, just as for accuracy.}
    \label{fig:cdf_entropy_softacc}
\end{figure*}
\clearpage
\begin{figure*}[!htb]
    \centering
    \begin{subfigure}[t]{0.45\textwidth}
    \centering
    \includegraphics[width=\linewidth]{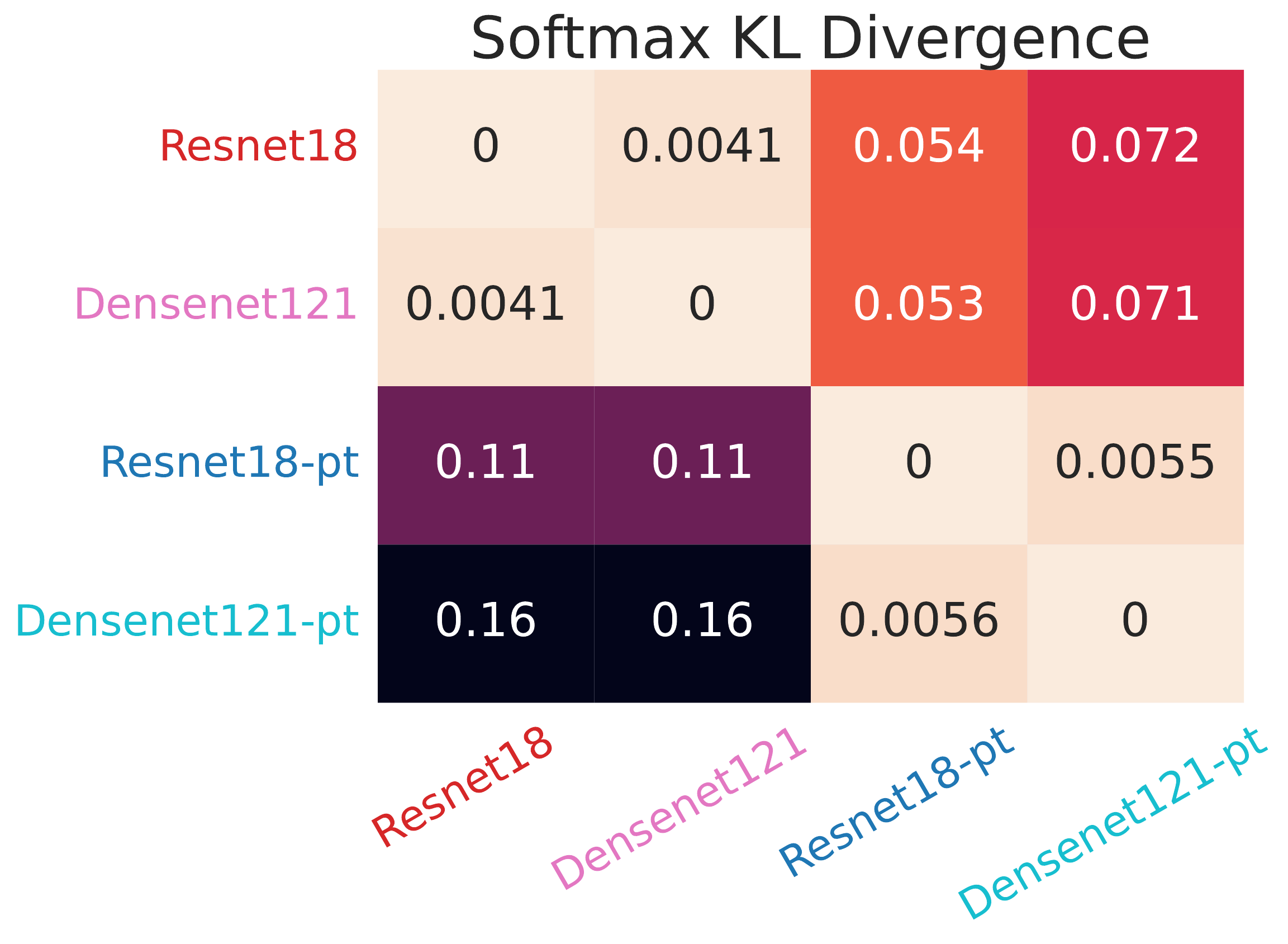}
    \end{subfigure}
    \begin{subfigure}[t]{0.45\textwidth}
    \centering
    \includegraphics[width=\linewidth]{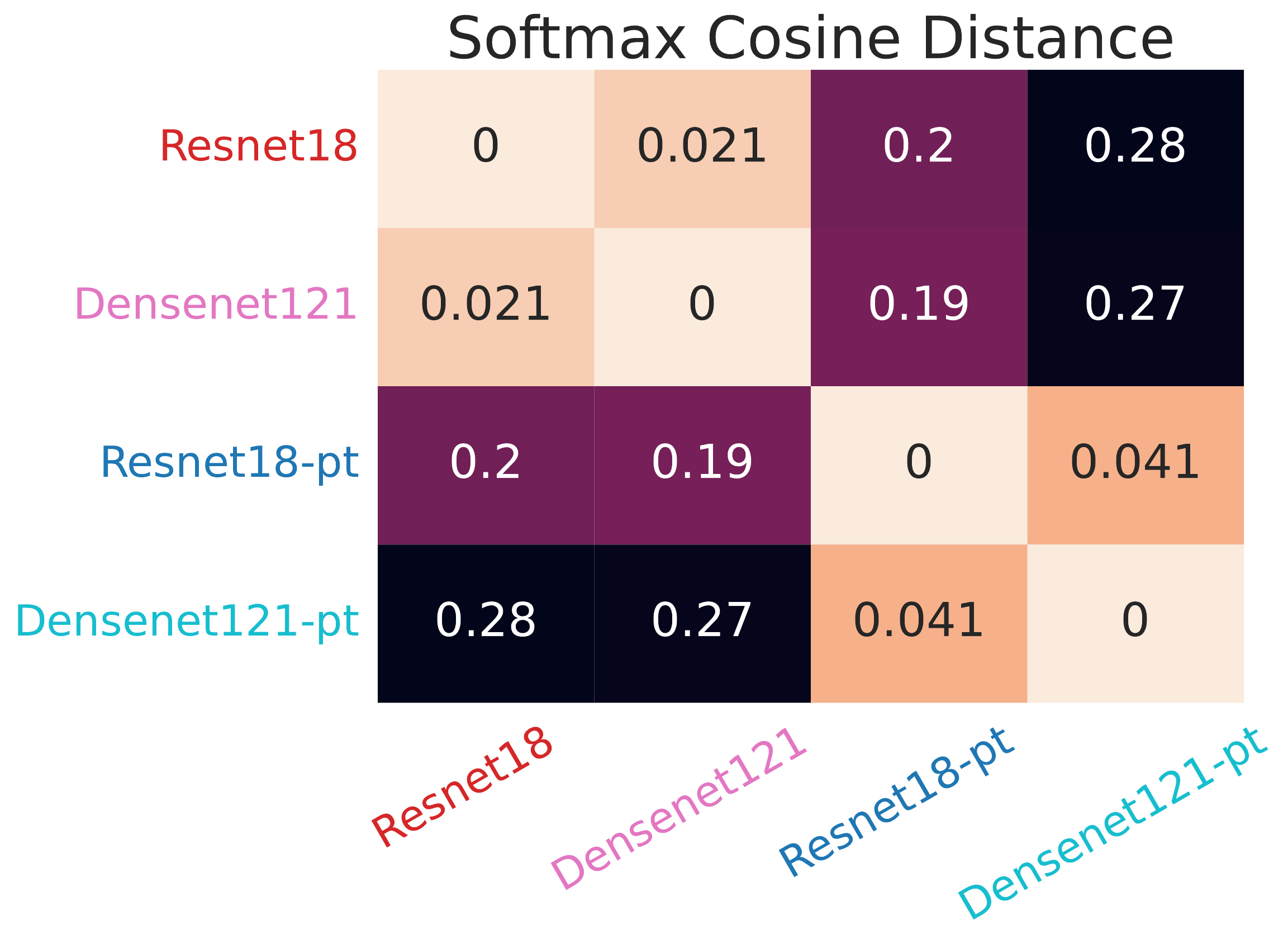}
    \end{subfigure}
    \caption{Heatmaps of profile distances similar to the left panel of \ref{fig:acc_profs_models} for the KL-Divergence (\textit{left panel}) and the Cosine Distance (\textit{right panel}). For both distances from-scratch models display higher similarity to each other than to pre-trained models.}
    \label{fig:heatmap_kl_cosine}
\end{figure*}
\begin{figure*}[!htb]
    \centering
    \includegraphics[width=0.5\textwidth]{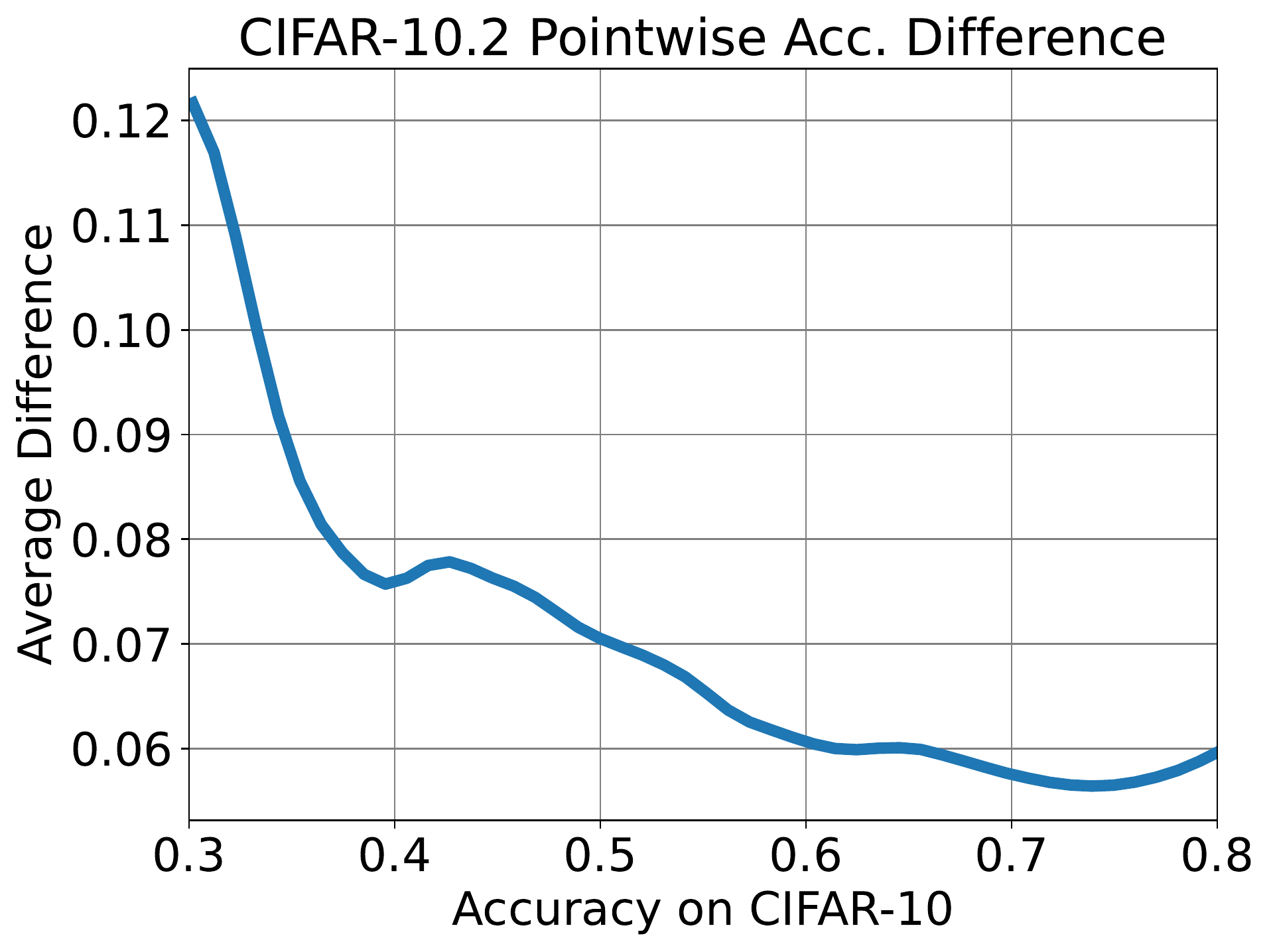}
    \caption{For each accuracy $p$ on the x-axis we plot the average pointwise absolute difference of the accuracies of a Resnet-18 model and a Densenet-121 model on the CIFAR-10.2 dataset. The values are fairly small, especially for higher accuracies.}
    \label{fig:pointwise-diff}
\end{figure*}
\clearpage
\section{Omitted proofs from Section~\ref{sec:theory}} \label{app:proofs}
We restate Theorem~\ref{thm:models}, provide the definitions of the models, and sketch its proof.

\begin{theorem}[Theorem~\ref{thm:models}, restated] \label{thm:modelsrestated}\phantom{a}
\begin{enumerate}[topsep=-1pt,partopsep=0pt,itemsep=4pt,parsep=0pt,leftmargin=12pt]

    \item  The ``skills vs difficulty'' model of \cite{recht2019imagenet} satisfies the \emph{universality of instance difficulty} and \emph{accuracy monotonicity} properties.
    
    \item The ``manifold partition'' model of \cite{sharma2020neural,bahri2021explaining} satisfies the  \emph{pointwise scaling law} property.
    
    \item Any general Bayesian inference model satisfies \emph{accuracy monotonicity} and \emph{entropy monotonicity}. Specific models, such as \emph{Bayesian Gaussian Process} with a fixed kernel, also satisfy \emph{universality of instance difficulty} with respect to a fixed training set.
\end{enumerate}
\end{theorem}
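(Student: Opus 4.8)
The plan is to handle the three parts separately, since the three models have different structure, and in each case to reduce the claimed property to a single structural feature of the model. Throughout I assume the model definitions that are deferred to this appendix and only sketch the reductions.

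\textbf{Part 1 (skills vs.\ difficulty).} First I would unpack the model as an item-response / Rasch-type model: each point $z$ carries a scalar difficulty $d_z$, each classifier carries a scalar skill $s$, the skill $s(n)$ is nondecreasing in the resource $n$, and $\Acc_{\cT,z}(n) = \phi\big(s(n) - d_z\big)$ for a fixed monotone increasing link $\phi$. Accuracy monotonicity is then immediate: $n \ge n'$ gives $s(n) \ge s(n')$, and monotonicity of $\phi$ gives $\Acc_{\cT,z}(n) \ge \Acc_{\cT,z}(n')$. For universality of instance difficulty I would observe that $\Acc_z(\cT) \le \Acc_{z'}(\cT)$ is equivalent to $\phi(s-d_z) \le \phi(s-d_{z'})$, i.e.\ to $d_z \ge d_{z'}$, a comparison that does not involve the skill $s$ at all; hence it transfers verbatim to any other algorithm $\cT'$.

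\textbf{Part 2 (manifold partition).} Here I would use the geometric content: the resource $n$ induces a tessellation of the data manifold (of intrinsic dimension $d$) into cells of diameter $O(n^{-1/d})$, with the prediction constant on each cell. For a fixed $z$ whose true label is locally constant, a misclassification can occur only when the cell containing $z$ straddles a class boundary; bounding this by (cell diameter) times a local bound on the boundary yields $\Acc_{\cT,z}(n) \ge 1 - C n^{-\alpha}$ with $\alpha > 0$ controlled by $d$ (and the smoothness assumptions). Note I would \emph{not} attempt accuracy monotonicity here, since a partition refinement can momentarily move $z$ to the wrong side of a boundary; this is exactly why only the weaker pointwise scaling law is claimed. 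I expect the main obstacle of the whole theorem to live in this part: the property quantifies over \emph{all} $z$, yet a point lying exactly on a decision boundary is never classified correctly, so I anticipate needing either that $z$ sits at positive distance from the boundary (with $C$ absorbing that margin) or that the bound is taken in expectation over the random placement of the tessellation. I would state the scaling law for points with a margin and carry the constant explicitly.

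\textbf{Part 3 (Bayesian inference).} I would exploit that, with a correct prior, the posterior predictive at $x$ after observing $D_n$ is the true conditional law, and that these posteriors form a martingale in the data filtration. Writing $p_y(D) := \Pr[Y=y \mid x, D]$ and $D_{n'} \subseteq D_n$, the Bayes-optimal accuracy at $x$ is $\E_{D_n}[\max_y p_y(D_n)]$, and Jensen applied to the convex map $\pi \mapsto \max_y \pi_y$ together with the martingale identity $\E[p_y(D_n)\mid D_{n'}] = p_y(D_{n'})$ gives
\[
\E\!\left[\max_y p_y(D_n) \,\middle|\, D_{n'}\right] \ge \max_y \E\!\left[p_y(D_n) \,\middle|\, D_{n'}\right] = \max_y p_y(D_{n'});
\]
taking expectations over $D_{n'}$ yields accuracy monotonicity. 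For entropy monotonicity I would identify the expected posterior entropy $\E H(\cT(n)(x))$ with the conditional entropy $H(Y \mid X, D_n)$ and invoke ``conditioning reduces entropy,'' so that enlarging $D_{n'}$ to $D_n$ cannot increase it. Finally, for the Gaussian-process universality claim I would use the special feature of GP regression that the posterior variance at a test point depends only on the test location, the training inputs, and the fixed kernel --- never on the observed labels; since difficulty is governed by this label-independent variance, its ordering over points is identical across all algorithms sharing the kernel and training inputs, which is precisely universality with respect to a fixed training set.
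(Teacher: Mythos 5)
Your Parts 1 and 3 are essentially the paper's own arguments. Part 1 matches exactly: the paper gets universality of instance difficulty from the same observation that the comparison $d_z \ge d_{z'}$ is skill-free, and proves accuracy monotonicity in the restricted probit form $A_f(d) = \Phi(s_f - d)$, writing $n \ge n' \implies s_f(n) \ge s_f(n') \implies \Phi(s_f(n) - d_z) \ge \Phi(s_f(n') - d_z)$; the only cosmetic difference is that the paper \emph{derives} monotone skill from the standing global-monotonicity assumption on $\cT$ rather than positing it as part of the model, as you do. Part 3 is the same proof in different clothing: the paper's Lemma writes $p_n = \sum_z \alpha_z p_z$ with $p_{n+1} = p_z$ with probability $\alpha_z$, and applies concavity of $H$ and convexity of $\norm{\cdot}_\infty$ --- which is precisely your one-step martingale identity plus Jensen, and your ``conditioning reduces entropy'' is that same concavity fact. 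The one place the paper is more careful than you is the identification of accuracy with $\E \norm{p_n}_\infty$: it explicitly handles ties among argmax labels (broken uniformly, the success probability is still $\norm{p_n}_\infty$ under a correct prior), whereas you assert the identification outright. Your GP universality argument --- posterior variance depends on the kernel and training inputs but never on labels, so the difficulty ordering is fixed --- is the paper's argument verbatim, at the same level of informality about linking variance to accuracy.

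Part 2 is where you genuinely diverge, and your anticipated ``main obstacle'' is an artifact of your reading of the model. As used in the paper, the Sharma--Kaplan model is not classification with decision boundaries: the concept is a Lipschitz function $f:[0,1]^d \to \R$, the learner $\cT(n)$ is a piecewise-linear approximation on $n = C^d$ cubes of side length $1/C$, and the error is regression error. In that setting the pointwise bound is uniform and immediate --- at \emph{every} point the error is at most the Lipschitz constant times the cell diameter, i.e.\ $O(n^{-1/d})$ --- so there are no boundary points, no margin condition, and no need to average over random tessellation placements; the paper's entire ``proof'' is the observation that the existing global argument is symmetric across points and therefore already pointwise. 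Your boundary/margin machinery would indeed be necessary for a classification variant of the model (closer to parts of Bahri et al.), and your proposed fixes (a margin assumption with the constant absorbing it, or expectation over tessellation placement) are the right ones in that setting --- but note they deliver a strictly weaker statement than the uniform ``for all $z$'' claim in the theorem, which in the paper's regression instantiation holds outright.
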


\paragraph{The skill vs. difficulties model.} 
Recht et al. present a highly simplified model for explaining distribution shift phenomena \citep[Appendix B]{recht2019imagenet}. In this model, each point $z$ has a ``difficulty level'' $d_z \in \mathbb{R}$. Each classifier $f$ has an accuracy function,  $A_{f}:\mathbb{R} \rightarrow [0,1]$ which is a monotonically non-increasing function mapping the difficulty (of some point $z$) the probability that the classifier is successful (on $z$). Note that $A_f$ only depends on the ``skill" of $f$ so if $f$ and $g$ have the same skill the accuracy function will be the same. 
Now, for any two points $z, z'$, we have that for every $f$ output by some procedure $\cT$, $A_{f}(d_z) \geq A_{f}(d_{z'})$ if and only if $d_z \leq d_{z'}$. Then, by definition, every model of this type satisfies \emph{universal instance difficulty}.
In their paper, they specifically considered a restricted version where the accuracy function of a classifier $f$ has the form $A_f(d) = \Phi(s_f - d)$ where $\Phi$ is the CDF of a standard normal and $s_f \in \mathbb{R}$ is a parameter measuring the ``skill'' of a model. In such a case, the global accuracy is a monotonically increasing function of the skill $s_f(n)$ (since increasing skill improves accuracy on every point and vice-versa), and hence for every collection $\mathcal{T}(n)$, the skill will be an non-decreasing function of $n$, meaning that it satisfies \emph{accuracy monotonicity} as well. In other words,  \[
n\ge n' \implies s_f(n)\ge s_f(n') \implies \Acc_{\cT,z}(n)=\Phi(s_f(n) - d_z) \ge \Phi(s_f(n') - d_z) \ge \Acc_{\cT,z}(n').
\]

\paragraph{The partitioned manifold model.}
This proof follows directly from the proof
of \cite{sharma2020neural}. For completeness,
we sketch their argument here, simply observing
that the existing proof continues to apply in the pointwise setting.

\cite{sharma2020neural} and \cite{bahri2021explaining} propose tractable theoretical models to explain the ubiquity of \emph{scaling laws}. In the notation of this paper, this is the observation that for many natural data distribution $\cD$ and learning methods $\cT$, the \emph{global accuracy} $\Acc_{\cD,\cT}(n)$ scales as $1-C\cdot n^{-\alpha}$ for some exponent $\alpha$ that depends on the data distribution rather than particular features of the learning methods. Specifically, \cite{sharma2020neural} present a simple toy model, in which the concept learned is some Lipschitz function $f:[0,1]^d \rightarrow \R$, and they assume that $\cT(n)$ corresponds to a piecewise linear approximation on $n = C^d$ cubes of side length $1/C$. They prove that such models satisfy a \emph{global} scaling law with regression error scaling as $n^{-1/d}$. However, because of the symmetry between points in this model, their proof (as well as the proofs in \cite{bahri2021explaining}) implies also the stronger notion of a \emph{pointwise} scaling law.

\paragraph{Bayesian inference model.} In a general \emph{Bayesian inference} model where we are performing inference with respect to the true distribution, $\Pr$. For any fixed instance $x$, the label is a random variable $Y$ and we have a sequence of correlated random variables $Z_1, Z_2, Z_3, \ldots$. For every $n\in\mathbb{N}$, the $n^{\text{th}}$ posterior distribution $p_n$ is a random element in $\Delta(\cY)$ obtained by sampling $z_1, \ldots, z_n$ and letting $p_n(y \mid z_1, \ldots, z_n)$ be the distribution of $Y \mid Z_1=z_1, \ldots, Z_n=z_n$. The prediction algorithm after observing $z_1,...,z_n$ is then arbitrarily choosing from the set $\arg\max_{y\in \cY}p_{n+1}$.
Since different inference methods could correspond to completely different random variables, in general such models do not satisfy universal instance difficulty. However, they do satisfy \emph{entropy monotonicity} and \emph{accuracy monotonicity}.   This is shown by the following lemma:

\begin{lemma}[Entropy and accuracy monotonicity of Bayesian inference] \label{lem:bayesmono}
Let $Y,Z_1,Z_2,\ldots$ be defined as above, and consider the process of sampling $z = (z_1,z_2,\ldots)$. Then for every $n$, defining the posterior distribution $p_n$ as above, $\E H(p_n) \geq \E H(p_{n+1})$ and $\E \norm{p_n}_{\infty} \leq \E \norm{p_{n+1}}_{\infty}$, where the expectations are over the sampling of $z$ and for $q \in \Delta(\cY)$, $\norm{q}_\infty = \max_{y \in \cY} q(y)$.
\end{lemma}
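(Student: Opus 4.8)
The plan is to exploit the martingale structure of Bayesian posteriors and then invoke Jensen's inequality, using the concavity of entropy for the first bound and the convexity of the max-norm for the second. First I would set up the filtration $\cF_n = \sigma(Z_1, \ldots, Z_n)$ and note that, by definition, the posterior is the conditional law $p_n = \Pr[Y = \cdot \mid \cF_n]$, an $\cF_n$-measurable random element of $\Delta(\cY)$. The key structural observation is that $(p_n)_n$ is a martingale: for each label $y \in \cY$, since $\cF_n \subseteq \cF_{n+1}$, the tower property of conditional expectation gives
\[
\E[p_{n+1}(y) \mid \cF_n] = \E\bigl[\Pr[Y=y \mid \cF_{n+1}] \mid \cF_n\bigr] = \Pr[Y = y \mid \cF_n] = p_n(y).
\]
Reading this coordinatewise yields $\E[p_{n+1} \mid \cF_n] = p_n$ as $\Delta(\cY)$-valued random variables.

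For the entropy bound I would use that $H : \Delta(\cY) \to \R$ is concave. Applying the conditional Jensen inequality to $H$ and then the martingale identity gives
\[
\E[H(p_{n+1}) \mid \cF_n] \leq H\bigl(\E[p_{n+1} \mid \cF_n]\bigr) = H(p_n).
\]
Taking expectations and using the tower property then yields $\E H(p_{n+1}) \leq \E H(p_n)$, which is exactly entropy monotonicity.

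For the accuracy bound the argument is identical but with the direction of Jensen reversed, because $q \mapsto \norm{q}_\infty = \max_{y} q(y)$ is convex, being a pointwise maximum of linear functions. Conditional Jensen gives $\norm{p_n}_\infty = \norm{\E[p_{n+1} \mid \cF_n]}_\infty \leq \E[\norm{p_{n+1}}_\infty \mid \cF_n]$; taking expectations yields $\E\norm{p_n}_\infty \leq \E\norm{p_{n+1}}_\infty$. This is the claimed inequality, and combined with the observation that the Bayes-optimal accuracy at stage $n$ equals $\E\norm{p_n}_\infty$ (the averaged probability that the argmax label is correct), it delivers accuracy monotonicity.

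I expect the only delicate point to be the rigorous justification of the martingale identity, i.e., that the iterated conditioning collapses via the tower property, together with checking the integrability needed for conditional Jensen. The latter is immediate when $\cY$ is finite, since then $H$ is bounded by $\log|\cY|$ and $\norm{\cdot}_\infty \leq 1$, so all quantities are bounded and the inequalities hold without integrability concerns. The conceptual crux is simply recognizing the posterior-martingale structure; once that is in hand, concavity of $H$ and convexity of $\norm{\cdot}_\infty$ give the two monotonicities immediately.
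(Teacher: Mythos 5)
Your proposal is correct and takes essentially the same route as the paper: the paper's explicit decomposition $p_n = \sum_{z} \alpha_z p_z$ over the support of $Z_{n+1}$, followed by concavity of $H$ and convexity of $\norm{\cdot}_\infty$, is precisely your posterior-martingale identity $\E[p_{n+1} \mid \cF_n] = p_n$ plus conditional Jensen, just written out for discrete observations. Your filtration phrasing is marginally more general (it does not assume $Z_{n+1}$ has discrete support), but the key idea and both inequalities are identical.
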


Lemma~\ref{lem:bayesmono} clearly implies the Bayesian inference satisfies entropy monotonicity. The reason it also implies accuracy monotonicity is the following: If $\|p_n\|_{\infty} = \alpha$ and there are $k$ labels $y_1,\ldots,y_k$ for which $p_n(y_i)=\alpha$, then given this posterior $p_n$, WLOG we will predict that the label is $y_i$ with probability $1/k$. But since we are in the Bayesian setting, we assume that the posterior correctly models the world, that is for each $i\in \{1,\ldots,k\}$ the probability the true label was in fact $y_i$ is $\alpha$, and hence the probability for correct prediction is $\sum_{i \in [k]}\Pr[\hat Y = y_i \text{ and } Y = y_i]=\sum_{i=1}^k \frac{1}{k} \alpha = \alpha = \|p_n\|_{\infty}$.

\begin{proof}[Proof of Lemma~\ref{lem:bayesmono}] When $p_{n+1}$ is obtained by conditioning $p_n$ on the value $z$ of $Z_{n+1}$, then we can write $p_{n} = \sum_{z \in \mathrm{Supp}(Z_{n+1})} \alpha_z p_z$ where $p_z$ is obtained by conditioning $p_n$ on $Z_{n+1}=z$  and $\alpha_z = \Pr[ Z_{n+1}=z | Z_1=z_1,...,Z_n=z_n]$. Hence $p_{n+1} = p_z$ with probability $\alpha_z$. But now the result follows from the concavity of entropy and convexity of the infinity norm: $\E H(p_n) \geq \E \sum \alpha_z H(p_z) = \E H(p_{n+1})$ and $\E\|p_n\|_{\infty} \leq \E\sum \alpha_z \|p_z\|_{\infty} = \E \|p_{n+1}\|_{\infty}$
\end{proof}

\paragraph{Bayesian Gaussian Process.} For the special case of \emph{Bayesian Gaussian Process}, the difficulty of a sample $(x,y)$ can be computed as an explicit function of $x$'s proximity to the training set.
This is the case where  $f \sim GP(0,K)$ is a Gaussian process, with mean-0 (for simplicity) and known symmetric covariance function $K(x_1, x_2)$.
In this model, given a train set $X,Y$, the posterior distribution on a given test point $x$ is 
\begin{align*}
f(x^*) | S &= \cN(\mu^*, \sigma^2)\\
\mu^* &= K(x^*, X)K(X,X)^{-1}Y \tag{posterior mean}\\
\sigma^2 &= K(x^*, x^*) - K(x^*, X)K(X, X)^{-1}K(X, x^*)
\tag{posterior variance}
\end{align*}

We can see that the posterior mean $\mu^*$ weights train-labels $Y$ by their proximity to the test point: $K(x^*, X)$.
Moreover, the posterior variance is also modulated by this vector of proximities: points closer to the train set $X$ w.r.t. the metric $K$ have smaller variance.
The pointwise posterior variance $\sigma^2$ can be thought of the difficulty of the point $x$. We can see that in this model, the difficulty of a point $x$ is a function of its relation to the training set.

\newpage
\section{Samples from CIFAR-10 and \cifarneg} \label{app:samples}

\begin{figure*}[ht!]
    \centering
    \begin{subfigure}[t]{0.17\textwidth}
    \centering
    \includegraphics[width=\linewidth]{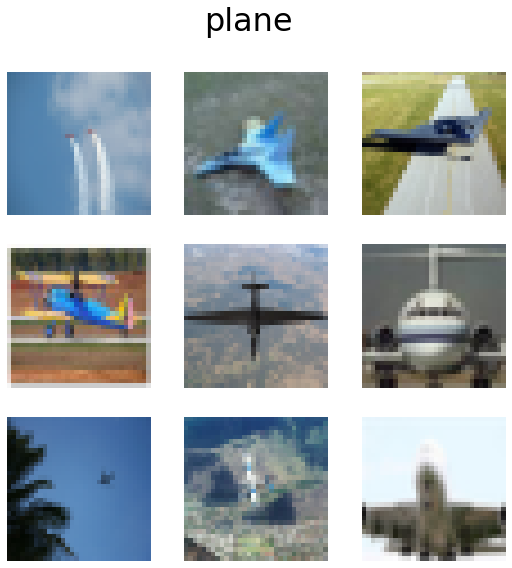}
    \caption{}
    \end{subfigure} \hfill
    \begin{subfigure}[t]{0.17\textwidth}
    \centering
    \includegraphics[width=\linewidth]{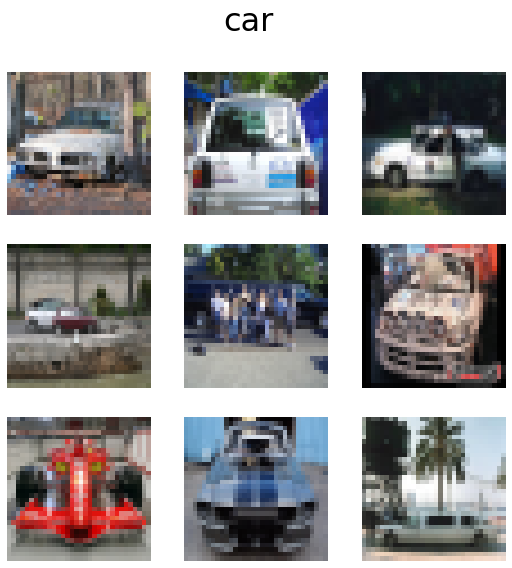}
    \caption{}
    \end{subfigure} \hfill
    \begin{subfigure}[t]{0.17\textwidth}
    \centering
    \includegraphics[width=\linewidth]{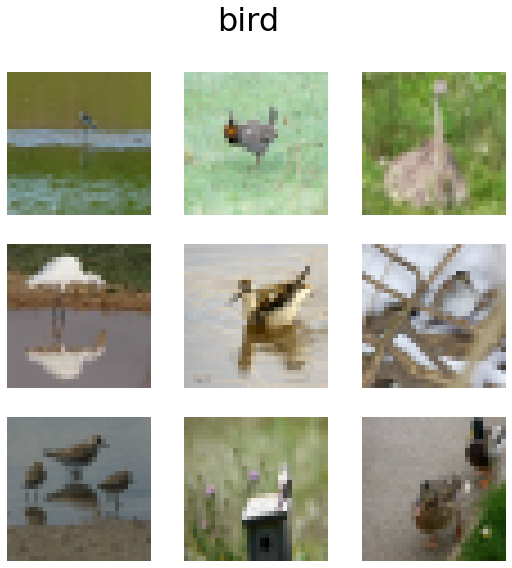}
    \caption{}
    \end{subfigure}\hfill
    \begin{subfigure}[t]{0.17\textwidth}
    \centering
    \includegraphics[width=\linewidth]{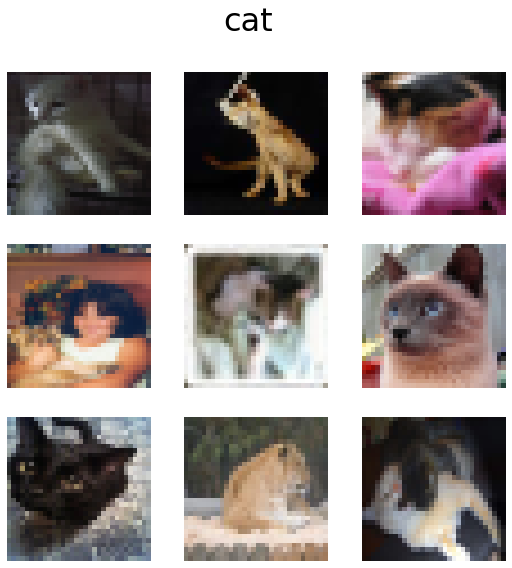}
    \caption{}
    \end{subfigure}\hfill
    \begin{subfigure}[t]{0.17\textwidth}
    \centering
    \includegraphics[width=\linewidth]{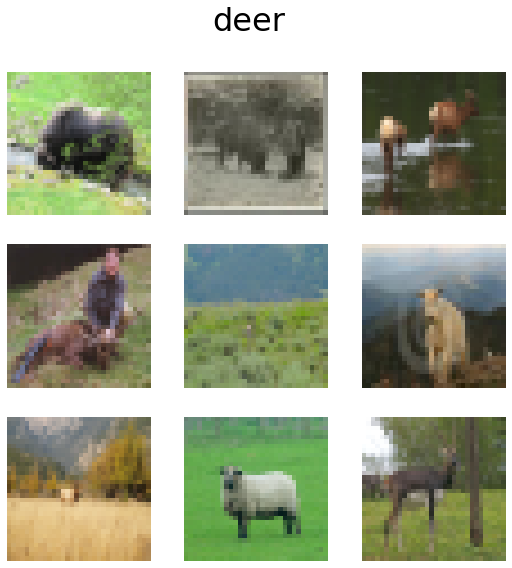}
    \caption{}
    \end{subfigure}
    
    \begin{subfigure}[t]{0.17\textwidth}
    \centering
    \includegraphics[width=\linewidth]{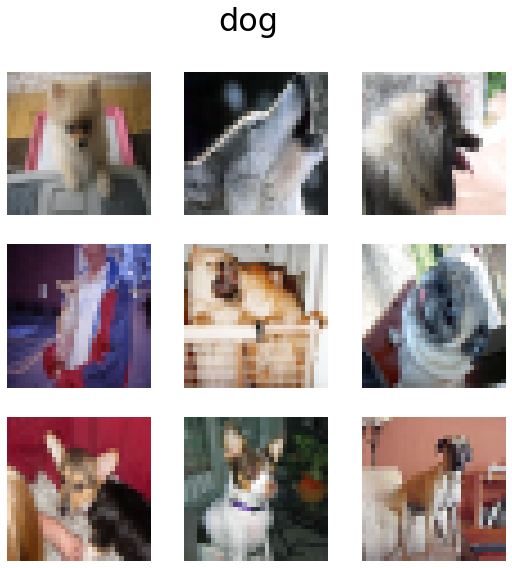}
    \caption{}
    \end{subfigure} \hfill
    \begin{subfigure}[t]{0.17\textwidth}
    \centering
    \includegraphics[width=\linewidth]{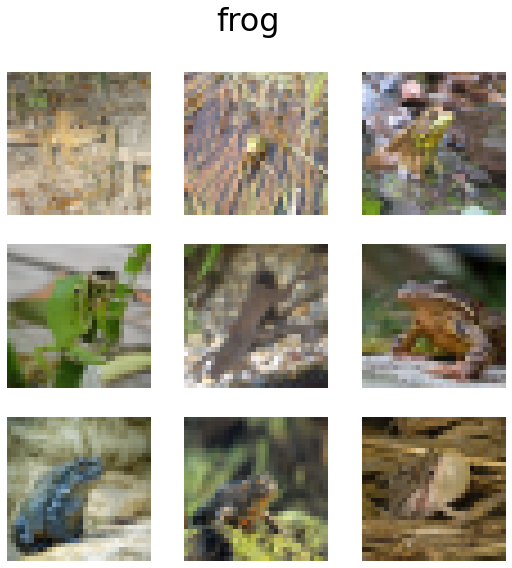}
    \caption{}
    \end{subfigure} \hfill
    \begin{subfigure}[t]{0.17\textwidth}
    \centering
    \includegraphics[width=\linewidth]{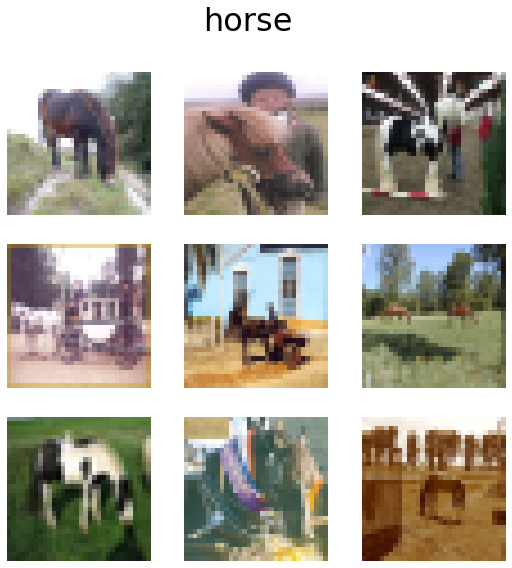}
    \caption{}
    \end{subfigure}\hfill
    \begin{subfigure}[t]{0.17\textwidth}
    \centering
    \includegraphics[width=\linewidth]{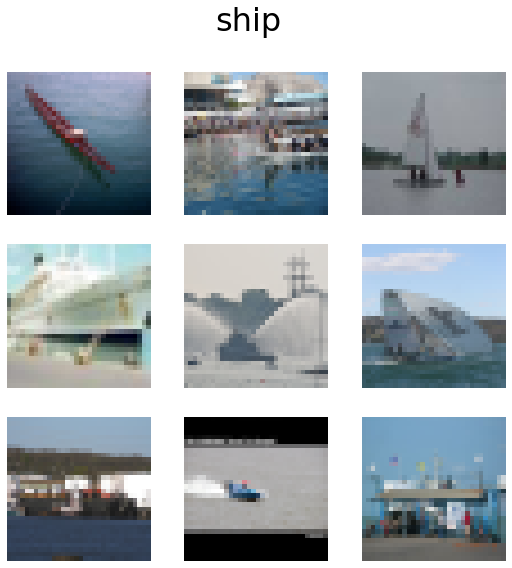}
    \caption{}
    \end{subfigure}\hfill
    \begin{subfigure}[t]{0.17\textwidth}
    \centering
    \includegraphics[width=\linewidth]{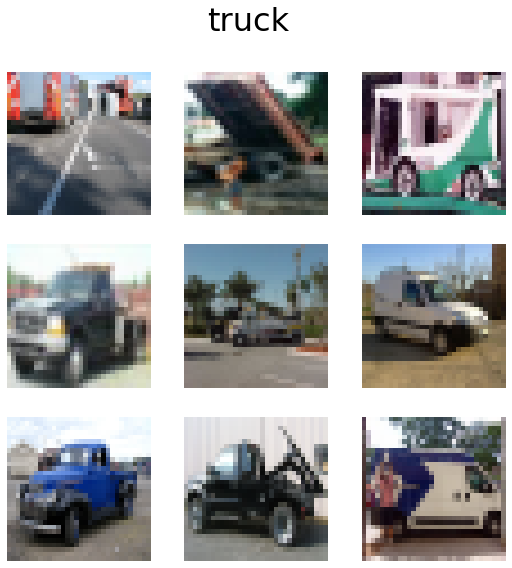}
    \caption{}
    \end{subfigure}
    \caption{Random samples from \cifarneg set.}
\end{figure*}

\begin{figure*}[ht!]
    \centering
    \begin{subfigure}[t]{0.17\textwidth}
    \centering
    \includegraphics[width=\linewidth]{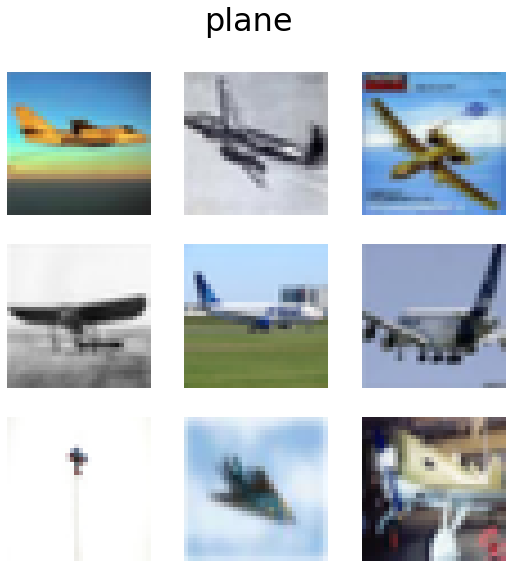}
    \caption{}
    \end{subfigure} \hfill
    \begin{subfigure}[t]{0.17\textwidth}
    \centering
    \includegraphics[width=\linewidth]{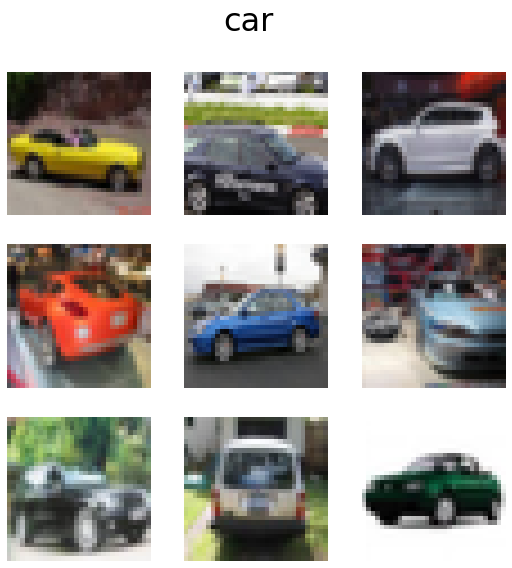}
    \caption{}
    \end{subfigure} \hfill
    \begin{subfigure}[t]{0.17\textwidth}
    \centering
    \includegraphics[width=\linewidth]{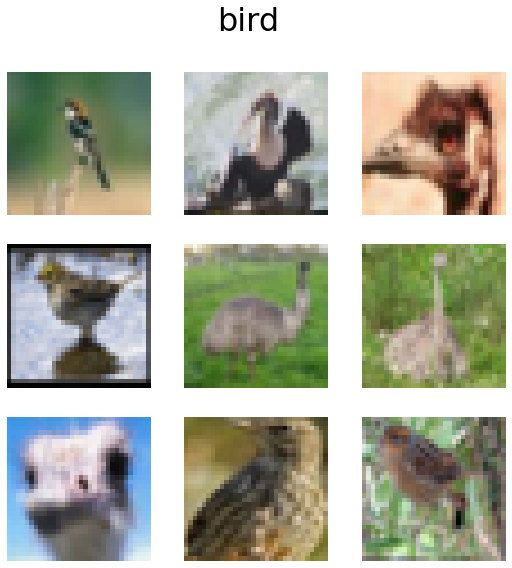}
    \caption{}
    \end{subfigure}\hfill
    \begin{subfigure}[t]{0.17\textwidth}
    \centering
    \includegraphics[width=\linewidth]{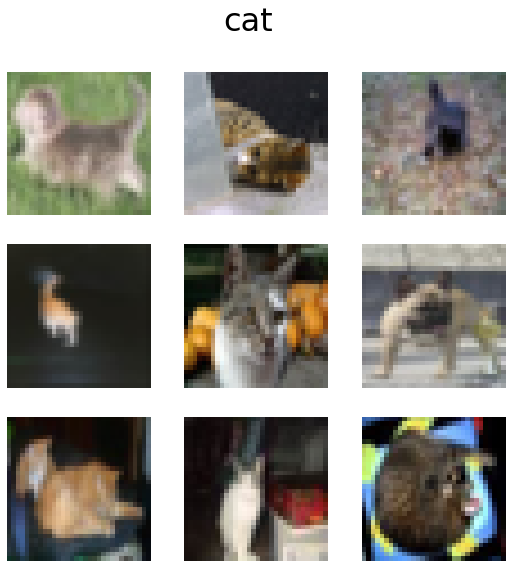}
    \caption{}
    \end{subfigure}\hfill
    \begin{subfigure}[t]{0.17\textwidth}
    \centering
    \includegraphics[width=\linewidth]{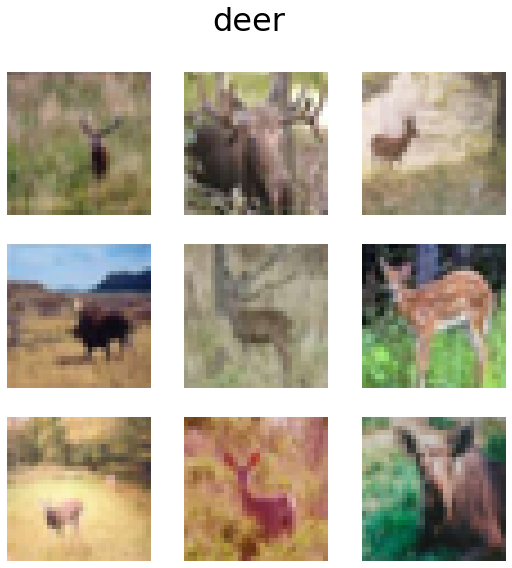}
    \caption{}
    \end{subfigure}
    
    \begin{subfigure}[t]{0.17\textwidth}
    \centering
    \includegraphics[width=\linewidth]{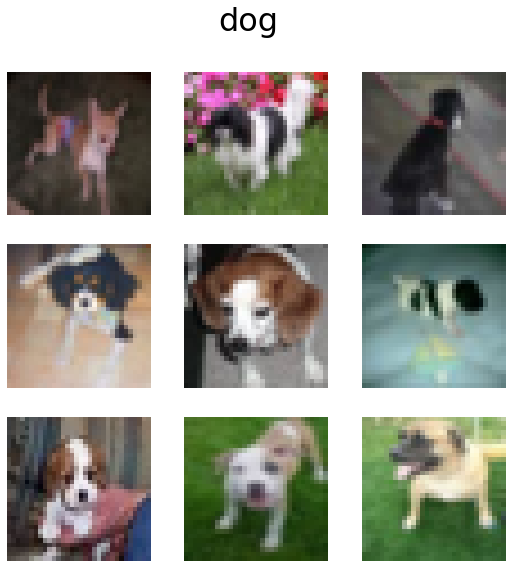}
    \caption{}
    \end{subfigure} \hfill
    \begin{subfigure}[t]{0.17\textwidth}
    \centering
    \includegraphics[width=\linewidth]{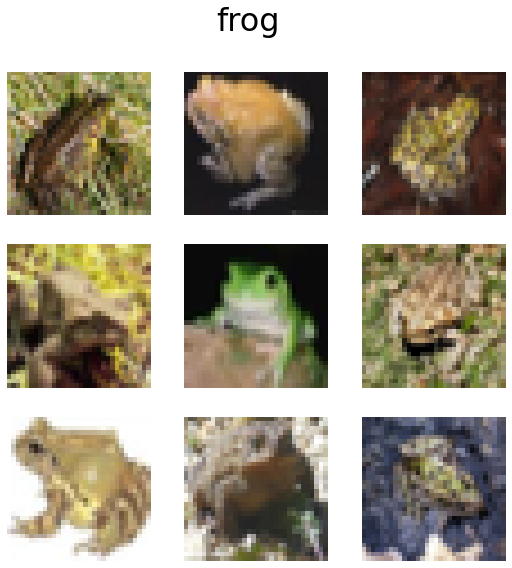}
    \caption{}
    \end{subfigure} \hfill
    \begin{subfigure}[t]{0.17\textwidth}
    \centering
    \includegraphics[width=\linewidth]{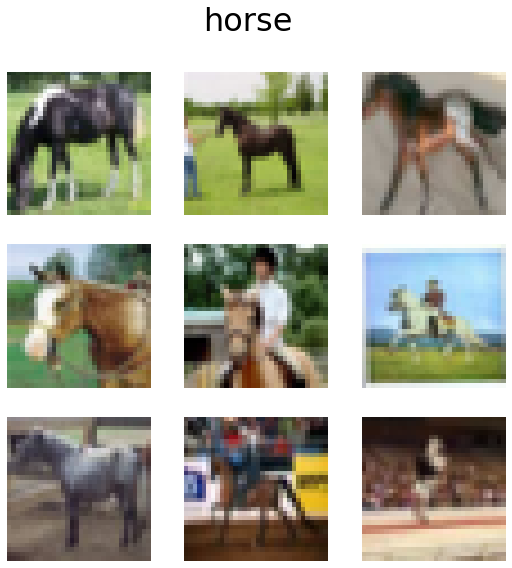}
    \caption{}
    \end{subfigure}\hfill
    \begin{subfigure}[t]{0.17\textwidth}
    \centering
    \includegraphics[width=\linewidth]{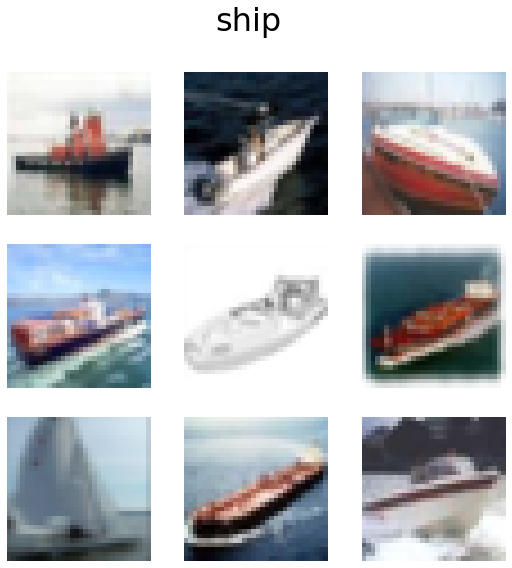}
    \caption{}
    \end{subfigure}\hfill
    \begin{subfigure}[t]{0.17\textwidth}
    \centering
    \includegraphics[width=\linewidth]{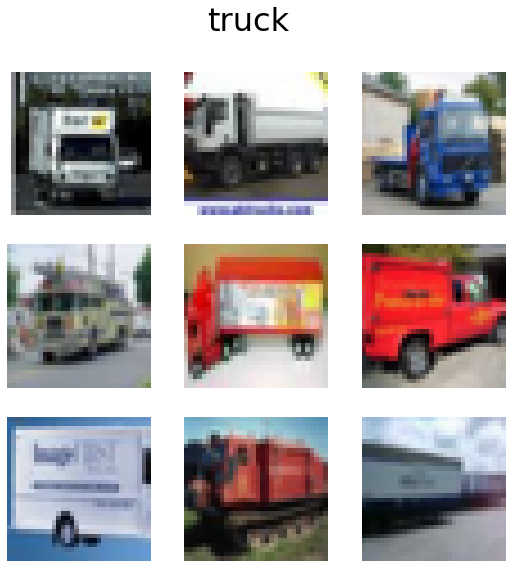}
    \caption{}
    \end{subfigure}
    \caption{Random samples from CIFAR-10 test set.}
\end{figure*}

\end{document}